\documentclass[Afour,sageh,times]{sagej}

\newcommand\BibTeX{{\rmfamily B\kern-.05em \textsc{i\kern-.025em b}\kern-.08em
T\kern-.1667em\lower.7ex\hbox{E}\kern-.125emX}}

\usepackage{amsmath,amsfonts}
\usepackage{textcomp}
\usepackage{array}
\usepackage{stfloats}
\usepackage{url}
\usepackage{verbatim}
\usepackage{graphicx}
\usepackage[ruled,linesnumbered,vlined]{algorithm2e}
\SetKw{Continue}{continue}
\SetKw{Break}{break}
\SetKw{Return}{return}
\SetKwInOut{Parameter}{Param}
\SetKwRepeat{Do}{do}{while}%
\usepackage{acro}[=v3]

\DeclareAcronym{ipp}{
  short = {IPP},
  long  = {informative path planning}
}

\DeclareAcronym{dbs}{
  short = {DBS},
  long  = {depth-wise beam search}
}

\DeclareAcronym{nbs}{
  short = {NBS},
  long  = {node-wise beam search}
}

\DeclareAcronym{tsp}{
  short = {TSP},
  long  = {traveling salesman problem} 
}

\DeclareAcronym{spt}{
  short = {SPT},
  long  = {shortest path tree}
}

\DeclareAcronym{rrt}{
  short = {RRT},
  long  = {rapidly-exploring random tree}
}

\DeclareAcronym{rrg}{
  short = {RRG},
  long  = {rapidly-exploring random graph}
}

\DeclareAcronym{rrag}{
  short = {RRAG},
  long  = {rapidly-exploring random annulus graph}
}

\DeclareAcronym{rrat}{
  short = {RRAT},
  long  = {rapidly-exploring random annulus tree}
}

\DeclareAcronym{nbv}{
  short = {NBV}, 
  long  = {next-best view}
}

\DeclareAcronym{prm}{
  short = {PRM},
  long  = {probabilistic roadmap}
}

\DeclareAcronym{dep}{
  short = {DEP},
  long  = {dynamic exploration planner}
}

\DeclareAcronym{rig}{
  short = {RIG},
  long  = {robotic information gathering}
}

\DeclareAcronym{cpu}{
  short = {CPU},
  long  = {central processing unit}
}

\DeclareAcronym{bfs}{
  short = {BFS},
  long  = {breadth-first search}
}

\DeclareAcronym{dfs}{
  short = {DFS},
  long  = {depth-first search}
}

\DeclareAcronym{np}{
  short = {NP},
  long  = {nondeterministic polynomial}
}

\DeclareAcronym{esdf}{
  short = {ESDF},
  long  = {Euclidean signed distance field} 
}

\DeclareAcronym{hssd}{
  short = {HSSD},
  long  = {Habitat Synthetic Scenes Dataset}
}

\usepackage{booktabs}
\usepackage{multirow}
\usepackage{adjustbox}
\usepackage[table]{xcolor}
\usepackage{siunitx}
\usepackage{amssymb}
\usepackage{enumitem}
\usepackage{amsthm}
\usepackage{bm}
\newtheorem{theorem}{Theorem}
\newtheorem*{theorem*}{Theorem}
\newtheorem{lemma}{Lemma}
\newtheorem*{lemma*}{Lemma}
\newtheorem{definition}{Definition}
\newtheorem*{definition*}{Definition}

\newtheorem*{observation*}{Observation}
\newtheorem{corollary}{Corollary}
\newtheorem*{corollary*}{Corollary}
\DeclareMathOperator*{\argmax}{arg\,max}

\newcommand{\clearance}{\xi}
\newcommand{\openapero}{\texttt{OpenApero}\xspace}
\newcommand{\mathset}[1]{\mathcal{#1}}

\newcommand{\method}[1]{\textnormal{\textsc{#1}}}

\definecolor{gold}{HTML}{FFD700}    
\definecolor{silver}{HTML}{C0C0C0}  
\definecolor{bronze}{HTML}{B87333}  

\usepackage{etoolbox} 

\usepackage{subcaption}
\usepackage[colorlinks,bookmarksopen,bookmarksnumbered,citecolor=red,urlcolor=red]{hyperref}
\usepackage{cleveref}

\def\volumeyear{2026}

\begin{document}

\runninghead{Qu \textit{et~al.}}


\title{An Efficient Beam Search Algorithm for Active Perception in Mobile Robotics}

\author{Kaixian Qu, Han Wang, Victor Klemm, Cesar Cadena, Marco Hutter}

\affiliation{Robotic Systems Lab, ETH Zurich}

\corrauth{Kaixian Qu, Robotic Systems Lab, ETH Zurich}

\email{kaixqu@ethz.ch}

\begin{abstract}
Active perception is a fundamental problem in autonomous robotics in which the robot must decide where to move and what to sense in order to obtain the most informative observations for accomplishing its mission. Existing approaches either solve a computationally expensive traveling salesman problem over heuristically selected informative nodes, or adopt a more efficient but overly constrained shortest path tree formulation. To address these limitations, we explore beam search algorithms as scalable alternatives. While the standard beam search provides scalability by preserving the top-$B$ paths at each depth level, it is prone to local optima and exhibits parameter sensitivity. Our first contribution is a node-wise beam search (NBS) algorithm, which maintains top-$B$ candidates per node to enable more effective exploration of the solution space. Systematic benchmarking on graphs shows that NBS consistently outperforms other baselines and maintains strong performance even at low beam widths. As a second contribution, we integrate the concept of frontiers into the path selection criterion, introducing the expected gain metric, which better balances exploration and exploitation compared to existing alternatives. Our third contribution proposes the rapidly-exploring random annulus graph (RRAG), a novel graph construction method that preserves full orientation sampling and ensures connectivity in cluttered environments through a fallback local sampling-based planner. Extensive experiments demonstrate that NBS combined with RRAG achieves the highest performance across all three representative active perception tasks, outperforming state-of-the-art algorithms by at least 20\% in one or more tasks. We further validate the approach on real robotic platforms in different scenarios.
\end{abstract}

\keywords{Active perception, beam search, autonomous mobile robots, autonomous exploration, path planning, view planning}

\maketitle

\section{Introduction}
\label{sec:Introduction}

\begin{figure*}[!t]
    \centering
    \includegraphics[width=0.99\textwidth]{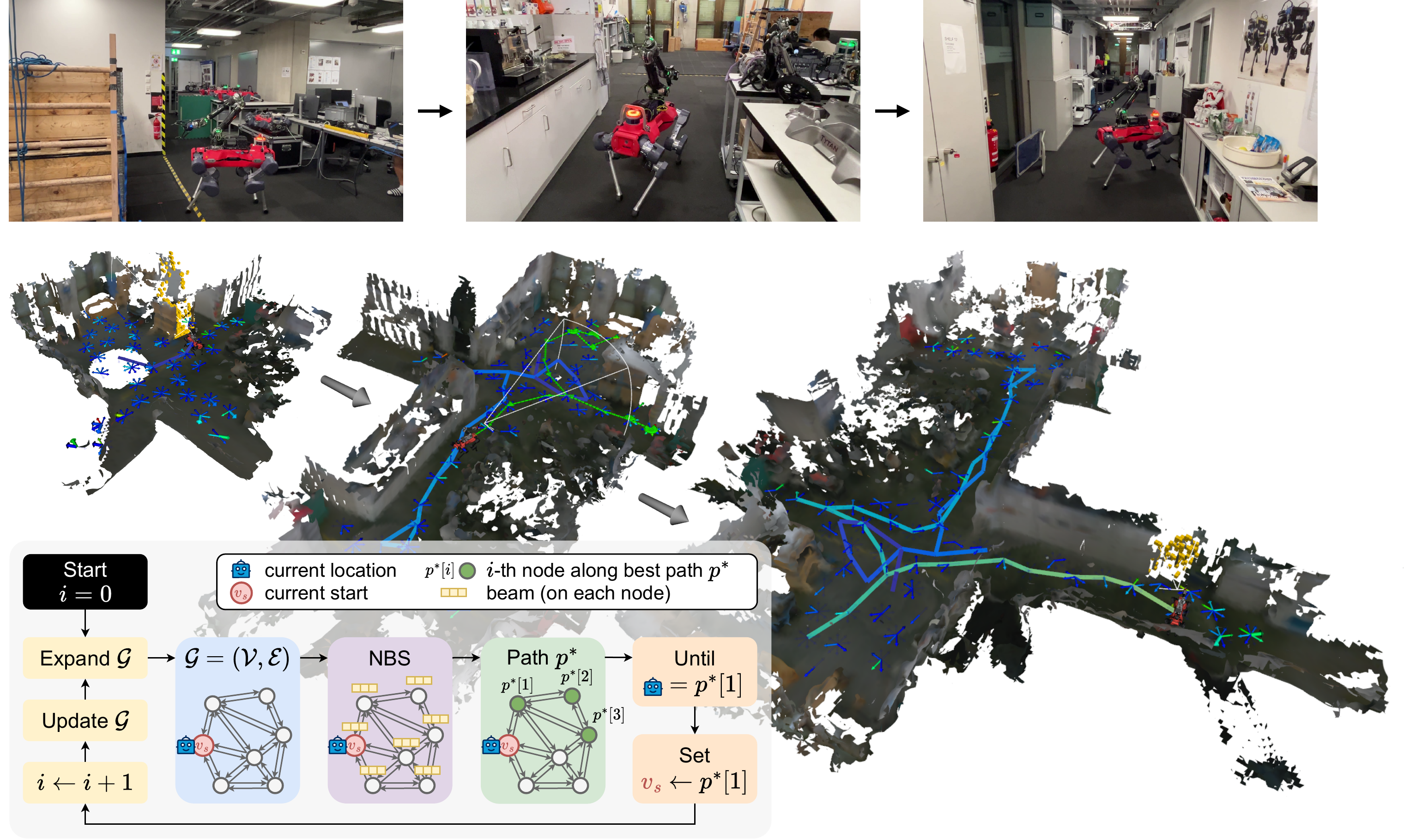}
    \caption{
    \textbf{Surface reconstruction in hardware experiments.}
    The figure illustrates the robot's navigation sequence alongside the corresponding reconstructed scene. The completed trajectory is visualized with a color gradient from dark blue to bright yellow, while the currently planned path is shown as a sequence of light green arrows. The camera's view frustum is depicted, and yellow voxels represent surface frontiers visible within it. Each node is represented by a color-coded arrow: low-gain nodes are blue, high-gain nodes are red. The bottom left panel illustrates the planning pipeline: the planner identifies the optimal path using NBS and, upon reaching the next node, updates and expands the graph with new information before computing the next-best path.
    }
    \label{fig:test_result}
\end{figure*}

Autonomous robots are increasingly deployed for tasks requiring intelligent information gathering, such as environment monitoring~\citep{singh2009efficient, cao2013multi, yu2016correlated, ghaffari2019sampling, popovic2020informative}, autonomous exploration~\citep{bircher2016receding, meng2017two, zhou2021fuel, cao2021tare}, and surface reconstruction~\citep{bircher2015structural, yoder2016autonomous, schmid2020efficient, kompis2021informed}. Unlike conventional shortest path planning, which aims to minimize traversal cost to reach a goal, these applications pose a more complex challenge: how to enable robots to autonomously plan sensor trajectories that maximize accumulated information under resource constraints such as time or energy.

Furthermore, robots often operate under partial observability, with neither the environment nor the information distribution known a priori. This motivates active perception scenarios, where the robot must simultaneously explore the environment and plan informative trajectories in the free space. Sampling-based planners such as \ac{rrt} and RRT*~\citep{kuffner2000rrt, karaman2010incremental} have been adapted to this setting~\citep{bircher2016receding, moon2022tigris}. However, tree-based structures have inherent limitations: each node stores only one incoming path, and extensive rewiring or pruning is required as the robot moves, often leading to degraded solution quality or high computational overhead.

Graph-based representations offer greater flexibility through multi-query planning capabilities. Since planning the most informative path on a graph is NP-hard~\citep{ahuja1983combinatorial}, prior work has employed simplifying heuristics, such as restricting the search to a \ac{spt}~\citep{dang2020graph, xu2021autonomous}. While computationally efficient, this severely limits the search space. Another common approach decouples node selection from routing: informative nodes are selected or sampled heuristically and then connected via shortest paths using a \ac{tsp} solver~\citep{meng2017two, arora2017randomized, zhou2021fuel}. However, this separation often leads to suboptimal paths and high computational cost.

In this work, we adopt the term \ac{ipp} to refer broadly to graph-based path planning problems that seek a path maximizing a given gain metric subject to a cost budget. Importantly, we do not address information-theoretic objectives, such as those in~\citep{singh2009efficient, lim2016adaptive, bai2016information, chen2024adaptive}, but instead treat \ac{ipp} as a path optimization problem defined over a graph.

Given the combinatorial nature of \ac{ipp}, we turn to beam search---a well-established algorithm in sequential decision-making~\citep{rush2013optimal, vijayakumar2018diverse, meister2020if}. It strikes a balance between the efficiency of greedy search and the optimality of exact inference by iteratively pruning the search space to focus only on the most promising partial solutions. We first explore the standard variant, termed \ac{dbs}, which maintains the top $B$ paths at each search depth. Although more flexible than \ac{spt}- or \ac{tsp}-based methods, this approach remains sensitive to parameter choices and prone to local optima. To address these limitations, we propose \ac{nbs}, which maintains the top $B$ paths per node. This design enhances robustness to parameter variation and enables more effective exploration of the search space.

We first benchmark path planning algorithms in two abstract graph settings: planning informative paths on a fully known graph and on a graph incrementally discovered from sensor data. Most algorithms require a path selection criterion, i.e., a metric that generalizes across both settings for ranking candidate paths. The two most commonly used metrics are path gain and path ratio (gain-to-cost ratio). To better balance exploration of new information against exploitation of known gains, we introduce the expected gain metric, which incorporates frontier~\citep{yamauchi1997frontier} information to estimate potential future gains. Through extensive evaluations, we show that \ac{nbs} consistently outperforms \ac{tsp}, \ac{spt}, and \ac{dbs} on graphs. Our proposed expected gain criterion achieves a superior balance between exploration and exploitation compared to existing metrics. We also observe that frequent replanning further improves performance, especially in online perception settings.

Transitioning from graph environments to active perception requires online and incremental graph construction. To this end, we propose \ac{rrag}, an incrementally built graph structure inspired by \ac{rrg}~\citep{karaman2011sampling}. The name annulus graph reflects its spacing rules: nodes are accepted only if sufficiently distant from existing ones, and edges are formed only when nodes are within a suitable proximity, preventing excessive node density and edge connectivity. Unlike previous incremental graph construction methods that retain a single orientation per position~\citep{schmid2020efficient, xu2021autonomous}, \ac{rrag} preserves multiple orientations, enhancing directional coverage. Moreover, while most approaches assume straight-line edges---which often fail in narrow or cluttered environments---\ac{rrag} introduces a fallback local sampling-based planner to ensure connectivity even in highly constrained spaces.

We validate our approach through comprehensive simulations on three representative active perception tasks: point collection, surface reconstruction, and volumetric exploration, across eight scenarios from the \ac{hssd} dataset~\citep{khanna2024habitat}. \ac{nbs} combined with \ac{rrag} achieves the highest performance in all three tasks, outperforming state-of-the-art methods by at least 20\% in one or more tasks. These results, complemented by real-world robotic experiments (see \Cref{fig:test_result} for an example), demonstrate the effectiveness and generalizability of our approach.
The project page is available at \url{https://efficient-beam-search.github.io/}. Alongside this work, we release \openapero (Open Active Perception Library for Robotics)\footnote{\url{https://openapero.github.io/}}, an open-source library implementing the algorithms presented in this paper and providing a modular framework for active perception research.

We summarize our main contributions as follows:
\begin{itemize}
    \item We propose \ac{nbs}, an efficient node-wise beam search algorithm that maintains a bounded set of top candidate paths per node. Extensive experiments demonstrate that NBS is robust to parameter choices and consistently outperforms state-of-the-art approaches in both graph environments and active perception tasks.
    \item We integrate frontier information into the path selection criterion by introducing the expected gain metric, which balances exploration of unknown space and exploitation of gains in free space. Empirical results show that this formulation surpasses other alternatives, such as path ratio, across a variety of graph environments.
    \item We introduce \ac{rrag}, a novel incremental graph construction method for active perception. \ac{rrag} preserves multiple orientations at each position, enabling richer path optimization. It also incorporates a fallback local sampling-based planner to ensure graph connectivity in cluttered environments.
\end{itemize}

\begin{figure*}[!t]
    \centering
    \includegraphics[width=\linewidth]{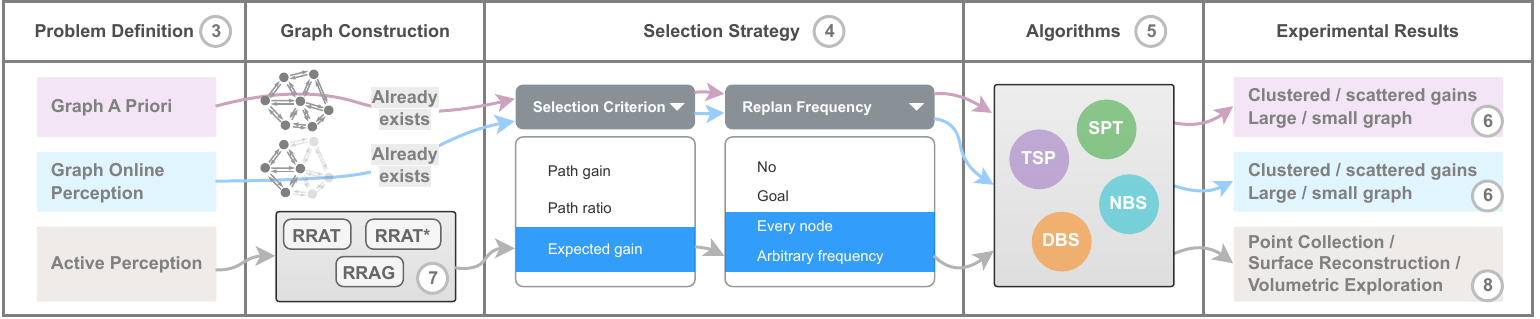}
    \caption{\textbf{Paper organization overview.} Circled numbers indicate the corresponding sections. In the graph a priori and online perception settings, the environment is represented as a graph, either fully known or partially known. In contrast, active perception necessitates graph construction from the collision-free state space. The path selection strategy encompasses both selection criteria and replanning frequencies, which are benchmarked in the graph-based settings. The best-performing combination is subsequently applied to the active perception tasks.}
    \label{fig:paper-organization}
\end{figure*}

This paper is structured to progressively build both intuition and formal understanding of the proposed approach. \Cref{sec:related-work} reviews relevant prior work in active perception, autonomous exploration, informative path planning, and sampling-based planning. \Cref{sec:problem-formulation} formulates the active perception problem across three increasingly realistic settings: planning on a fully known graph, planning on an incrementally discovered graph, and active perception with incremental environment exploration. \Cref{sec:preliminaries} introduces key components such as path selection criteria and replanning strategies, and discusses properties of optimal paths. \Cref{sec:algorithms} presents different \ac{ipp} algorithms, including the proposed \ac{nbs} algorithm, along with theoretical analysis of their computational complexity. \Cref{sec:benchmark} evaluates the proposed methods against established baselines in different graph environments. \Cref{sec:graph-construction} details \ac{rrag}, our incremental graph construction method that enables the application of \ac{ipp} algorithms to active perception. \Cref{sec:active-perception} presents empirical results from extensive simulations and real-world robotic experiments, demonstrating the effectiveness of our approach in active perception tasks. Finally, \Cref{sec:limitation} discusses limitations and future work, and \Cref{sec:conclusion} concludes the manuscript. An overview of the paper’s structure is provided in \Cref{fig:paper-organization}.

\section{Related Work}
\label{sec:related-work}

One of the most fundamental active perception tasks is autonomous exploration, in which a robot aims to explore as much of the environment as possible. A seminal approach is frontier-based exploration~\citep{yamauchi1997frontier}, where the robot repeatedly navigates to the boundaries between known free space and unknown regions to incrementally complete a map. Subsequent work has addressed the challenge of high-speed exploration on multi-rotor platforms~\citep{cieslewski2017rapid}. Building on the frontier concept, recent methods have integrated \ac{tsp}-style formulations to achieve more globally efficient exploration policies. FUEL~\citep{zhou2021fuel} incrementally maintains a frontier information structure (FIS) by computing axis-aligned bounding boxes for frontier clusters and then solving an asymmetric \ac{tsp} to generate a global tour that visits representative viewpoints. RACER~\citep{zhou2023racer} extends FUEL to multi-UAV settings and uses coverage paths to guide exploration while avoiding repeated visits to the same regions. Other works, however, argue against casting exploration directly as a \ac{tsp} problem. FAEL~\citep{huang2023fael} introduces heuristic strategies that solve a path optimization problem jointly considering movement distance, information gain, and coverage. \cite{yu2023echo} proposes ECHO, an efficient heuristic for next viewpoint determination that selects the next viewpoint via a lightweight cost function accounting for boundary awareness, environmental structure, and motion consistency, achieving both computational efficiency and improved performance.

Recently, many researchers have explored applying \ac{ipp} formulations to a wide range of active perception problems. The objective of \acl{ipp} is to compute a path that maximizes information gain while satisfying a given cost budget. Classical \ac{ipp} formulations typically assume a fully known environment represented as a weighted graph, where vertices encode information gains and edges represent traversal costs~\citep{singh2009efficient, lim2016adaptive}. A major application is environment monitoring, where Gaussian processes are commonly used to model uncertainty and guide planning to reduce terrain uncertainty~\citep{singh2009efficient, ma2018data, popovic2020informative, chen2024adaptive}. \cite{yu2016correlated} employ a mixed-integer quadratic program to solve the planning problem, while \cite{ott2024approximate} introduce an approximate sequential path optimization method that constructs trajectories segment by segment using dynamic programming. More recently, \cite{jakkala2025fully} propose a fully differentiable \ac{ipp} formulation that offers significant gains in speed and scalability. Reinforcement learning-based approaches have also attracted increasing attention~\citep{ruckin2022adaptive, cao2023catnipp}. Beyond known environments, several works extend \ac{ipp} to unknown settings for autonomous exploration and active mapping~\citep{bai2016information, ghaffari2019sampling, asgharivaskasi2022active, chen2024poam}.

To optimize information gain under a cost budget, one major line of work adopts a \ac{tsp}-based approach: sampling a set of candidate nodes and then using established \ac{tsp} solvers to determine an efficient visitation sequence. \cite{arora2017randomized} first applied \ac{tsp} to generate a route through all sampled nodes and then iteratively evaluated the effect of adding or removing nodes to further improve the solution. This method leverages mature \ac{tsp} solvers but assumes a predetermined endpoint, limiting its applicability in realistic scenarios. \cite{meng2017two} improved upon this by eliminating the dependency on a fixed endpoint and incorporating a heuristic cost function to accelerate the optimization. \cite{cao2021tare} adopted a hierarchical framework, applying \ac{tsp} both within a local planning horizon to sequence dense viewpoints and at the global level to visit sparse sub-spaces. Similar ideas appear in structural inspection and surface coverage tasks. \cite{bircher2015structural, bircher2016three} proposed generating a minimal set of viewpoints that collectively cover a structure (analogous to the Art Gallery Problem~\citep{lee1986computational}), then solved the \ac{tsp} to connect them. Rather than solving the Art Gallery Problem directly, they used multiple iterations of random sampling to refine the solution. The \ac{tsp}-based paradigm has two key limitations. First, it employs a two-stage optimization: candidate nodes are selected (or sampled) first, and only afterward is a near-optimal tour computed---sequencing and node selection are not jointly optimized under the cost constraint. Second, TSP quickly becomes computationally prohibitive as the number of candidate nodes increases, limiting its scalability.

Another mainstream approach to applying the \ac{ipp} formulation in active perception is via sampling-based planning. In this paradigm, a tree or graph is incrementally grown in the free space, with the information gain of each sampled node evaluated to ultimately extract the most informative path. Sampling-based planning has become the de facto standard for path planning in high-dimensional configuration spaces. These methods avoid the explicit construction of obstacles, instead relying on collision-checking modules to build a connectivity graph or tree. \cite{kavraki1996probabilistic} proposed the \ac{prm}, which constructs a persistent graph of the free space, enabling efficient multi-query planning. For single-query problems, the \ac{rrt} was developed~\citep{lavalle1998rapidly, lavalle2001randomized}, which utilizes a Voronoi bias to rapidly expand a tree from the start state toward the goal. Several extensions improve planning efficiency, such as RRT-Connect~\citep{kuffner2000rrt}, which grows two trees simultaneously (from the start and the goal), and expansive space trees (EST)~\citep{hsu1997path}, which bias expansion toward sparsely explored regions. To address the suboptimality of these early approaches, \cite{karaman2011sampling} introduced the \ac{rrg} and its tree-based counterpart, RRT*, both of which are asymptotically optimal. RRT* employs a rewiring mechanism to dynamically maintain the lowest-cost parent for each node while preserving a tree structure. RRT$^X$~\citep{otte2016rrtx}, another asymptotically optimal algorithm that grows from the goal toward the start, was proposed to enable real-time replanning in dynamic environments. However, these seminal sampling-based planners focus on finding the shortest path to a goal in a fully known environment and do not inherently address mapping or information-gathering objectives.

Recently, \acp{rrt} have been adapted to extend traditional motion planning algorithms to active perception tasks. \cite{bircher2016receding} adapted \acp{rrt} for exploration by sampling viewpoints to construct a tree and selecting the branch with the highest gain, with a weight affected by the cost, for execution. After each planning step, only the selected immediate branch is retained, and the tree is re-expanded. This approach has been further extended to surface inspection in unknown environments~\citep{bircher2018receding}. \cite{witting2018history} proposed sampling from pruned nodes to rapidly recover node density. \cite{schmid2020efficient} enhanced this method by rewiring the non-executed branch to the new root, enabling direct reuse of previously sampled nodes, following the connection strategy in RRT*~\citep{karaman2011sampling}. To address large and high-dimensional search spaces, \cite{kompis2021informed, moon2022tigris} introduced informed sampling strategies to sample nodes with high expected information gain. Tree-based planners inherently sacrifice optimality due to the limited number of candidate paths and the need for frequent tree updates. Additionally, shortest-path formulations typically exclude repeated nodes, whereas revisiting nodes can be beneficial.

Graph-based algorithms relax the constraint of maintaining a single path to each node by constructing a graph representation of the environment and searching for informative paths within it. Moreover, such approaches are inherently better suited for online replanning, as they do not rely on a rooted structure and thus eliminate the need for tree rewiring. \cite{xu2021autonomous, duberg2022ufoexplorer} propose incrementally constructing a graph during exploration. GBPlanner~\citep{dang2020graph} introduced a dual-graph structure for large-scale exploration, where local graphs are built within bounded regions and incrementally merged into a global graph. All these methods reduce the search space by applying shortest-path algorithms (e.g., Dijkstra) over selected nodes to generate shortest-path trees, effectively transforming the problem into a tree search embedded in an underlying graph. \cite{hollinger2014sampling} proposed RIG-roadmap and RIG-graph, in which path planning is integrated with graph construction under the assumption of prior knowledge of cost and gain. Although they incorporate a pruning strategy, they do not impose explicit bounds on graph size, which can result in poor scalability in large environments. Additionally, these methods lack fine-grained control over path selection, as graph growth and path planning are tightly coupled---i.e., path selection is performed only on the currently constructed subgraph rather than the complete graph. In this work, we focus on algorithms designed for fast online replanning, capable of handling online exploration with partial information while maintaining scalability in large environments.

\section{Problem Formulation}
\label{sec:problem-formulation}

We consider three progressively more realistic and complex formulations of this problem: (i) planning on a fixed, known graph; (ii) planning with online perception over a partially observed graph; and (iii) active perception scenarios in which the robot autonomously constructs a graph within the discovered free space to achieve its objective. We are interested in finding an algorithm that works well across all these three setups. Before delving into these formulations, we introduce the terminology used throughout the discussion.

Let $\mathset{G} = (\mathset{V}, \mathset{E})$ denote a directed, simple graph, where $\mathset{V}$ is the set of vertices and $\mathset{E} \subseteq \mathset{V} \times \mathset{V}$ is the set of directed edges, with each edge $e_{i, j} =  (v_i, v_j) \in \mathset{E}$ representing a feasible connection from vertex $v_i$ to vertex $v_j$. Throughout this manuscript, we use the terms vertex and node interchangeably.
A path $p$ is defined as a sequence of vertices $(v_0, v_1, \ldots, v_n)$ such that $(v_i, v_{i+1}) \in \mathset{E}$ for all $i$. A path is called simple if it visits each vertex at most once, i.e., all vertices in the sequence are distinct. Let $\mathset{P}$ denote the set of all paths in $\mathset{G}$.

Each edge $e_{i, j} = (v_i, v_j) \in \mathset{E}$ is assigned a positive traversal cost $c(e_{i, j}) > 0$, representing the cost of moving from vertex $v_i$ to vertex $v_j$. The cost of a path $p = (v_0, v_1, \ldots, v_n)$ is defined as the sum of the costs of its constituent edges:
\begin{equation}
    c(p) = \sum_{i = 0}^{n-1} c(e_{i, i+1}).
    \label{eq:path-cost-def}
\end{equation}
Each vertex $v \in \mathset{V}$ is associated with a non-negative information gain $g(v) \geq 0$. The total gain along a path $p = (v_0, v_1, \ldots, v_n)$ is defined by a set-based function $g_{set}$, which aggregates gains over the set of unique vertices visited:
\begin{equation}
    g(p) = g_{set}\left(\{ v_i \mid v_i \in p \}\right).
    \label{eq:path-gain-general}
\end{equation}
By definition, $g_{set}$ is invariant under repeated visits to the same vertex. With $g : \mathset{P} \to \mathbb{R}_{\geq 0}$ assigning path gains and $c : \mathset{P} \to \mathbb{R}_{> 0}$ assigning path costs, we extend the graph notation as:
\begin{equation}
    \mathset{G} = (\mathset{V}, \mathset{E}, g, c).
    \label{eq:graph-definition}
\end{equation}
An illustrative example is provided in \Cref{fig:graph_example}.

\begin{figure}[t]
    \centering
    \begin{subfigure}[b]{0.24\textwidth}
        \centering
        \includegraphics[width=\textwidth]{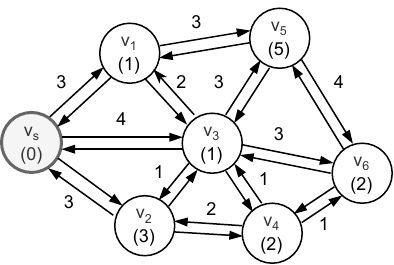}
        \caption{Constructed example graph}
        \label{fig:graph_example}
    \end{subfigure}
    \hfill
    \begin{subfigure}[b]{0.24\textwidth}
        \centering
        \includegraphics[width=\textwidth]{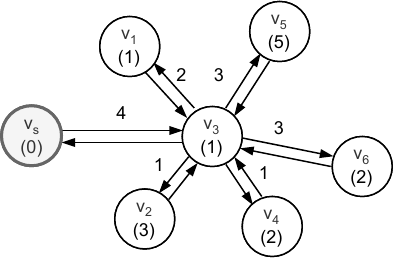}
        \caption{Example of a star graph}
        \label{fig:star_graph_example}
    \end{subfigure}
    \caption{\textbf{Examples of \ac{ipp} graphs.} Each node has an associated gain, each edge a cost, and $v_s$ denotes the start node. Revisiting nodes can be advantageous. For example, the optimal solution in (b) requires multiple traversals of~$v_3$.}
    \label{fig:illustration_graph}
\end{figure}

\subsection{Graph A Priori Formulation}
\label{sec:graph-a-priori-formulation}

We first consider the case in which the environment is fully known in advance and represented as a directed simple graph as in \Cref{eq:graph-definition}. In this formulation, the gain accumulated along a path is computed as the sum of gains from the unique vertices visited:
\begin{equation}
    g(p) = \sum_{v \in \{ v_i \mid v_i \in p \}} g(v).
    \label{eq:path-gain-sum}
\end{equation}
Note that this complies with \Cref{eq:path-gain-general}.

Let $p_k = (v_0, v_1, \dots, v_k)$ denote the sequence of vertices visited after $k$ steps. Given the fully known graph $\mathset{G} = (\mathset{V}, \mathset{E}, g, c)$, a starting vertex $v_s \in \mathset{V}$, and a cost budget $C > 0$, the objective is to determine a policy
\begin{equation}
    \pi^* : p_k \longmapsto e_{k, k+1} = (v_k, v_{k+1}) \in \mathset{E},
    \label{eq:graph-a-priori-policy}
\end{equation}
such that the resulting path $p^* = ( v_0 = v_s, v_1, \dots)$ maximizes the total accumulated gain under the budget constraint:
\begin{equation}
    p^* = \argmax_{p \in \mathset{P}} g(p) \; \text{subject to} \; c(p) \le C.
    \label{eq:optimization-objective}
\end{equation}

The initial vertex $v_0$ is fixed at $v_s$, while the final vertex is unconstrained. Paths may revisit vertices and edges, which can be advantageous (see \Cref{fig:star_graph_example}, where the robot must traverse $v_3$ multiple times to collect all gains). This formulation assumes complete a priori knowledge of the graph’s topology and associated cost and gain functions. Note that the policy $\pi^*$ may correspond to a one-shot plan computed at the beginning or a receding-horizon strategy that replans at every node. The notation here is general and accommodates all cases.

\subsection{Graph Online Perception Formulation}
\label{sec:graph-online-perception-formulation}

We next consider the case in which the graph is not known a priori and must be discovered incrementally through local perception. This online perception setting introduces significant challenges, particularly in balancing exploration and exploitation with partial information.

Let the environment be modeled as a directed simple graph $\mathset{G} = (\mathset{V}, \mathset{E}, g, c)$, where the path gain is defined as in \Cref{eq:path-gain-sum}. 
The robot perceives its surroundings within a fixed radius $\rho > 0$, assuming no occlusions. For any vertex $v \in \mathset{V}$, define its observable neighborhood as
\begin{equation}
    \mathset{N}_\rho(v) = \left\{ v_i \in \mathset{V} \mid \|v_i - v\| \le \rho \right\},
\end{equation}
where $\|v_i - v\|$ denotes the distance between $v$ and $v_i$. Similarly, we say an edge $(v_i, v_j) \in \mathset{E}$ is observable from $v$ if $v_i, v_j \in \mathset{N}_\rho(v)$.

Let $p_k = (v_0, v_1, \dots, v_k)$ denote the sequence of vertices visited after $k$ steps, with accumulated cost satisfying the budget constraint $c(p_k) \le C$ for all $k$. The cumulative discovered subgraph at step $k$ is denoted by
\begin{equation}
    \mathset{G}_k = (\mathset{V}_k, \mathset{E}_k, g_k, c_k),
\end{equation}
where
\begin{equation}
    \mathset{V}_k = \bigcup_{i=0}^{k} \mathset{N}_\rho(v_i), \;
    \mathset{E}_k = \left\{ (v_i, v_j) \in \mathset{E} \mid v_i, v_j \in \mathset{V}_k \right\},
\end{equation}
and $g_k$, $c_k$ are the restrictions of the global functions $g$ and $c$ to the current subgraph.

Given a perception radius $\rho > 0$, cost budget $C > 0$, and initial vertex $v_s \in \mathset{V}$, the objective is to determine a policy
\begin{equation}
    \pi^* : (\mathset{G}_k, p_k) \longmapsto e_{k, k+1} = (v_k, v_{k+1}) \in \mathset{E}_k,
    \label{eq:graph-online-perception-policy}
\end{equation}
that selects the next edge based on the currently discovered subgraph and executed path. The resulting path $p^*$ is chosen to maximize the total accumulated gain under the budget constraint, same as in~\Cref{eq:optimization-objective}.

This formulation reflects a setting in which the policy $\pi^*$ is inherently adaptive, continually updated with new local information. Note that when the perception radius $\rho$ exceeds the graph’s diameter, the entire graph $\mathset{G}$ becomes observable at the initial step ($k = 0$), effectively reducing this problem to the fully-known a priori case described in \Cref{sec:graph-a-priori-formulation}.

\subsection{Active Perception Formulation}
\label{sec:active-perception-formulation}

We generalize the online perception framework to a more realistic active perception setting, where a robot operates autonomously in a continuous, bounded 3D workspace using a depth sensor with limited range, completing tasks without prior knowledge of the environment. In contrast to prior formulations that assume a directed graph as the environment, in this formulation, the robot incrementally constructs a graph as an internal abstraction solely to support planning and navigation.

Let $\mathset{X} \subset \mathbb{R}^n$ denote the bounded state space and $\mathset{X}_{\mathrm{free}} \subseteq \mathset{X}$ its collision-free subset. The environment or world is represented as $\mathset{W}$, and the robot is equipped with a sensor model $\mathset{S}$ (with depth up to $\rho$). In this work, we consider an RGB-D camera, which typically has a limited field of view. For any configuration $x \in \mathset{X}_{\mathrm{free}}$, the observable portion of the environment is given by $\mathset{S}(x; \mathset{W}) \subseteq \mathset{W}$. 

The robot begins at an initial configuration $x_0 \in \mathset{X}_{\mathrm{free}}$ and executes a sequence of velocity commands under a velocity-controlled motion model. At discrete timestep $k$, the control input $u_k \in \mathset{U}$ specifies linear and angular velocities applied over a fixed duration $\Delta t$. The resulting state transition is
\begin{equation}
    x_{k+1} = x_k \oplus (u_k \cdot \Delta t),
\end{equation}
where $\oplus$ adds the integrated translation to the position and compounds the integrated rotation with the current orientation. At time step $k$, the trajectory $\tau_k$ is denoted by
\begin{equation}
    \tau_k = (x_0, u_0, x_1, u_1, \ldots, x_{k-1}, u_{k-1}, x_k).
\end{equation}
In this work, we use the term trajectory $\tau$ to denote a sequence of states and control actions generated under the robot’s velocity model with fixed timestep $\Delta t$, whereas a path $p$ refers to an abstract sequence of vertices on a graph. Each action incurs a traversal cost $c(x_i, u_i)$, leading to a cumulative trajectory cost:
\begin{equation}
    c(\tau_k) = \sum_{i=0}^{k-1} c(x_i, u_i).
\end{equation}
If the cost is time, then the cost is simply $c(x_i, u_i) = \Delta t$ and the trajectory cost is $k \Delta t$. At each step $k$, the robot integrates all previous observations for an estimated environment:
\begin{equation}
    \mathset{W}_k = \bigcup_{i=0}^{k} \mathset{S}(x_i; \mathset{W}),
\end{equation}
from which an estimated collision-free space ${\mathset{X}}_{\mathrm{free}}^k \subseteq \mathset{X}$ is inferred.
The true task-dependent objective $f(\tau; \mathset{W})$ is not available a priori; instead, the robot approximates it using an information gain function $g(\cdot)$ conditioned on $\mathset{W}_k$:
\begin{equation}
    g(\tau; \mathset{W}_k) = g_{set}(\{x_i \mid x_i \in \tau\}; \mathset{W}_k).
    \label{eq:active-perception-approximation}
\end{equation}
This function is sometimes instantiated as a sum over state-specific gains:
\begin{equation}
    g(\tau; \mathset{W}_k) = \sum_{x \in \{x_i \mid x_i \in \tau\}} g(x; \mathset{W}_k),
    \label{eq:active-perception-gain-sum}
\end{equation}
where $g(x; \mathset{W}_k) \ge 0$ denotes the estimated information gain at $x$ given the current map.

Given the sensor model $\mathset{S}$, a cost budget $C > 0$, and initial configuration $x_0$, the objective is to determine a policy
\begin{equation}
    \pi^* : (\mathset{W}_k, \mathset{X}_{\mathrm{free}}^k, \tau_k) \longmapsto u_k \in \mathset{U},
    \label{eq:active-perception-policy}
\end{equation}
that selects the next velocity informed by the current map and trajectory history. The resulting trajectory
$\tau^* = (x_0, u_0, x_1, u_1, \ldots)$
is chosen to maximize the true task-dependent objective under the cost constraint:
\begin{equation}
    \tau^* = \argmax_{\tau} f(\tau; \mathset{W}) \; \text{subject to} \; c(\tau) \le C.
    \label{eq:active-perception-objectives}
\end{equation}

In this paper, we focus exclusively on graph construction and navigation within the graph. This means that we grow the graph (both nodes and edges) within $\mathset{X}_{\mathrm{free}}^k$ and apply a velocity that helps the robot navigate on the graph. This setup is analogous to that in \Cref{sec:graph-online-perception-formulation}, wherein the trajectory cost $c(\tau)$ is computed as the corresponding path cost $c(p)$, and the trajectory gain $g(\tau)$ (\Cref{eq:active-perception-approximation}) is approximated by the path gain $g(p)$ of the traversed vertices.  Planning proceeds in discrete time, while control actions $u_k$ are selected from a continuous control space.

\section{Preliminaries}
\label{sec:preliminaries}

This section discusses the fundamental design choices of selection criteria and replanning strategies that underlie all subsequent algorithms for informative path planning. The choice of selection criterion is pivotal. In the graph a priori formulation (\Cref{sec:graph-a-priori-formulation}), the planner focuses solely on exploitation---maximizing gain within the cost constraint. In contrast, the graph online perception (\Cref{sec:graph-online-perception-formulation}) and active perception formulation (\Cref{sec:active-perception-formulation}) require balancing exploration (discovering new informative areas) and exploitation (maximizing collected gain). This work proposes a unified selection criterion designed to be effective across these settings. In addition, we investigate how the frequency of replanning influences performance, ranging from replanning at arbitrary frequency to single-shot planning (without replanning). Finally, we analyze the structure of optimal paths under symmetric graphs, offering insights for algorithmic design.

\subsection{Selection Criteria}
\label{sec:selection-criteria}

\begin{figure*}[!t]
    \centering
    \begin{subfigure}{0.5\columnwidth}
        \centering
        \includegraphics[width=\linewidth]{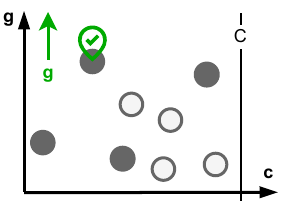}
        \caption{Path gain $g$}
        \label{fig:path_gain_illustration}
    \end{subfigure}
    \qquad
    \begin{subfigure}{0.5\columnwidth}
        \centering
        \includegraphics[width=\linewidth]{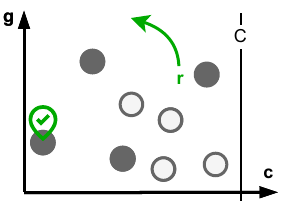}
        \caption{Path ratio $r$}
        \label{fig:path_ratio_illustration}
    \end{subfigure}
    \qquad
    \begin{subfigure}{0.5\columnwidth}
        \centering
        \includegraphics[width=\linewidth]{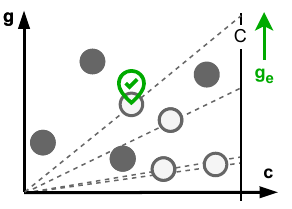}
        \caption{Expected gain $g_e$}
        \label{fig:expected_gain_illustration}
    \end{subfigure}
    \caption{\textbf{The illustration of path selection criteria.}
    Each path is represented by a circle: open circles denote paths leading to frontier nodes $\mathset{P}_{\mathset{F}}$, while solid circles represent non-frontier paths. The $x$-axis represents cost $c$, and the $y$-axis represents gain $g$. The optimal path under each criterion is highlighted with a green check mark. For the path gain criterion, the best path corresponds to the one with the highest $g$-value. Under the path ratio criterion, the optimal path is the one with the highest slope. For the expected gain criterion, the best path is the one whose extrapolated $g$-value is maximal.}
    \label{fig:objective_illustration}
\end{figure*}

The selection criterion defines a quality function of a path $ q(p) $, and the search algorithm returns the path with the highest quality. We consider three distinct path selection criteria, as illustrated in \Cref{fig:objective_illustration}: (i) path gain, (ii) gain-to-cost ratio (path ratio), and (iii) expected gain. In this work, we explicitly differentiate these selection criteria from the principal objective of informative path planning, which is formalized in \Cref{eq:path-gain-general}. The introduction of these criteria is motivated by the exploration–exploitation trade-off, particularly in settings where the graph structure or environmental model is partially or entirely unknown. In such cases, selecting paths based solely on the principal objective (path gain) may lead to suboptimal performance.

\subsubsection{Path Gain}

The most straightforward quality function is the path gain $g(p)$ as in \Cref{eq:path-gain-general}, aligning directly with the problem formulation.

\subsubsection{Path Ratio / Gain-to-Cost Ratio}

Since the full environment may not be known a priori, a commonly used heuristic is to optimize for path efficiency, defined as the gain collected per unit cost:
\begin{equation}
    r(p) = \frac{g(p)}{c(p)}.
\end{equation}
Note that this quality function implicitly assumes that the current gain-to-cost ratio provides a reliable estimate of the future gain over the entire budget, which can be formalized as:
\begin{equation}
    \argmax_{p \in \mathset{P}} r(p) = \argmax_{p \in \mathset{P}} r(p) \cdot C,
\end{equation}
where $r(p) \cdot C$ estimates the total gain achievable if the gain-to-cost efficiency remains constant.

This ratio has been proposed as the principal objective in several informative path planning frameworks~\citep{popovic2017online, schmid2020efficient, xu2021autonomous}. However, it does not constitute a well-defined optimization problem. Once the path with the highest ratio is identified, the robot has no incentive to continue beyond that path, as doing so would reduce the overall ratio. In contrast, practical scenarios typically impose a maximum cost budget, which should ideally be fully utilized for collecting maximum gains. 

\subsubsection{Expected Gain}
To explicitly account for exploration, we propose the expected gain, which incorporates frontier information:
\begin{equation} 
    g_e(p) =
    \begin{cases}
        r(p) \cdot C, & \text{if } p \in \mathset{P}_{\mathset{F}} \\
        g(p), & \text{otherwise}
    \end{cases}
    \label{eq:expected-gain}
\end{equation}
Here, $\mathset{F}$ represents the set of frontier nodes, i.e., nodes that lead to graph expansion, and $\mathset{P}_{\mathset{F}}$ is the set of paths ending at frontier nodes. For frontier paths, we approximate the potential future gain by assuming that after the current path is finished, the robot can continue from the frontier node into the newly discovered region, collecting gains with the current ratio $r(p)$ over the remaining cost budget. For all other paths, the expected gain defaults to the path gain $g(p)$. This quality function balances exploitation and exploration: it favors exploitation in fully explored graphs while incentivizing exploration when frontier regions remain. Note that maximizing \Cref{eq:expected-gain} is equivalent to
\begin{equation}
\max_{p \in \mathset{P}} g_e(p) = 
    \max \left\{ 
        \max_{p \in \mathset{P}_{\mathset{F}}} r(p) C,\ 
        \max_{p \notin \mathset{P}_{\mathset{F}}} g(p)
    \right\}.
\end{equation}

A natural question that arises is whether the extrapolation in \Cref{eq:expected-gain} should use the maximum gain-to-cost ratio across all frontier paths instead of individual path-specific ratios. The following theorem establishes that both approaches yield the same path selection. Consequently, the optimal path can be determined without precomputing the maximum ratio, reducing computational overhead while preserving optimality.
\begin{theorem}
\label{thm:ratio_equivalence}
Let $\tilde{\mathset{P}}$ be a set of candidate paths, and let $C > 0$ denote the available budget. Define $r^*$ to be the maximal gain-to-cost ratio among the candidate paths. Then
\begin{equation}
    \argmax_{p \in \tilde{\mathset{P}}} r(p)\,C
    = \argmax_{p \in \tilde{\mathset{P}}} g(p) + r^* \left(C - c(p) \right).
\end{equation}
\end{theorem}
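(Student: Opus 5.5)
The plan is to show that both sides of the identity equal the same set, namely the set of ratio-maximizers $\tilde{\mathset{P}}^* = \{p \in \tilde{\mathset{P}} \mid r(p) = r^*\}$. Since $C > 0$ is a constant, the left-hand side is $\argmax_{p} r(p)$, which is $\tilde{\mathset{P}}^*$ by the definition of $r^*$. I will assume $\tilde{\mathset{P}}$ is finite, or at least that the maximum ratio is attained, so that $r^*$ is well defined. Every candidate path has $c(p) > 0$ because edge costs are positive, so $r(p)$ is well defined; this presumes each candidate path has at least one edge, which I will state as a standing assumption.

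For the right-hand side, I will define $h(p) = g(p) + r^*(C - c(p))$ and rewrite it using $g(p) = r(p)\,c(p)$:
\begin{equation*}
    h(p) = r^* C - c(p)\bigl(r^* - r(p)\bigr).
\end{equation*}
Because $r^* \ge r(p)$ for every candidate and $c(p) > 0$, we get $h(p) \le r^* C$. Equality holds exactly when $r(p) = r^*$. The bound $r^*C$ is attained, since $\tilde{\mathset{P}}^*$ is nonempty. Hence $\max_p h(p) = r^* C$, and the argmax of $h$ is precisely $\tilde{\mathset{P}}^*$. This matches the left-hand side, which proves the set equality.

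The only delicate point is the strictness in the equality case. It requires $c(p) > 0$ strictly, which we get from positive edge costs. Without it, a zero-cost path would have $h(p) = r^*C$ regardless of its ratio. I would remark on this, and also note that the theorem holds verbatim when $C$ is replaced by any positive remaining budget. As a side observation, the common maximal value $r^*C$ agrees on both sides, which justifies using path-specific ratios in \Cref{eq:expected-gain} without precomputing $r^*$.
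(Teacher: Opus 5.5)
Your proposal is correct and follows essentially the same route as the paper: both show that each objective is bounded above by $r^* C$, with equality exactly when $r(p) = r^*$, via the rewriting $g(p) - r^* c(p) = c(p)\,(r(p) - r^*) \le 0$. The paper's proof uses your standing assumptions (that $r^*$ is attained and that $c(p) > 0$) without stating them.
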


\begin{proof}
We prove this theorem by showing the path that achieves the highest ratio $r^*$ maximizes both path quality functions. Define, for any $p \in \tilde{\mathset{P}}$,
\begin{equation}
    q_1(p) = r(p)\,C, \quad
    q_2(p) = g(p) + r^* \left(C - c(p) \right).
\end{equation}
Let $p^{*}$ maximize $r$, meaning $r(p^*) = r^*$.
Since $C > 0$ is constant,
\begin{equation}
    q_1(p) \leq r^* C,
\end{equation}
with equality if and only if $r(p) = r^*$. 
Rewriting $q_2(p)$,
\begin{equation}
    q_2(p) = r^* C + \left( g(p) - r^* c(p) \right).
\end{equation}
The term $r^* C$ is constant, so maximizing $q_2(p)$ is equivalent to maximizing $g(p) - r^* c(p)$. Observe that
\begin{equation}
    g(p) - r^* c(p) = c(p) \left( r(p) - r^* \right) \leq 0,
\end{equation}
with equality if and only if $r(p) = r^*$. Thus, $q_2(p) \leq r^* C$, with equality if and only if $r(p) = r^*$.
Therefore, both objectives are maximized by the same set of paths.\hfill \qedsymbol
\end{proof}

\begin{corollary}
Let $\mathset{P}_{\mathset{F}}$ denote the set of frontier paths, and let $r_{\mathset{F}}^* = \max_{p \in \mathset{P}_{\mathset{F}}} r(p)$ be the highest gain-to-cost ratio among these paths. Then,
\begin{equation}
    \argmax_{p \in \mathset{P}_{\mathset{F}}} r(p)\,C 
    = \argmax_{p \in \mathset{P}_{\mathset{F}}}g(p) + r_{\mathset{F}}^* \left( C - c(p) \right).
\end{equation}
This shows that extrapolating each frontier path using its own gain-to-cost ratio yields the same optimal path as using the maximum frontier ratio $r_{\mathset{F}}^*$.

Additionally, the maximum expected gain $\max_{p \in \mathset{P}} g_e(p)$ can be expressed as:
\begin{equation}
    \max \left\{ 
        \max_{p \in \mathset{P}_{\mathset{F}}} g(p) + r_{\mathset{F}}^* \left(C - c(p) \right),\ 
        \max_{p \notin \mathset{P}_{\mathset{F}}} g(p)
    \right\},
\end{equation}
or equivalently,
\begin{equation} 
    \max_{p \in \mathset{P}} g(p) + \mathbf{1}_{\mathset{P}_{\mathset{F}}}(p) \cdot r_{\mathset{F}}^* \left( C - c(p) \right),
    \label{eq:expected-gain-another}
\end{equation}
where $\mathbf{1}_{\mathset{P}_{\mathset{F}}}(p)$ is the indicator function that equals 1 if $p \in \mathset{P}_{\mathset{F}}$ and 0 otherwise.
\end{corollary}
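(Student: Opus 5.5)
The first claim is \Cref{thm:ratio_equivalence} specialized to the candidate set $\tilde{\mathset{P}} = \mathset{P}_{\mathset{F}}$. With this choice, the maximal ratio $r^*$ in the theorem is exactly $r_{\mathset{F}}^*$, so the displayed equality of argmax sets follows at once. The only check is that the theorem's hypotheses hold. The budget satisfies $C>0$. Every path has $c(p)>0$ by positivity of edge costs, so $r(p)$ is well defined; this assumes nontrivial paths, and I would remark that the trivial path is excluded or treated separately. Finally, $r_{\mathset{F}}^*$ must be attained. This is automatic if $\mathset{P}_{\mathset{F}}$ is finite, which is the case for the candidate sets actually enumerated by a search algorithm under the budget. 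I would state this attainment assumption explicitly, since revisits make $\mathset{P}$ infinite in general.

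For the second claim I would start from the decomposition already noted after \Cref{eq:expected-gain}: $\max_{p} g_e(p)$ equals the larger of $\max_{p\in\mathset{P}_{\mathset{F}}} r(p)C$ and $\max_{p\notin\mathset{P}_{\mathset{F}}} g(p)$. The task is to replace the first inner maximum. The argmax equality above says the two frontier objectives have the same maximizers. I also need the maximum values to agree, and the proof of \Cref{thm:ratio_equivalence} gives this. There, both $q_1$ and $q_2$ are bounded by $r^*C$, with equality attained at $p^*$. Hence $\max_{p\in\mathset{P}_{\mathset{F}}} r(p)C = r_{\mathset{F}}^*C = \max_{p\in\mathset{P}_{\mathset{F}}} g(p)+r_{\mathset{F}}^*(C-c(p))$. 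Substituting this into the decomposition gives the first displayed form.

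For \Cref{eq:expected-gain-another}, define $h(p) = g(p) + \mathbf{1}_{\mathset{P}_{\mathset{F}}}(p)\, r_{\mathset{F}}^*(C-c(p))$. Splitting the maximum over $\mathset{P}$ into the two pieces $\mathset{P}_{\mathset{F}}$ and its complement, $h$ reduces to the extrapolated form on the first piece and to $g$ on the second. This reproduces the previous expression.

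The main obstacle is the distinction between equality of argmax sets and equality of maximum values. The corollary's statement only gives the former, while the value identity rests on the bound $q_2(p)\le r^*C$ from the proof, together with attainment of $r_{\mathset{F}}^*$. If $\mathset{P}_{\mathset{F}}$ is empty, the frontier term is simply absent and both expressions reduce to $\max_{p\notin\mathset{P}_{\mathset{F}}} g(p)$.
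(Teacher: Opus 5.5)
Your proposal is correct and matches the paper's argument, which is left implicit: the corollary is \Cref{thm:ratio_equivalence} specialized to $\tilde{\mathset{P}} = \mathset{P}_{\mathset{F}}$, and you rightly take the value identity needed for the second part from the bounds $q_1(p) \le r^* C$ and $q_2(p) \le r^* C$ (equality iff $r(p) = r^*$) in that theorem's proof, not from the argmax equality alone. Your attainment caveat is harmless but unnecessary for a finite graph with positive edge costs: $g$ takes finitely many values and only finitely many paths have cost below any bound, so $r_{\mathset{F}}^*$ is always attained whenever $\mathset{P}_{\mathset{F}}$ is nonempty.
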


Notably, our formulation \Cref{eq:expected-gain} allows the optimal path to be determined without explicitly computing $r_{\mathset{F}}^*$ in advance, thus reducing computational overhead.

\subsubsection{Complexity Class of Selection Criteria}

Consider the case where the path gain is defined as the sum of node gains along a path. Under this formulation, maximizing the path gain reduces to the classical longest path problem in a weighted graph, which is known to be NP-hard~\citep{hardgrave1962relation}. In contrast, maximizing the gain-to-cost ratio corresponds to the maximum ratio path problem~\citep{radzik2013fractional}. \cite{ahuja1983combinatorial} showed that the longest path problem can be reduced to a special case of the maximum ratio path problem. Consequently, any algorithm that solves the maximum ratio path problem can also solve the longest path problem, implying that the former is NP-hard. As the expected gain integrates both path gain and path ratio, its complexity subsumes theirs. Therefore, maximizing the expected gain is also NP-hard.

It is important to note that while the maximum ratio cycle problem is solvable in polynomial time~\citep{dantzig1966finding, lawler1972optimal}, this variant does not accommodate start-vertex constraints and is therefore not directly applicable to most informative path planning scenarios. That problem can be cast as detecting negative cycles~\citep{lawler1972optimal, cherkassky1999negative} or solved using linear programming~\citep{dantzig1966finding}, but it does not generalize to the path-based formulations considered in this work.

\subsection{Replanning Strategies}

When an underlying graph structure is available, the robot navigates by planning and executing paths over this graph. It may (i) follow a precomputed path without any replanning, (ii) replan only upon reaching the terminal node of the current path, or (iii) replan at each visited node along the trajectory, as illustrated in \Cref{eq:graph-a-priori-policy} and \Cref{eq:graph-online-perception-policy}. In active perception scenarios, where planning and navigation are grounded in an internally constructed graph, the same three replanning strategies are applicable. Moreover, since the robot operates under velocity control, replanning can, in principle, occur at any time step. For example, the robot may execute a partial trajectory (e.g., five steps along the current path) before reevaluating its policy at an intermediate configuration. The frequency of replanning introduces a trade-off between computational cost and responsiveness to new sensory inputs.

\subsubsection{No Replanning}
At the start node, the planner computes an optimal path 
\begin{equation}
p^* = (v_0, v_1, \ldots, v_{n-1}, v_n).
\end{equation}
The robot then executes this plan to completion. This approach is computationally efficient but does not incorporate newly acquired information during execution and may underutilize the available cost budget.

\subsubsection{Replanning at the Goal}
Once the robot completes the current optimal path $p^*$, the remaining cost budget is updated, and the gain from previously visited nodes is zeroed:
\begin{equation}
    C \gets C - c(p^*), \quad g(v_i) \gets 0, i = 1, 2, \ldots, n.
\end{equation}
The planner then computes a new optimal path based on the updated budget and updated graph. This strategy is typical in frontier-based exploration, where the robot selects the most promising frontier and replans upon reaching it~\citep{yamauchi1997frontier, xu2021autonomous, gervet2023navigating}.

\subsubsection{Replanning at Every Node}
Instead of executing the entire path, the robot moves to the next node $v_1$ along edge $e_{0, 1}$ and then replans. The cost budget is updated, and the gain from the visited node is zeroed:
\begin{equation}
    C \gets C - c(e_{0, 1}), \quad g(v_1) \gets 0.
\end{equation}
The planner then computes a new optimal path from $v_1$. This strategy enhances adaptability to new information but incurs a higher computational cost.

\subsubsection{Replanning at Arbitrary Frequency}

In active perception settings, the robot constructs and updates its internal graph representation online, based on sensory observations. At a given configuration, the planner computes an optimal path $p^* = (v_0, v_1,\ldots, v_n)$, or 
\begin{equation}
    p^* = (x_0 = v_0, x_1, x_2, \ldots, x_N),
\end{equation}
if we consider all the configurations along the edge. The time difference between consecutive configurations $x_{i}$ and $x_{i+1}$ is $\Delta t$. Suppose the robot replans every $k\Delta t$; it would then replan upon reaching configuration $x_k$. The cost budget is updated, and the gain from the traversed configurations is zeroed:
\begin{equation}
    C \gets C - \sum_{i=0}^{k-1} c(x_i, u_i), \quad g(x_i) \gets 0, i = 1,2,\ldots, k.
\end{equation}
This strategy enables replanning at arbitrary time steps, unconstrained by discrete node transitions. Such flexibility allows for the highest replanning frequency and, consequently, the most rapid adaptation to newly acquired information, albeit at the expense of maximal computational cost.

\subsection{On the Structure of Optimal Paths}

A key challenge in informative path planning is that the optimal path may revisit vertices (see \Cref{fig:star_graph_example}), leading to an infinite number of candidate paths. However, we show that it suffices to search only for paths that do not revisit edges in a symmetric directed graph, significantly reducing the search space.

\begin{theorem}
Let $\mathset{G} = (\mathset{V}, \mathset{E}, g, c)$ be a symmetric directed graph, meaning that for every $e_{i, j} = (v_i, v_j) \in \mathset{E} $, the reverse edge $e_{j,i} = (v_j, v_i) \in \mathset{E} $ also exists and satisfies $ c(e_{i, j}) = c(e_{j,i}) > 0 $. Suppose the gain of a path is determined solely by the set of nodes it visits (\Cref{eq:path-gain-general}),  and its cost is defined as the sum of edge costs along the path (\Cref{eq:path-cost-def}). Then, for any path that traverses an edge more than once, there exists an alternative path that achieves the same gain with strictly lower cost and does not revisit any edge.
\end{theorem}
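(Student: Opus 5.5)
The plan is a shortcut-and-reverse surgery on the walk. Each application removes a doubly traversed edge, keeps the visited vertex set and the start vertex unchanged, and strictly lowers the cost. We then iterate by induction on the number of edges in the walk.

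\textbf{Setup.} Let $p=(v_0,\dots,v_n)$ be a path (walk) in which some directed edge $e_{a,b}=(v_a,v_b)$ occurs at least twice. Choose two occurrences at positions $s<t$, so that $(v_s,v_{s+1})=(v_t,v_{t+1})=(v_a,v_b)$. Decompose
\begin{equation*}
p = P_1 \cdot (v_a,v_b) \cdot Q \cdot (v_a,v_b) \cdot P_2 ,
\end{equation*}
where:
\begin{itemize}
\item $P_1=(v_0,\dots,v_s)$ ends at $v_a$;
\item $Q=(v_{s+1},\dots,v_t)$ is a walk from $v_b$ to $v_a$;
\item $P_2=(v_{t+1},\dots,v_n)$ starts at $v_b$.
\end{itemize}
Because $\mathset{G}$ is simple, it has no self-loops, so $v_a\neq v_b$ and $Q$ contains at least one edge.

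\textbf{Key step.} Define the new walk
\begin{equation*}
p' = P_1 \cdot \overline{Q} \cdot P_2 ,
\end{equation*}
where $\overline{Q}$ is $Q$ traversed backwards, running from $v_a$ to $v_b$. The symmetry assumption guarantees that every reversed edge $e_{j,i}$ exists in $\mathset{E}$ with $c(e_{j,i})=c(e_{i,j})$. Consequently $p'$ is a valid path that still starts at $v_0=v_s$. This holds even when $s=0$, since then $P_1=(v_0)$. The new walk has three properties:
\begin{itemize}
\item \emph{Same vertices.} It visits exactly the same set of vertices as $p$, because $v_a$ and $v_b$ both remain as endpoints of $\overline{Q}$. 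By \Cref{eq:path-gain-general}, $g(p')=g(p)$.
\item \emph{Lower cost.} By \Cref{eq:path-cost-def} and symmetric costs, $c(p')=c(p)-2c(e_{a,b})<c(p)$.
\item \emph{Fewer edges.} It has exactly two fewer edges than $p$.
\end{itemize}

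\textbf{Iteration.} Reversing $Q$ may create new repeated directed edges, for example if $Q$ already contained both $e_{i,j}$ and $e_{j,i}$. For this reason I would not try to argue that a single surgery suffices. Instead, induct on the number of edges: while the current walk repeats some directed edge, apply the surgery. Each application strictly decreases a nonnegative integer, so the process terminates. The final walk repeats no directed edge, has the same gain as the original, and has strictly smaller cost, since at least one surgery was performed.

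\textbf{Expected obstacle.} The main subtlety is this possible creation of new repetitions and the need for termination. The edge-count induction handles it cleanly. A second point to state explicitly is the meaning of ``revisit an edge'': I would take it to mean the same directed edge. The alternative reading, where $e_{i,j}$ and $e_{j,i}$ count as the same edge, is false in general; the star graph of \Cref{fig:star_graph_example} needs back-and-forth traversals of $v_3$'s edges.
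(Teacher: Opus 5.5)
Your proposal is correct and is essentially the paper's own argument. You split the walk at two occurrences of the repeated directed edge, delete both occurrences, reverse the intervening subwalk using the symmetric edges and costs, and iterate until no directed edge repeats; termination holds because each surgery removes two edge traversals, and the paper likewise notes that reversal can create new repetitions.

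Your extra remarks are sound clarifications that the paper leaves implicit: the start vertex is preserved, the middle subwalk is nonempty because a simple graph has no self-loops, and ``revisit'' must mean the same directed edge, as the star graph shows. One small fix: avoid writing $v_0=v_s$, since you also use $s$ as your occurrence index.
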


\begin{proof}
Let $p$ be a path that traverses the edge $e_{m,n} = (v_m, v_n)$ more than once. Then $p$ can be decomposed as
\begin{equation}
    p = p_{a} + e_{m,n} +  p_{b} + e_{m,n} + p_{c},
\end{equation}
where $p_{a}$ and $p_{c}$ are (possibly empty) subpaths, and $p_{b}$ is a non-empty subpath from $v_n$ back to $v_m$. The $+$ operator here denotes path concatenation, defined only when the terminal node of the first segment matches the initial node of the second. For example, 
\begin{multline}
    (v_0, v_1, v_2) + (v_2, v_3) + (v_3, v_4, v_5) \\
    = (v_0, v_1, v_2, v_3, v_4, v_5),
\end{multline}
illustrates valid concatenations: the edge $(v_2, v_3)$ extends the path ending at $v_2$, and the subsequent path $(v_3, \dots)$ continues from $v_3$.

We construct a modified path $\tilde{p}$ by removing both occurrences of the edge $e_{m,n}$ and replacing the subpath $p_{b}$ with its reversal $\tilde{p_{b}}$. The resulting path is:
\begin{equation}
    \tilde{p} =p_{a} + \tilde{p_{b}} + p_{c},
\end{equation}
which remains a valid path since $\tilde{p_{b}}$ starts at $v_m$ and ends at $v_n$, and all reversed edges are present due to symmetry. This construction is illustrated in \Cref{fig:symmetric_directed_graph_proof}.

\begin{figure}[!t]
    \centering
    \includegraphics[width=0.48\textwidth]{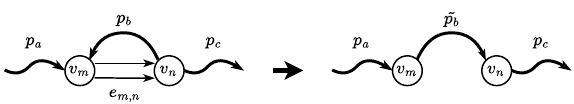}
    \caption{\textbf{Repeated edges can be removed to construct a lower-cost path with identical gain.}}
    \label{fig:symmetric_directed_graph_proof}
\end{figure}

Because $\tilde{p}$ visits the same set of nodes as $p$, its total gain remains unchanged. The cost, however, is strictly reduced due to the removal of both traversals of $e_{m,n}$:
\begin{equation}
    g(\tilde{p}) = g(p), \quad c(\tilde{p}) = c(p) - 2c(e_{m,n}) < c(p).
\end{equation}

By applying this transformation iteratively to each repeated edge, we progressively eliminate all edge repetitions. Although reversing a subpath may introduce new repetitions, each application strictly reduces the total number of edge traversals by two. Since the initial path is finite, the process terminates after a finite number of steps, yielding a path with no repeated edges.

Thus, any path that revisits an edge can be transformed into a trail (i.e., a path with no repeated edges) with identical gain and strictly lower cost, completing the proof.\hfill \qedsymbol
\end{proof}

This theorem directly leads to the following corollary.

\begin{corollary} \label{thm:no_revisit_edge}
In informative path planning over symmetric directed graphs, it suffices to restrict the search space to trails---that is, paths that do not traverse any edge more than once.
\end{corollary}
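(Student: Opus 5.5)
The plan is to derive the corollary directly from the preceding theorem by an exchange argument on an optimal solution of \Cref{eq:optimization-objective}. Fix a start vertex $v_s$ and a budget $C>0$. Let $\mathset{T}\subseteq\mathset{P}$ denote the set of trails starting at $v_s$. I will show that the optimal value over all feasible paths equals the optimal value over feasible trails, and that every maximizer over $\mathset{T}$ is a maximizer over $\mathset{P}$.

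\textbf{Step 1: the optimum over trails is attained.} A trail uses each edge at most once, so its length is at most $|\mathset{E}|$. Hence $\mathset{T}$ is finite for the finite graphs considered here, and a maximizer of $g$ over $\{p\in\mathset{T} : c(p)\le C\}$ exists. This set is nonempty because the trivial path $(v_s)$ belongs to it.

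\textbf{Step 2: exchange argument.} Take any feasible path $p\in\mathset{P}$ with $c(p)\le C$. If $p$ repeats no edge, then $p$ is already in $\mathset{T}$. Otherwise, the preceding theorem supplies a trail $\tilde p$ with
\begin{equation}
    g(\tilde p)=g(p), \qquad c(\tilde p)<c(p)\le C.
\end{equation}
So $\tilde p$ is feasible. One thing must be checked here: $\tilde p$ still starts at $v_s$. The construction in the theorem only modifies the path from the first repeated occurrence onward. The prefix $p_a$ is left untouched, and when $p_a$ is empty the reversed segment $\tilde{p_b}$ begins at $v_m$, which is the start of the original path. So the initial vertex is preserved at each iteration.

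\textbf{Step 3: conclude.} From Step 2,
\begin{equation}
    \sup_{p\in\mathset{P},\, c(p)\le C} g(p)\;\le\;\max_{p\in\mathset{T},\, c(p)\le C} g(p).
\end{equation}
The reverse inequality is trivial because $\mathset{T}\subseteq\mathset{P}$. Therefore the two optima coincide, the supremum over the infinite set $\mathset{P}$ is attained, and searching over trails suffices.

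The step I expect to need the most care is the start-vertex bookkeeping in Step 2, since the theorem as stated does not mention the start vertex.
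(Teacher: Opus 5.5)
Your proposal is correct and is essentially the paper's argument: the paper says the corollary follows directly from the preceding theorem, since any feasible path that repeats an edge can be replaced by a trail with equal gain and strictly lower cost, which is therefore still within budget. Your Steps~1 and~2 only make explicit what the paper leaves implicit: the maximum over the finite set of trails is attained, and $\tilde{p} = p_a + \tilde{p_b} + p_c$ keeps the start vertex fixed (it also keeps the end vertex fixed, since $p_c$ begins at $v_n$).
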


\section{Informative Path Planning Algorithms}
\label{sec:algorithms}

For informative path planning, we consider two widely used baseline algorithms: one based on the \ac{tsp} and the other on the \ac{spt}. To overcome their limitations and improve computational efficiency, we propose beam search algorithms for solving the \ac{ipp} problem. We first implement the standard beam search, referred to as \acl{dbs} (\ac{dbs}), which retains the top~$B$ partial solutions at each search depth for further expansion. We then introduce a modified variant, \acl{nbs} (\ac{nbs}), which retains up to~$B$ candidate paths per visited node, rather than per depth level. This modification enhances the planner’s ability to explore the graph more globally and avoid becoming trapped in local optima.

To compare the potential of different paths, we define a path-preference relation that first prioritizes the gain-to-cost ratio, then the path gain, and finally penalizes higher cost when all other factors are equal.

\begin{definition}[Path Preference Relation]
Let $p_1$ and $p_2$ be two candidate paths. 
We say that $p_{1}$ is preferred over $p_{2}$, denoted $p_{1} \succ p_{2}$, if one of the following conditions holds:
\begin{enumerate}
    \item $r(p_{1}) > r(p_{2})$; 
    \item $r(p_{1}) = r(p_{2})$, $g(p_{1}) > g(p_{2})$; 
    \item $r(p_{1}) = r(p_{2})$, $g(p_{1}) = g(p_{2})$, $c(p_{1}) < c(p_{2})$.
\end{enumerate}
\end{definition}

\subsection{Existing Algorithms}

\subsubsection{Traveling Salesman Problem Approach}
\label{sec:tsp}

A common foundation underlying various \ac{tsp}-based approaches~\citep{arora2017randomized, cao2021tare} is a two-stage pipeline: first, a subset of nodes is sampled or selected from the full set for traversal; second, the resulting \ac{tsp} instance is solved using an off-the-shelf solver. In the context of informative path planning, where the goal is to maximize information gain, the most natural strategy is to select nodes whose gain exceeds a threshold $g_{\mathrm{thr}}$. To explicitly encourage exploration, the selected set should further include frontier nodes:
\begin{equation}
    \mathset{V}_{s} = \mathset{F} \;\cup\;  \{ v \in \mathset{V} \mid g(v) > g_{\mathrm{thr}} \}.
    \label{eq:node-selection}
\end{equation}
The gain threshold is determined using a mid-range criterion:
\begin{equation}
    g_{\mathrm{thr}} = g_{\max} - \alpha \cdot (g_{\max} - g_{\min}),
    \label{eq:gain-threshold}
\end{equation}
where $\alpha$ specifies the fraction of top-gain nodes to retain, and $g_{\min}$ and $g_{\max}$ denote the minimum and maximum node gains.

Given the selected nodes from the previous stage, we first construct a cost matrix representing the shortest distances between every pair of nodes. We compute this matrix using the Floyd–Warshall algorithm~\citep{cormen2022introduction}, which finds all-pairs shortest paths with a time complexity of $\Theta(|\mathset{V}|^3)$ and a space complexity of $\Theta(|\mathset{V}|^2)$. Afterward, we solve the TSP to obtain the optimal tour $p^*$ using Google OR-Tools~\citep{ortools}. As TSP is NP-hard, exact dynamic‐programming approaches such as the
Bellman–Held–Karp algorithm~\citep{bellman1962dynamic,held1962dynamic}  become computationally intractable for large instances, requiring $\Theta(|\mathset{V}|^{2}\,2^{|\mathset{V}|})$ time and
$\Theta(|\mathset{V}|\,2^{|\mathset{V}|})$ space. Consequently, heuristic methods (e.g.\ Tabu Search~\citep{glover1998tabu}, Simulated Annealing~\citep{kirkpatrick1983optimization}, or Guided Local Search~\citep{voudouris1997guided}) are commonly employed to provide near-optimal solutions in practice. In our implementation, we allow OR-Tools to automatically select the solver. Since the complexity of the selected heuristic solver is not explicitly controlled, we report the lower bound given by the Floyd–Warshall preprocessing step and the
upper bound given by the exact Bellman–Held–Karp algorithm.

\subsubsection{Shortest Path Tree Approach}
\label{sec:spt}

The shortest path tree approach~\citep{xu2021autonomous} comprises two computational stages: (1) constructing the \ac{spt}, and (2) selecting the optimal path within the SPT. We employ the Bellman--Ford algorithm~\citep{cormen2022introduction} to compute the SPT that encodes the minimum-cost path from the source to every node. This step dominates the computational complexity, requiring $O(|\mathset{V}||\mathset{E}|)$ time, while the space complexity is $O(|\mathset{V}|)$ for storing distances and predecessors. Since the SPT is a tree, the number of paths to be evaluated equals $|\mathset{V}|$. This algorithm generalizes prior tree-based \ac{ipp} approaches using RRT~\citep{bircher2016receding, witting2018history} and RRT${}^*$~\citep{schmid2020efficient}.

\subsection{Depth-wise Beam Search}

Beam search is a variant of breadth-first search that selectively retains the most promising partial solutions at each depth level. In the standard formulation, a fixed parameter $B \in \mathbb{N}$, known as the beam width, determines the number of candidate paths preserved at each iteration. We refer to this method as \ac{dbs}, as it limits the number of paths expanded per depth. Unlike classical beam search, which typically terminates upon reaching a goal state, informative path planning lacks a well-defined endpoint since an optimal path can end at any node. To regulate the search, we introduce a search depth $D \in \mathbb{N}$ to constrain the search space, which also mirrors the look-ahead horizon in practical applications.

\begin{algorithm}
    \caption{Depth-wise Beam Search $(\textsc{dbs})$}
    \label{alg:dbs}
    \KwIn{$\mathset{G} = (\mathset{V}, \mathset{E})$, start node $v_s$, cost budget $C$}
    \Parameter{beam width $B$, search depth $D$, path quality $q$}
    $p^* \gets (v_s)$, $\bm{open} \gets \{(v_s)\}$\\
    \For {$d = 1 : D$} {
        $\bm{heap} \gets \emptyset$ \\
        \ForEach {$p \in \bm{open}$} {
            $v \gets p.\method{back}()$ \\
            \ForEach {$(v, v') \in \delta^+(v)$} {
                \If{$p.\method{traversed}(v, v')$}{
                    \Continue
                }
                $p' \gets p + (v, v')$ \\
                \If {$c(p') > C$} {
                    \Continue
                }
                \If {$q(p') > q(p^*)$} {
                    $p^* \gets p'$
                }
                \eIf {$\bm{heap}.\method{size}() < B$} {
                    $\bm{heap}.\method{push}(p')$
                }{
                    \If {$p' \succ \bm{heap}.\method{top}()$} {
                        $\bm{heap}.\method{pop}()$ \\
                        $\bm{heap}.\method{push}(p')$
                    }
                }
            }
        }
        $\bm{open} \gets \bm{heap}$ \\      
    }
    \Return $p^*$ \\
\end{algorithm}

\Cref{alg:dbs} outlines the DBS procedure for identifying the most informative path in a given graph. The search begins with a beam $\bm{open}$, which maintains the top $B$ paths at each depth level. During expansion, a min heap $\bm{heap}$---a priority queue structure---is used to store candidate paths, where the least preferred path (worst-ranked) is accessible via $\bm{heap}.\method{top}()$. Each path $p$ in the current beam is expanded using all outgoing edges $\delta^+(v)$ of its last vertex $v$. Expansion is performed through path concatenation, denoted by $+$. To prevent revisiting edges already traversed (as justified in~\Cref{thm:no_revisit_edge}), we check whether the edge $(v, v')$ is present in $p$. If the newly extended path $p'$ remains within the cost budget $C$, we evaluate its quality $q(p')$ under different selection criteria (\Cref{sec:selection-criteria}) and update the current best path $p^*$ if $p'$ is superior. The min heap is updated based on the path preference operator $\succ$, ensuring that at most $B$ paths are retained. If the heap has available capacity, the new path is directly inserted; otherwise, it replaces the worst-ranked path if it is preferred. This replacement operation maintains the min-heap property, guaranteeing that $\bm{heap}.\method{top}()$ always provides the least promising path for efficient pruning. Once all paths at the current depth are processed, the contents of $\bm{heap}$ are transferred to $\bm{open}$ to initialize the next beam. The search proceeds for $D$ iterations, ultimately returning the best path found.

\ac{dbs} exhibits a time complexity of $O(|\mathset{V}| D B (D + \log B))$, which stems from its iterative deepening structure and the operations executed at each search level. The algorithm performs $D$ iterations, and at each depth, it explores up to $B$ paths by extending them with a neighboring vertex (with at most $|\mathset{V}|$ possibilities), resulting in $O(|\mathset{V}| D B)$ path expansions. Each path extension involves revisitation checks and path construction, both incurring a cost of $O(D)$. The cost, gain, and ratio computations are performed in constant time, leveraging cached values for all paths in the beam $\bm{open}$. Heap operations ($\textsc{push}$, $\textsc{pop}$) for maintaining the beam require $O(\log B)$ time. Accounting for all these operations leads to the stated time complexity. 
The space complexity of \ac{dbs} is $O(DB)$, as it maintains a beam $\bm{open}$ of up to $B$ paths, each of length at most $D$.

\subsection{Node-wise Beam Search}

Depth-wise beam search prioritizes expanding the most promising paths at each depth level, but this makes it easily fall prey to local optima.  Once the entire beam converges on regions with high immediate gains, it will neglect other promising candidates that initially appear less favorable but could ultimately lead to better solutions. In the worst case, all paths in the beam may even terminate at the same node. To address this, we propose an alternative algorithm, \ac{nbs}. Instead of maintaining the top $B$ path candidates per depth, NBS stores up to $B$ paths for each visited node. This modification ensures that at least one promising path remains available at every visited node, encouraging exploration across the graph rather than focusing on immediate high-gain regions.

\begin{algorithm}
    \caption{Node-wise Beam Search $(\textsc{nbs})$}
    \label{alg:nbs}
    \KwIn{$\mathset{G} = (\mathset{V}, \mathset{E})$, start node $v_s$, cost budget $C$}
    \Parameter{beam width $B$, search depth $D$, path quality $q$}
    $p^* \gets (v_s)$, $\bm{open}[v_s] \gets \{(v_s)\}$ \\
    \For {$d = 1 : D$} {
        $\bm{heap} \gets \emptyset$ \\
        \ForEach {$v \in \mathset{V}$} {
            \ForEach {$p \in \bm{open}[v]$} {
                \ForEach {$(v, v') \in \delta^+(v)$} {
                    \If{$p.\method{traversed}(v, v')$}{
                        \Continue
                    }
                    $p' \gets p + (v, v')$ \\
                    \If {$c(p') > C$} {
                        \Continue
                    }
                    \If {$q(p') > q(p^*)$} {
                        $p^* \gets p'$
                    }
                    \eIf {$\bm{heap}[v'].\method{size}() < B$} {
                        $\bm{heap}[v'].\method{push}(p')$ \\
                    } {
                        \If {$p' \succ \bm{heap}[v'].\method{top}()$} {
                            $\bm{heap}[v'].\method{pop}()$ \\
                            $\bm{heap}[v'].\method{push}(p')$ \\
                        }
                    }
                }
            }
        }

        $\bm{open} \gets \bm{heap}$ \\      
    }
    \Return $p^*$ \\
\end{algorithm}

\Cref{alg:nbs} outlines the NBS procedure. The search maintains a per-node beam, $\bm{open}[v]$, storing up to $B$ paths that lead to node $v$. At each iteration, all nodes with stored paths are expanded: for each path $p$ ending at node $v$, we extend it to all successors $v'$, provided the edge $(v, v')$ has not been traversed in $p$. Throughout this process, we update the best path found according to the given selection criterion $q$. The structure $\bm{heap}$ serves as a temporary map to collect these extended paths; specifically, each $\bm{heap}[v']$ is a min-heap that tracks the least promising candidate (accessible via $\bm{heap}[v'].\method{top}()$) to enforce the beam limit. Other than this key difference, NBS follows a similar expansion strategy to \ac{dbs}, progressing in increasing depth from $1$ to $D$.

To better analyze the computational complexity, we can adjust the implementation by iterating the depth in the outer loop, then iterating over the edges $(v, v') \in \mathset{E}$, and finally iterating over the paths inside $\bm{open}[v]$ in the inner loop. In this way, we can see that there are in total $|\mathset{E}| D B$ iterations. Inside each iteration, the complexity is the same as that of \ac{dbs}; therefore, the total time complexity is $O(|\mathset{E}| D B (D + \log B))$. For NBS, $\bm{open}$ contains a list of up to $B$ paths for each vertex $v \in \mathset{V}$. Each path can have at most $D$ nodes, resulting in a space complexity of $O(|\mathset{V}| D B)$.

\begin{table*}[!t] 
  \centering
  \small
  \caption{\textbf{Asymptotic time and space complexity of the evaluated algorithms.}}
  \label{tab:complexity}
  \rowcolors{2}{white!100}{gray!15}
  \begin{tabular}{c|c|c|c|c|c}
    \toprule
    Alg.     & Time comp.           & Space comp.    & Num. of paths & Time comp. per path  & Space comp. per path  \\ 
    \midrule
    TSP   & $ \Omega(|\mathset{V}|^{3}), O(|\mathset{V}|^{2}\,2^{|\mathset{V}|})$   & $ \Omega(|\mathset{V}|^2)$, $O(|\mathset{V}|\,2^{|\mathset{V}|})$    &  -               &  -                   &  -                    \\
    SPT   & $O(|\mathset{V}||\mathset{E}|)$               & $O(|\mathset{V}|)$            & $|\mathset{V}|$            & $O(|\mathset{E}|)$             & $O(1)$                \\
    DBS   & $O(|\mathset{V}| D B (D + \log B))$ & $O(D B)$            & $|\mathset{V}| D B$        & $O(D + \log B)$      & $O(1 / |\mathset{V}|)$          \\
    NBS   & $O(|\mathset{E}| D B (D + \log B))$ & $O(|\mathset{V}| D B)$        & $|\mathset{E}| D B$        & $O(D + \log B)$      & $O(|\mathset{V}| / |\mathset{E}|)$        \\
    \bottomrule
  \end{tabular}
\end{table*}

\subsection{Computational Complexity}

\Cref{tab:complexity} presents a comparative summary of the computational characteristics of TSP, SPT, DBS, and NBS. For each algorithm, we report the time and space complexity, the maximum number of candidate paths considered during the search, and the average time and space complexity per path.

For the SPT approach, since the tree contains exactly one path to each vertex, the number of evaluated paths is $|\mathset{V}|$. This yields an average time complexity of $O(|\mathset{E}|)$ and a space complexity of $O(1)$ per path. In the case of TSP, as the number of enumerated candidate paths is not explicitly bounded, per-path statistics are not applicable. DBS exhibits a per-path time complexity of $O(D + \log B)$ and a per-path complexity of $O(1 / |\mathset{V}|)$. On the other hand, NBS exhibits the same per-path time complexity but incurs a different space complexity of $O(|\mathset{V}| / |\mathset{E}|)$.

Among all methods, TSP exhibits the highest asymptotic time and space complexity with the upper bound. While direct comparison between SPT and beam search methods in terms of overall computation time is challenging due to their dependence on beam width and search depth, their per-path complexities offer clearer insight. In most scenarios, $D$ and $\log B$ are expected to be significantly smaller than $|\mathset{E}|$, and thus both DBS and NBS exhibit lower per-path time and space complexity than SPT. Although NBS incurs higher total asymptotic complexity than DBS for a fixed beam width, it often demonstrates superior empirical performance under equivalent computational budgets. This advantage stems from its node-wise beam design, which promotes broader exploration and more effective graph coverage, yielding high-quality solutions with fewer path expansions (i.e., smaller $B$). We provide empirical support for this claim in \Cref{sec:benchmark}.

\section{Benchmarking on Graphs}
\label{sec:benchmark}

\begin{figure*}[!t]
\centering
\begin{subfigure}{0.45\textwidth}
\includegraphics[width=\textwidth]{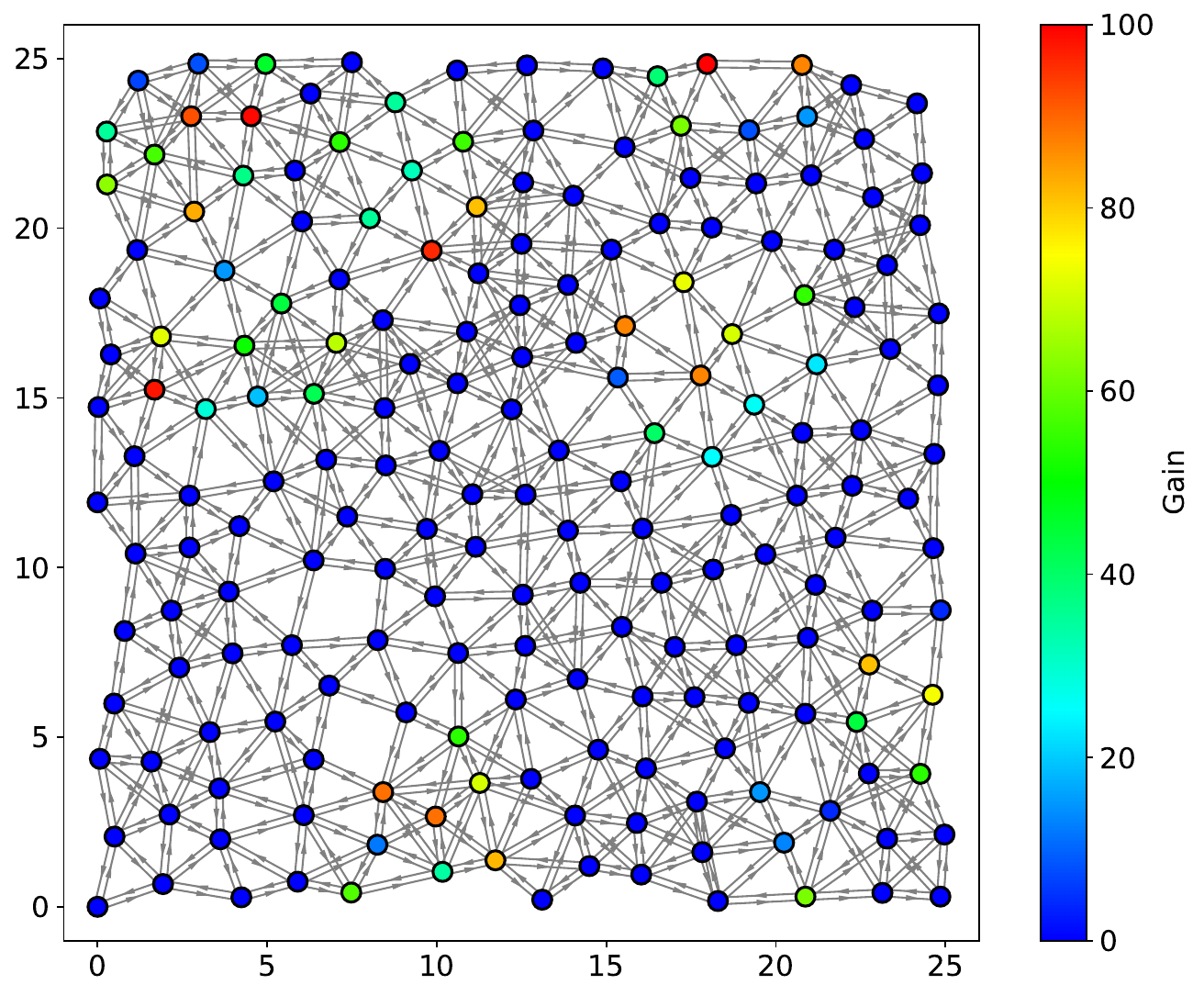}
\caption{Example of a clustered-gain graph of size $25 \times 25$}
\label{fig:small_clustered_gain_graph}
\end{subfigure}
\begin{subfigure}{0.45\textwidth}
\includegraphics[width=\textwidth]{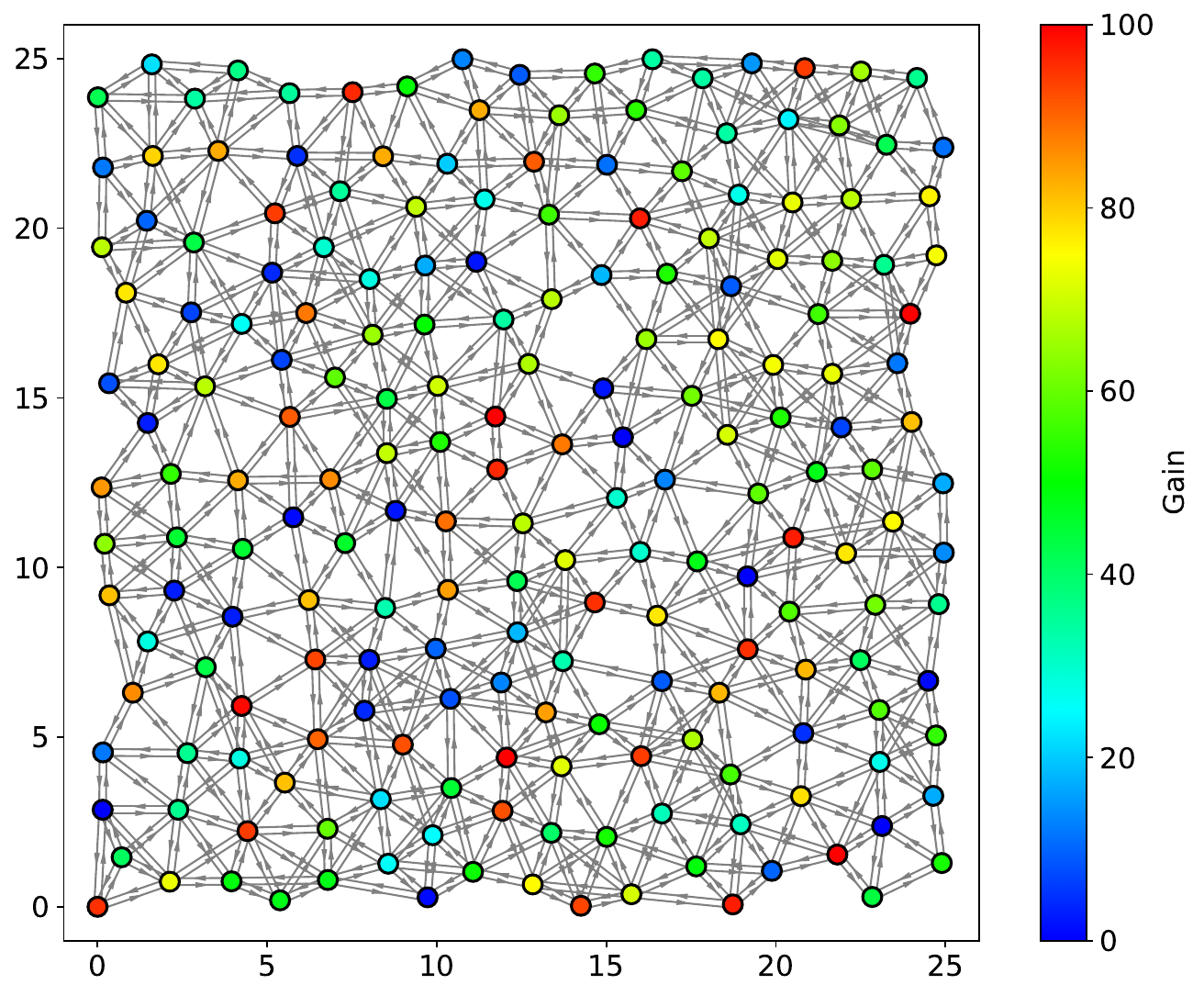}
\caption{Example of a scattered-gain graph of size $25 \times 25$}
\label{fig:small_scattered_gain_graph}
\end{subfigure}
\begin{subfigure}{0.45\textwidth}
\centering
\includegraphics[width=\textwidth]{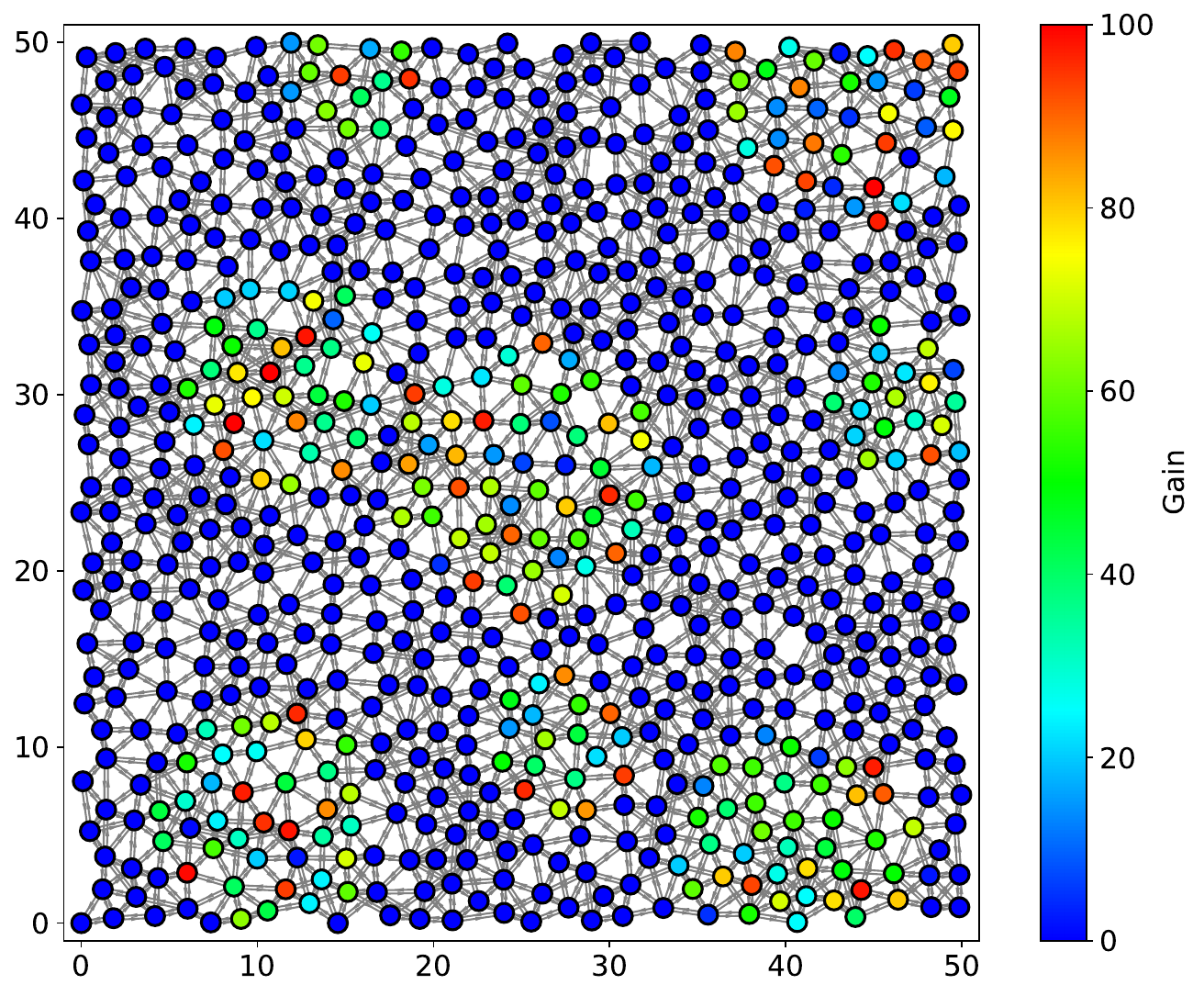}
\caption{Example of a clustered-gain graph of size $50 \times 50$}
\label{fig:large_clustered_gain_graph}
\end{subfigure}
\begin{subfigure}{0.45\textwidth}
\centering
\includegraphics[width=\textwidth]{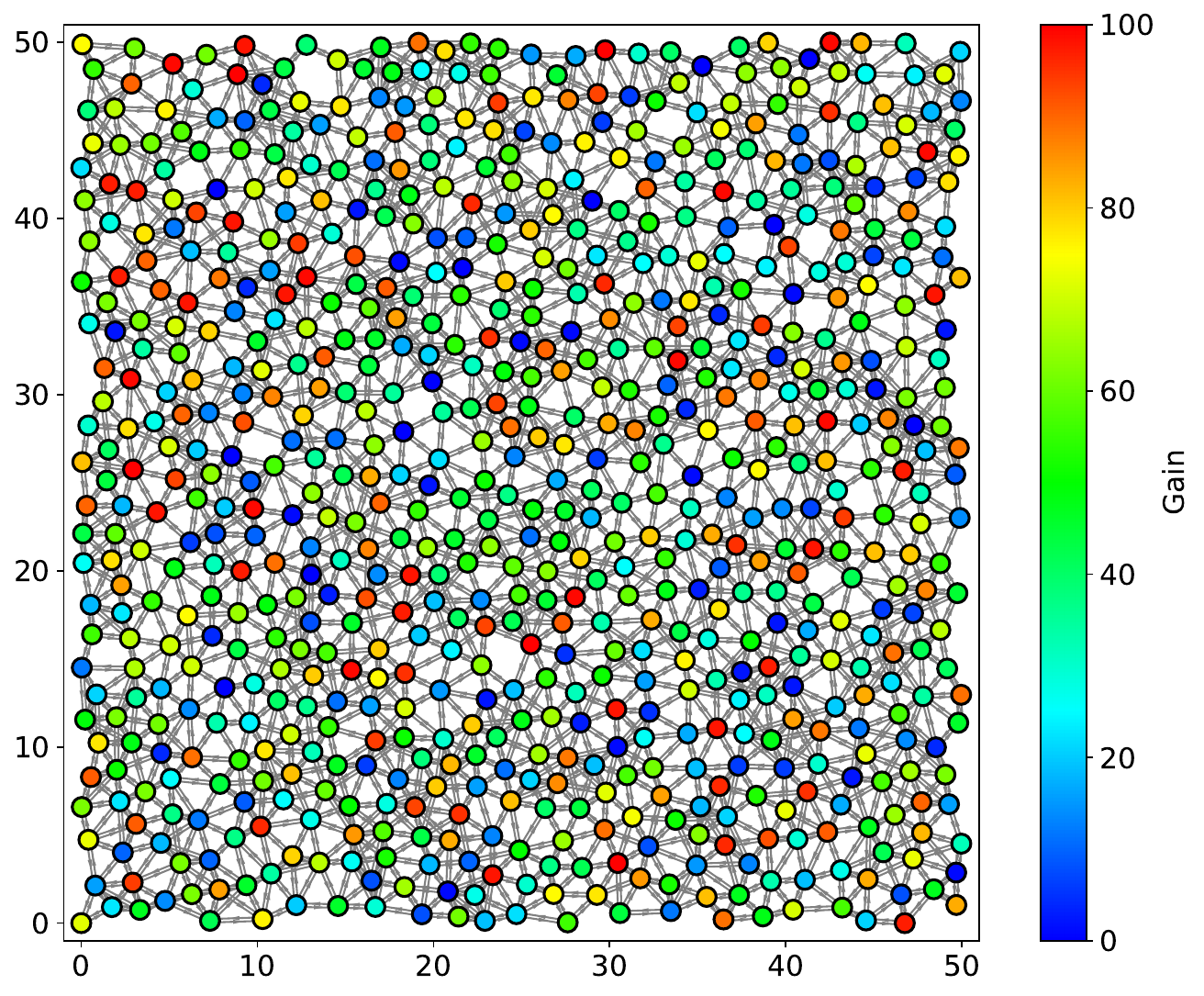}
\caption{Example of a scattered-gain graph of size $50 \times 50$}
\label{fig:large_scattered_gain_graph}
\end{subfigure}
\caption{\textbf{Illustrations of four types of graphs used for benchmarking.}}
\label{fig:graph_examples}
\end{figure*}

We evaluate all four algorithms, each with a range of key parameters. For beam search, the primary parameter is the beam width~$B$. The maximum beam width is chosen so that DBS and NBS have comparable computation times. For DBS, the beam widths are~$1$,~$100$, and~$10000$; for NBS, they are~$1$,~$10$, and~$100$. For TSP, the relevant parameter is~$\alpha$, as defined in~\Cref{eq:node-selection} and~\Cref{eq:gain-threshold}. While the shortest path tree typically does not involve tunable parameters, in~\cite{xu2021autonomous} the same parameter~$\alpha$ is used to reduce the search space and computation time. We set~$\alpha$ to~$0.5$,~$0.75$, and~$1.0$.  
All algorithms are implemented in C++, run in a single-threaded environment, and use the same graph representation to ensure a fair comparison. Experiments are conducted on an Intel i9-12900 CPU.

We evaluate each planner on two graph structures: clustered gains and scattered gains (examples in \Cref{fig:graph_examples}). Both structures are generated at two scales: a small graph spanning \SI{25}{\meter} in the $x$- and $y$-directions, and a large graph spanning \SI{50}{\meter}. In the scattered scenario, vertex gains are uniformly sampled from $[0,100]$. In the clustered scenario, these gains are assigned only to eight randomly placed clusters (all other vertices receive zero gain). Edge costs equal Euclidean distances, with the start vertex fixed at $(0,0)$. For each graph type, we generate five instances and average the results for gain and computation time.

We consider both sensing scenarios: the a priori case (\Cref{sec:graph-a-priori-formulation}), where the robot has full knowledge of the graph from the beginning, and online perception (\Cref{sec:graph-online-perception-formulation}), where the robot observes only nodes within a given perception radius~$\rho$, gradually discovering the graph as it navigates. We evaluate all applicable pairings of selection criteria and replanning strategies. In the a priori scenario, the expected gain equals the path gain ($g = g_e$) since no frontiers exist, and the ratio objective is inapplicable in the absence of replanning. Thus, five path selection combinations are feasible in this case. In the online perception scenario, replanning is necessary, making the no-replanning condition invalid; this results in six combinations per graph type.

In the plots, the $y$-axis represents different path selection pairings, while the $x$-axis shows either the range of computation time or the final accumulated gain across parameter settings. \Cref{fig:computation_time} presents the average computation time of various algorithms under different parameters. For this metric, we consider only the fully known graph case, as planning under exploration introduces significant uncertainties that can affect planning time. \Cref{fig:gains_cost_aprior} and \Cref{fig:gains_cost_onlineperception} present the results for the a priori and online perception scenarios, respectively. We evaluate algorithmic performance along two principal dimensions: (i) the maximum achievable gain, represented by the rightmost extent of each horizontal bar, and (ii) sensitivity to parameter settings, indicated by the bar's length. An algorithm is considered best if it consistently yields high gain with low sensitivity, corresponding to a short bar positioned to the far right. We prioritize the algorithm that achieves higher gain over those with merely lower sensitivity. Beyond identifying the top-performing algorithms, our analysis also aims to determine the most effective path selection combination across both formulations.

\subsection{Graph A Priori Benchmarking}

\begin{figure*}[!t]
\centering
\begin{subfigure}{0.49\textwidth}
    \centering
    \includegraphics[width=0.9\textwidth]{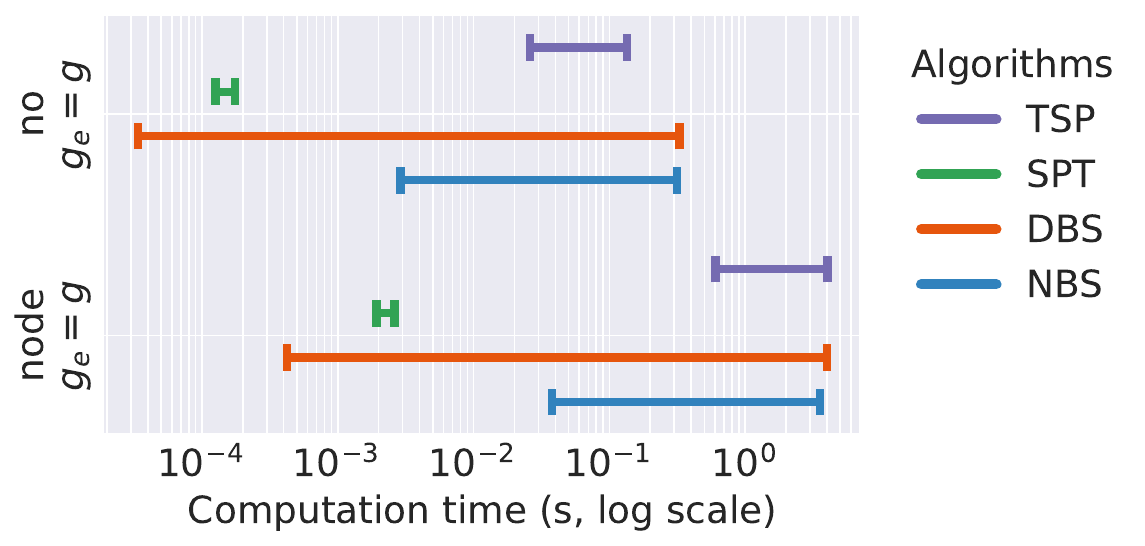}
    \caption{Average computation time in small scattered-gain graph}
    \label{fig:scattered_gain_computation_time_small_graph}
\end{subfigure}
\hfill
\begin{subfigure}{0.49\textwidth}
    \centering
    \includegraphics[width=0.9\textwidth]{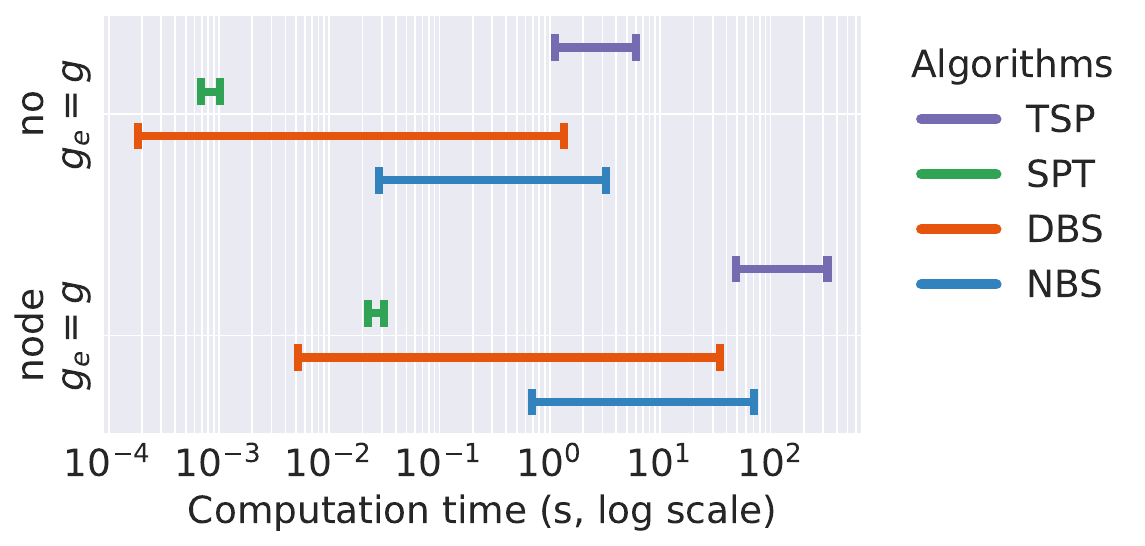}
    \caption{Average computation time in large scattered-gain graph}
    \label{fig:scattered_gain_computation_time_large_graph}
\end{subfigure}
\caption{\textbf{Average computation time in the graph a priori experiments.}}
\label{fig:computation_time}
\end{figure*}

\begin{figure*}[!t]
  \centering
  \begin{subfigure}{0.49\textwidth}
      \centering
      \includegraphics[width=0.8\textwidth]{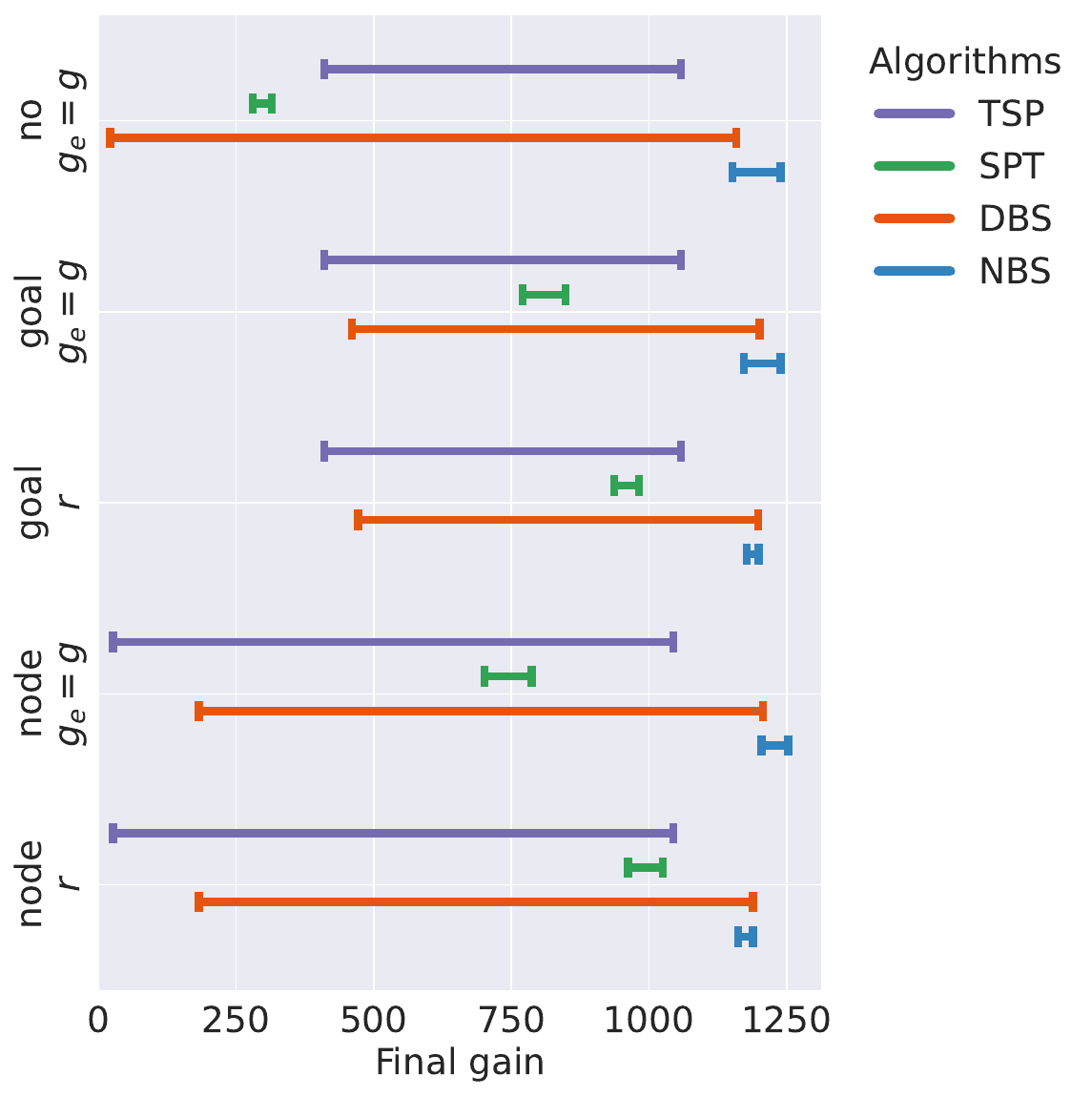}
      \caption{Clustered gains a priori (cost = 50, size $25 \times 25$)}
      \label{fig:clustered_gain_cost50_aprior}
  \end{subfigure}
  \hfill
  \begin{subfigure}{0.49\textwidth}
      \centering
      \includegraphics[width=0.8\textwidth]{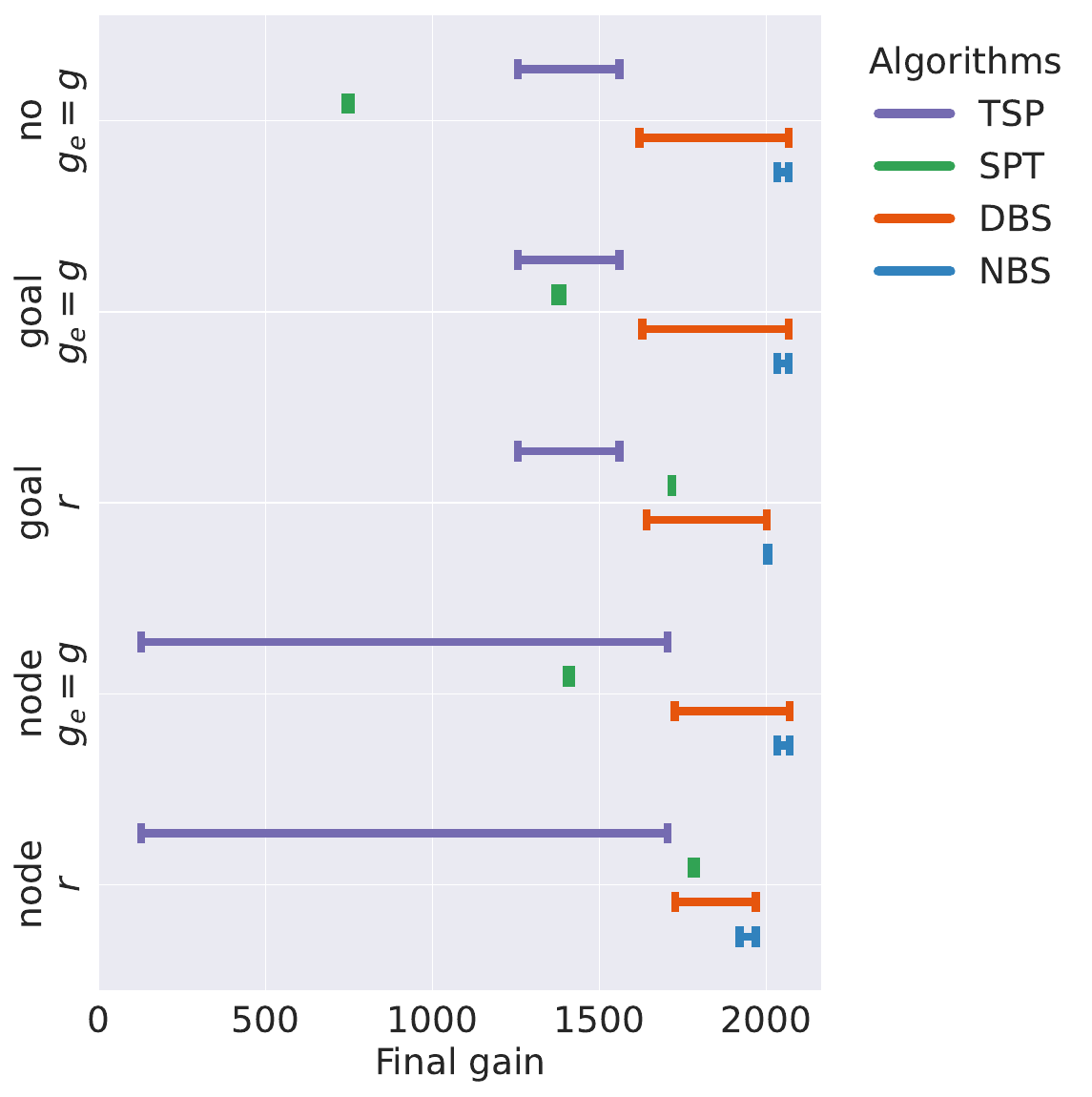}
      \caption{Scattered gains a priori (cost = 50, size $25 \times 25$)}
      \label{fig:scattered_gain_cost50_aprior}
  \end{subfigure}
  \begin{subfigure}{0.49\textwidth}
    \centering
    \includegraphics[width=0.8\textwidth]{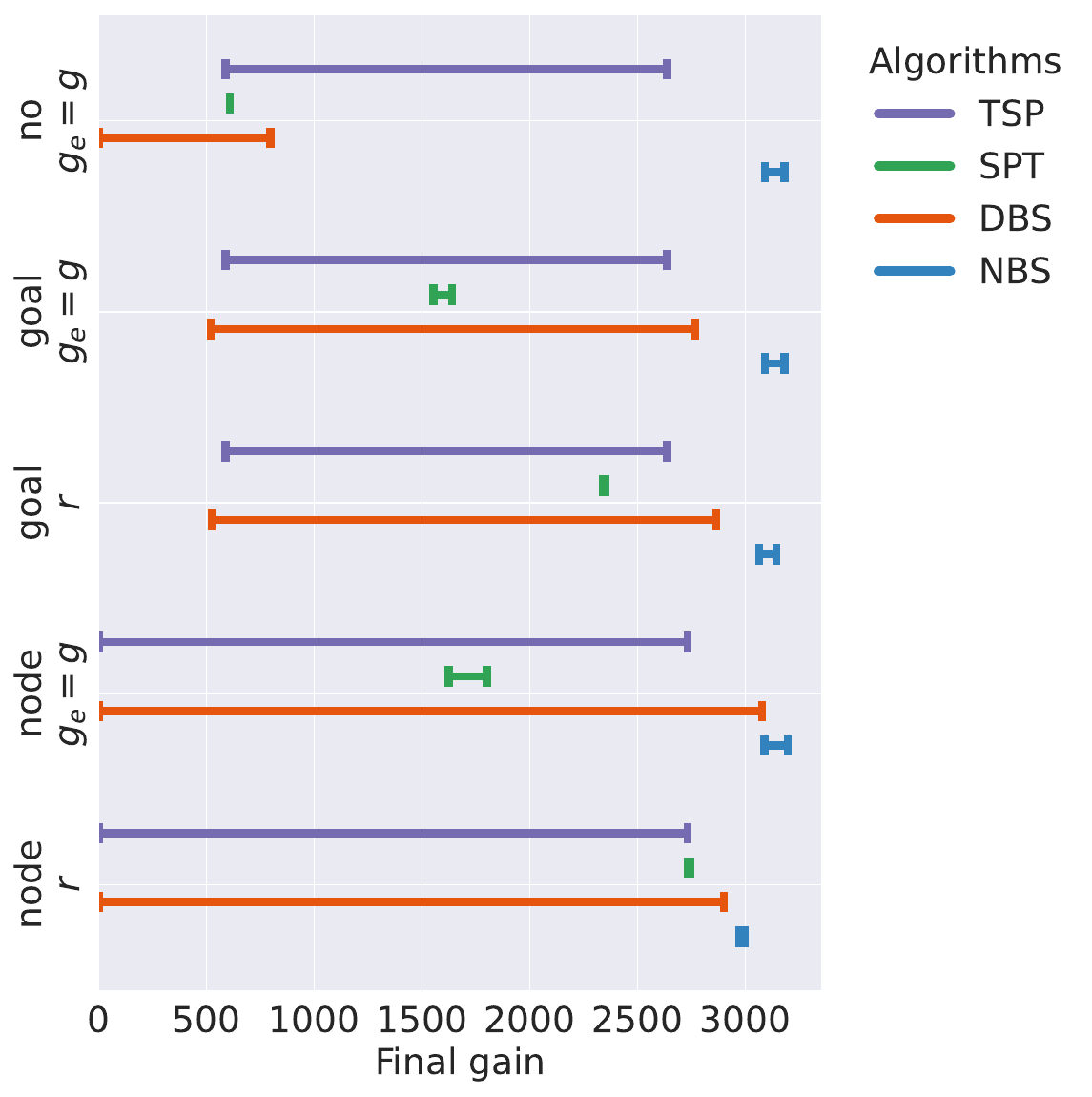}
    \caption{Clustered gains a priori (cost = 100, size $50 \times 50$)}
    \label{fig:clustered_gain_cost100_aprior}
  \end{subfigure}
  \hfill
  \begin{subfigure}{0.49\textwidth}
      \centering
      \includegraphics[width=0.8\textwidth]{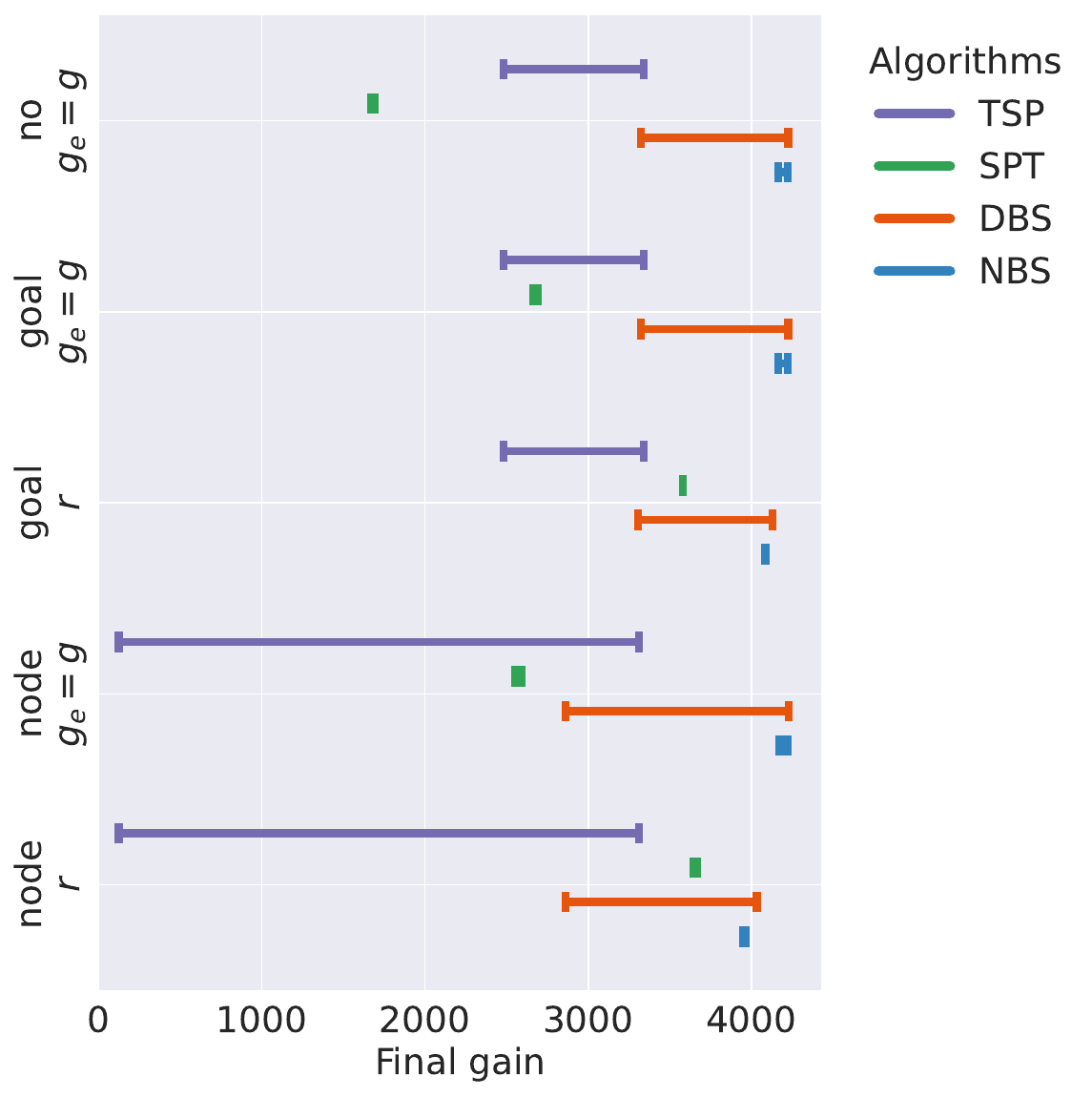}
      \caption{Scattered gains a priori (cost = 100, size $50 \times 50$)}
      \label{fig:scattered_gain_cost100_aprior}
  \end{subfigure}
  \caption{\textbf{The range of final collected gain in the graph a priori scenario.} NBS achieves the highest and most consistent gains, as shown by short horizontal bars positioned to the far right.}
  \label{fig:gains_cost_aprior}
\end{figure*}

\Cref{fig:computation_time} shows the average computation time per algorithm, in seconds, plotted on a logarithmic scale. We consider two replanning strategies: no replanning and replanning at every node; the ``replan-at-goal'' case is excluded due to its inconsistent replanning frequency. Results are shown only for scattered gain graphs, as clustered graphs exhibit qualitatively similar trends. We observe that computation time increases with both graph size and replanning frequency. TSP scales the worst, exceeding 300 seconds per episode on large graphs, whereas SPT remains consistently efficient. DBS achieves very low planning times at small beam widths. NBS is slightly slower than DBS at a beam width of 1, but its absolute computation time remains small: approximately 0.1 seconds for small graphs and about 1.0 second for large graphs under the every-node replanning strategy.

We visualize algorithm performance in the a priori formulation in \Cref{fig:gains_cost_aprior}. NBS consistently achieves the best performance across all conditions. It exhibits low sensitivity to both parameter settings and path selection, achieving high gains---as evidenced by short horizontal bars positioned to the far right. While DBS can potentially achieve comparable gains, its performance is more sensitive to both parameter and path selection setups. For example, it performs much worse in the clustered gain graph without replanning. Across most settings, the long horizontal bars indicate that DBS's performance is highly dependent on careful parameter tuning. SPT is insensitive to its parameters but shows considerable sensitivity to path selection. Without replanning, SPT fails to function effectively and, overall, achieves much lower gains than DBS and NBS. TSP lacks an explicit selection criterion, and its performance varies only with the replanning strategy. It is the most sensitive algorithm: while it can occasionally match SPT’s performance, it often achieves the lowest gains due to repeatedly traversing the same edge back and forth.

\begin{figure*}[!t]
\centering
\begin{subfigure}{0.49\textwidth}
    \centering
    \includegraphics[width=0.8\textwidth]{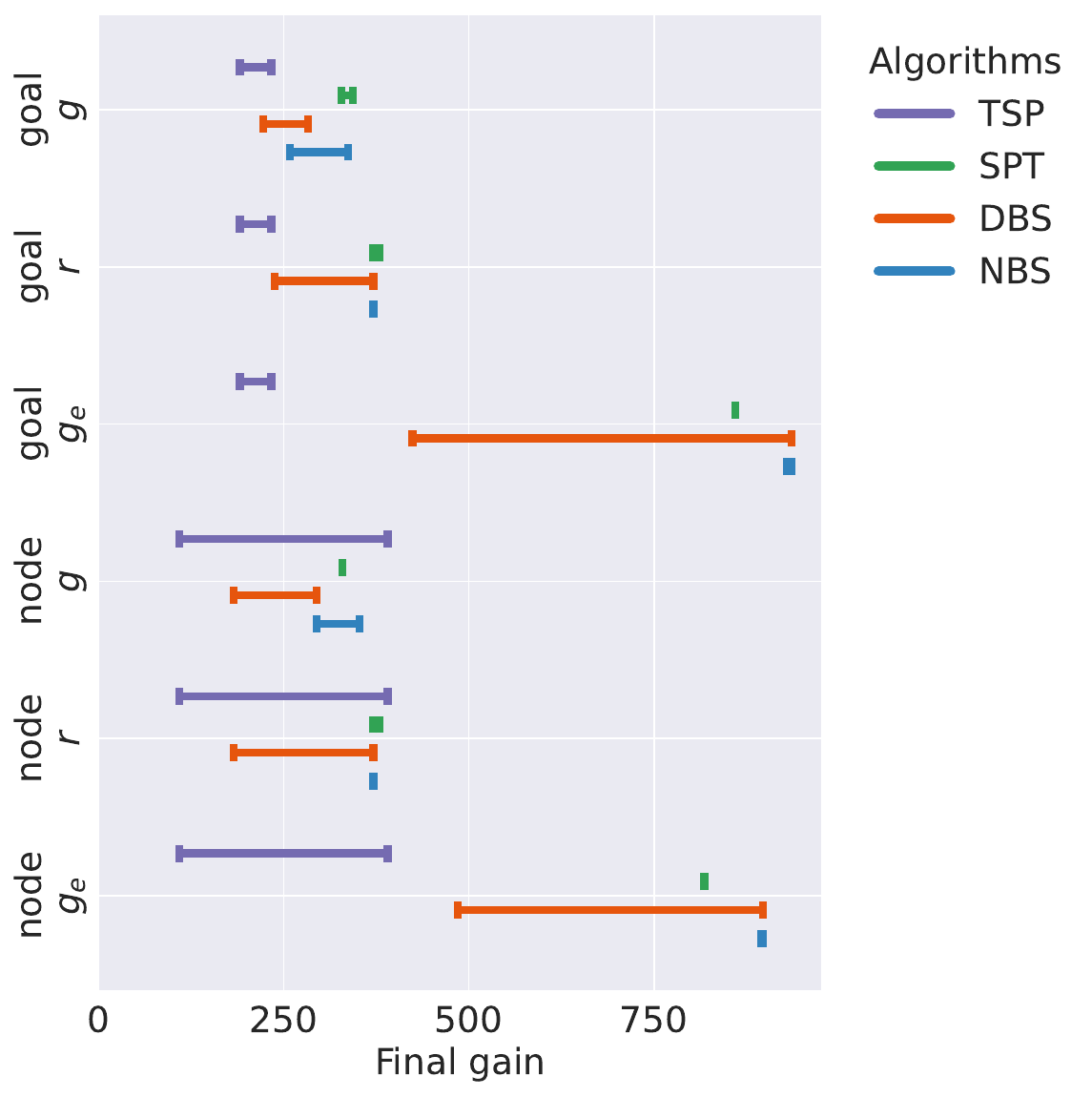}
    \caption{Clustered gains online perception (cost = 50, size $25 \times 25$)}
    \label{fig:clustered_gain_cost50_onlineperception}
\end{subfigure}
\begin{subfigure}{0.49\textwidth}
    \centering
    \includegraphics[width=0.8\textwidth]{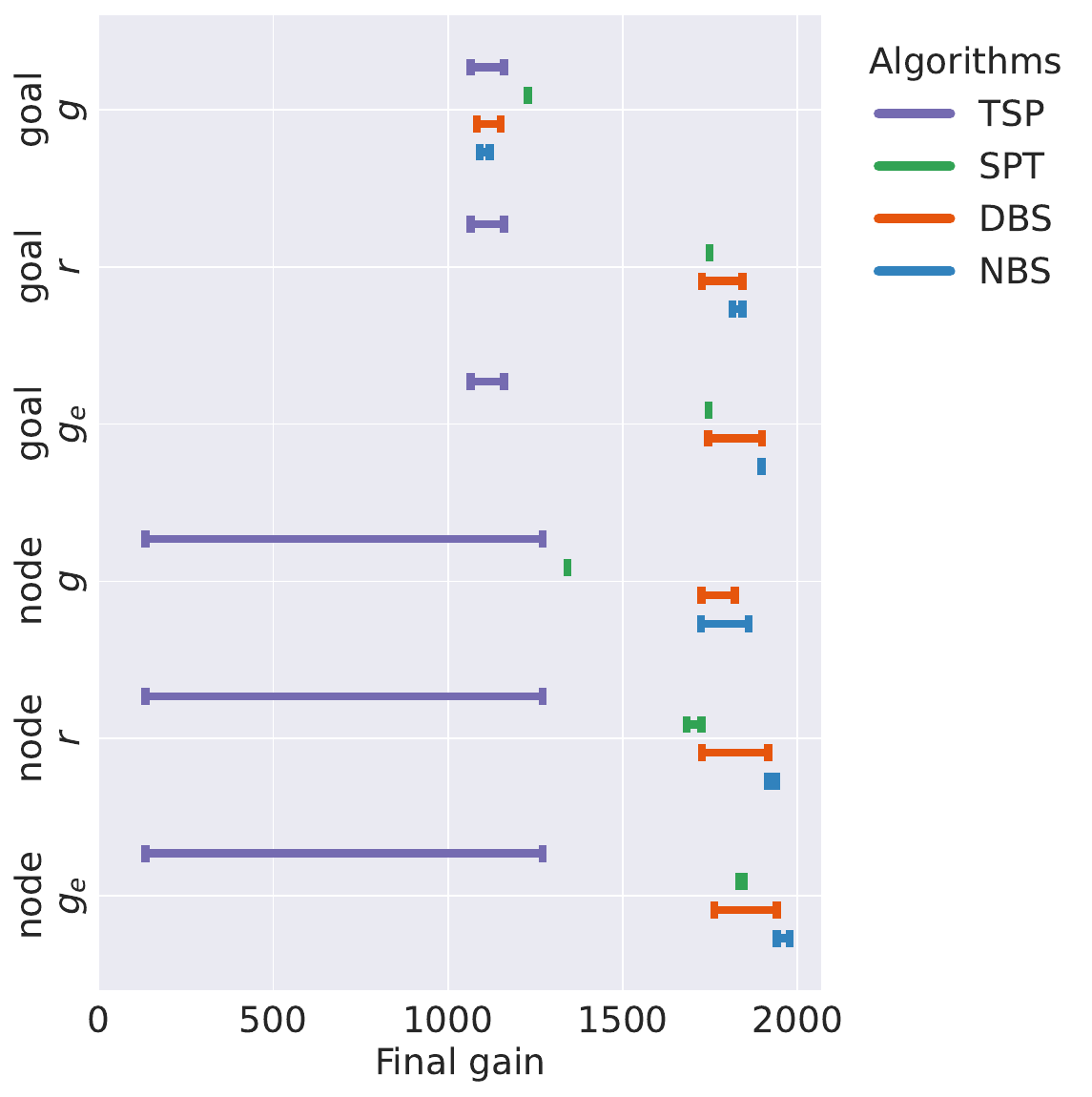}
    \caption{Scattered gains online perception (cost = 50, size $25 \times 25$)}
    \label{fig:scattered_gain_cost50_onlineperception}
\end{subfigure}
\begin{subfigure}{0.49\textwidth}
  \centering
  \includegraphics[width=0.8\textwidth]{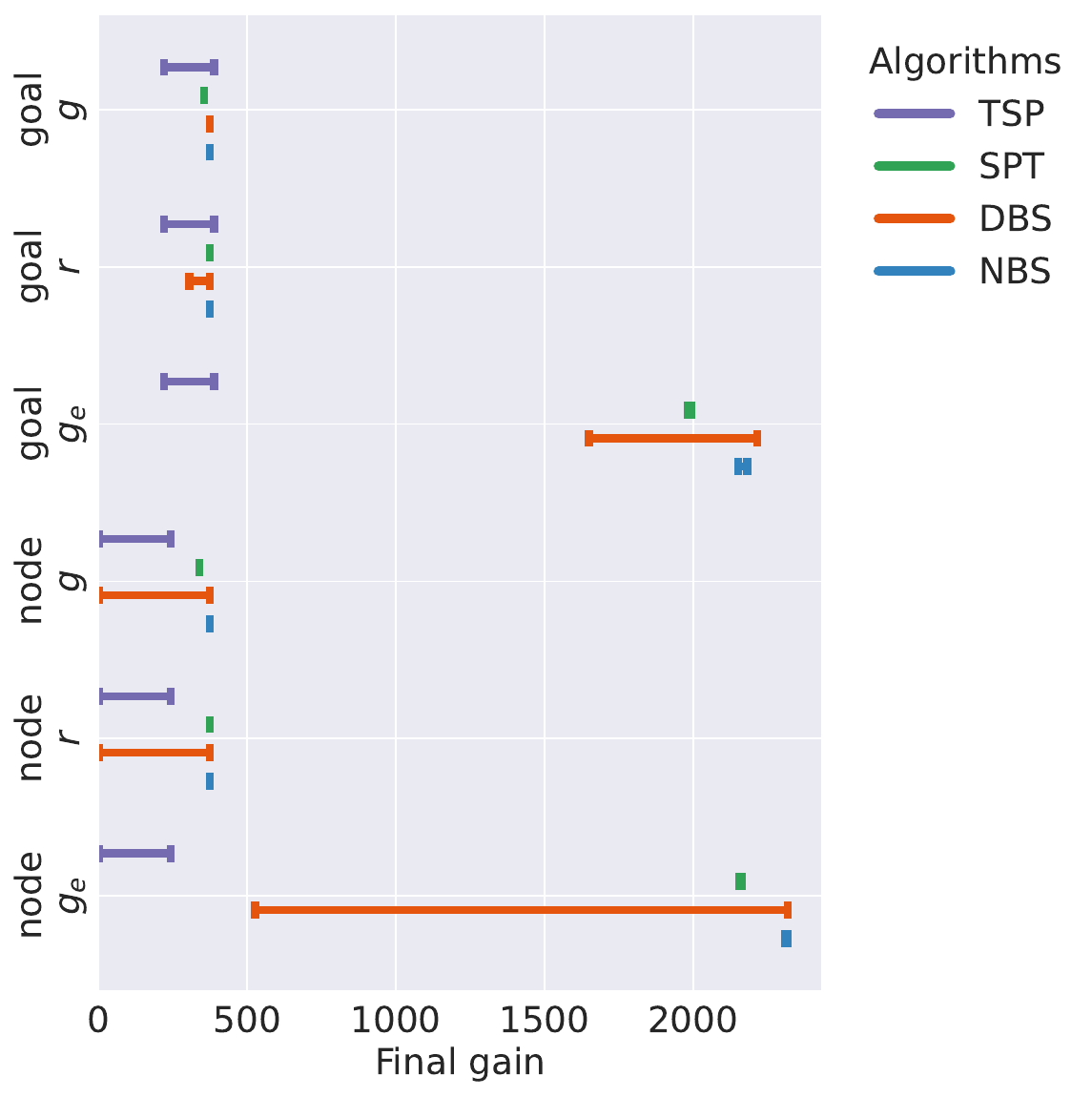}
  \caption{Clustered gains online perception (cost = 100, size $50 \times 50$)}
  \label{fig:clustered_gain_cost100_onlineperception}
\end{subfigure}
\begin{subfigure}{0.49\textwidth}
    \centering
    \includegraphics[width=0.8\textwidth]{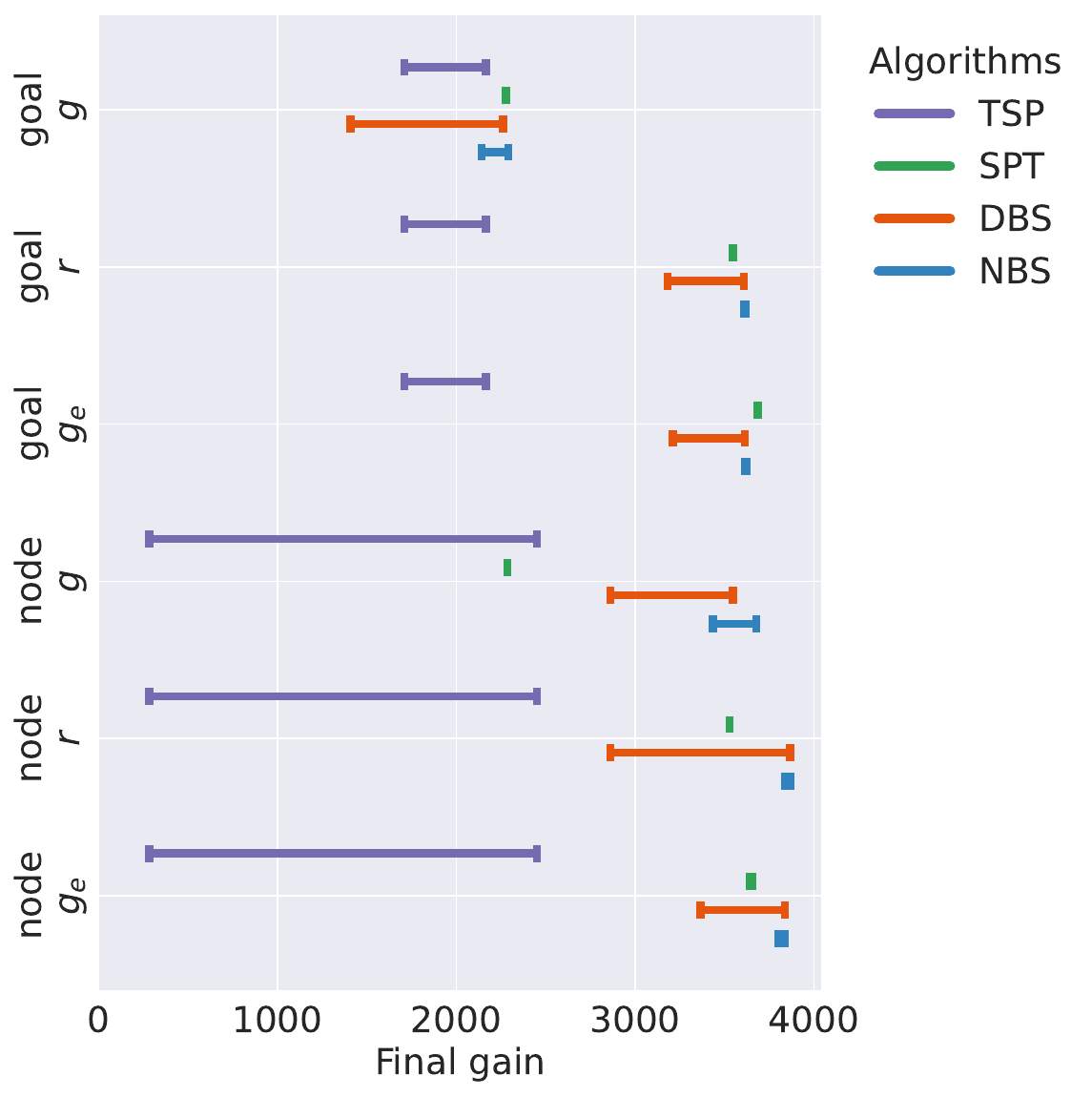}
    \caption{Scattered gains online perception (cost = 100, size $50 \times 50$)}
    \label{fig:scattered_gain_cost100_onlineperception}
\end{subfigure}
    \caption{\textbf{The range of final collected gain in the graph online perception scenario.} The expected gain selection criterion delivers the strongest results overall, and replanning at every node further improves performance compared to other replanning strategies. With this setup, NBS offers the most reliable performance with little sensitivity to parameter choices.}
\label{fig:gains_cost_onlineperception}
\end{figure*}

Regarding path selection, the top-performing algorithm, NBS, exhibits minimal sensitivity to this choice. This suggests that the specific path selection strategy is less critical in the fully known graph scenario when a robust algorithm is employed. Consequently, we omit further discussion of path selection for this scenario.

\subsection{Graph Online Perception Benchmarking}

In the online perception scenario, the robot can only perceive nodes within a fixed radius (\SI{5}{\meter} in this case). \Cref{fig:gains_cost_onlineperception} illustrates the algorithms' performance with this setup. The analysis parallels that of the a priori case, focusing on maximum gain, parameter sensitivity, and the path selection combination that yields the best performance.

We observe that the expected gain objective consistently yields superior results compared to the path ratio or path gain. This advantage arises because the expected gain explicitly encourages exploration by incorporating frontier information. In contrast, the other selection criteria lack such exploratory incentives, leading planners to over-exploit known areas and become trapped in local optima. Under the expected gain criterion, replanning at every step generally outperforms replanning only at the goal. Replanning exclusively at the goal ignores new information acquired en route, resulting in suboptimal behavior in three of the four evaluated graph types.

With this path selection setup, NBS consistently achieves the best performance and exhibits low sensitivity to parameter choices. DBS can attain similar performance but is more sensitive to parameter tuning. SPT performs worse overall, limited by its tree-based structure. TSP shows the weakest performance and exhibits high sensitivity to parameter settings.

\subsection{Performance Analysis on Graphs}

The effectiveness of selection criteria depends on the underlying graph formulation. When the graph is fully known a priori, the top-performing algorithms achieve strong results across all criteria. In contrast, in the online perception setting, the expected gain consistently outperforms other criteria across all planners. Regarding replanning frequency, in the a priori case, DBS and SPT benefit significantly from frequent replanning to improve performance. While TSP can also benefit from higher replanning frequency, it often exhibits oscillatory behavior between competing paths. Notably, NBS maintains strong performance across all replanning frequencies (even without any replanning), highlighting its robustness and computational efficiency. In the online perception setting, when paired with the expected gain criterion, high-frequency replanning mostly improves gain accumulation. This combination enables planners to effectively balance exploration and exploitation, adapting in real time to newly acquired information.

NBS demonstrates exceptional robustness to parameter variations, enabling reliable deployment with minimal tuning effort. Even at the smallest beam width ($B=1$), it delivers strong performance. In contrast, DBS is more sensitive to beam width selection---poor parameter choices can significantly degrade its performance. SPT remains the most computationally efficient but sacrifices solution quality due to its restricted search space. TSP performs poorly overall: it is highly sensitive to its parameters, prone to oscillations between suboptimal paths, and exhibits poor scalability, with prohibitively high computation times on larger graphs.

Overall, our results show that NBS is the top-performing \ac{ipp} algorithm. The combination of the expected gain metric with every-node replanning demonstrates the most effective selection strategy. Together, they form the best-performing planning approach, consistently achieving superior results in both a priori and online perception scenarios. In the following section, we shift our focus to active perception. Based on our findings, we adopt the expected gain criterion and apply a high replanning frequency.  NBS is selected with a beam width of $1$, as it is the most robust and reliable algorithm across all settings. DBS is configured with a larger beam width ($B=100$). For SPT, given its computational efficiency, we evaluate the entire tree search space by setting the threshold $\alpha$ to $1.0$. The TSP algorithm is configured with a threshold of $0.5$.

\section{Graph Construction for Active Perception}
\label{sec:graph-construction}
As introduced in \Cref{sec:active-perception-formulation} and illustrated in \Cref{fig:paper-organization}, active perception does not assume a predefined underlying graph environment, unlike prior \ac{ipp} problems defined on graphs. Therefore, to apply \ac{ipp} algorithms, we incrementally build a graph as the robot explores the environment.

Following prior work~\citep{schmid2020efficient, xu2021autonomous}, we enforce a minimum edge length constraint $l_{\min}$: a newly sampled, collision-free configuration is accepted only if its distance to all existing graph nodes exceeds $l_{\min}$. This helps avoid oversampling that would yield highly redundant information and improves computational efficiency during information gain evaluation and path planning. Once accepted, bidirectional edges are attempted between the new node and its neighbors, subject to a maximum edge length $l_{\max}$ and the existence of a collision-free path. The maximum edge length prevents excessive edge density, such as $|\mathset{V}|(|\mathset{V}| - 1)$ edges in a complete graph. When a sampled configuration lies beyond $l_{\max}$ from any node, it is relocated to a position at a distance $l_{\min}$ along the direction toward the nearest neighbor, increasing the likelihood of establishing connections with nearby nodes. This sampling procedure is detailed in \Cref{alg:sample_free}, where $n_{\mathrm{sample}}$ denotes the maximum number of sampling attempts and $v_{\mathrm{null}}$ represents an invalid sampled node.

\begin{algorithm}
    \caption{Node Sampling (\method{SampleFree})}
    \label{alg:sample_free}
    \For {$i = 1 : n_{\mathrm{sample}}$} {
        $v_{\mathrm{rand}} \gets \method{Sample}()$ \\
        $v_{\mathrm{nearest}} \gets \method{Nearest}(\mathset{V}, v_{\mathrm{rand}})$ \\
        \If {$\| v_{\mathrm{rand}} - v_{\mathrm{nearest}} \| < l_{\min}$} {
            \Continue
        }
        \If{$\| v_{\mathrm{rand}} - v_{\mathrm{nearest}} \| > l_{\max}$}{
            $v_{\mathrm{rand}} \gets
              v_{\mathrm{nearest}} +
              l_{\min} \dfrac{v_{\mathrm{rand}} - v_{\mathrm{nearest}}}
                    {\| v_{\mathrm{rand}} - v_{\mathrm{nearest}} \|}$
        }
        \If {$\method{CollisionFree}(v_{\mathrm{{rand}}})$} {
            \Return $v_{\mathrm{rand}}$
        }
    }
    \Return $v_{\mathrm{null}}$
\end{algorithm}

The resulting structure, in which edges exist only between nodes separated by distances within the interval $[l_{\min}, l_{\max}]$, is known as an annulus graph~\citep{lichev2024annulus}. In such graphs, the number of outgoing edges per node---referred to as the out-degree---is inherently bounded. This bound follows from a geometric sphere-packing argument: specifically, the maximum number of non-overlapping spheres of radius $l_{\min}/2$ that can be packed into a larger sphere of radius $l_{\max}$\footnote{Two non-overlapping spheres of radius $l_{\min}/2$ ensure that the distance between their centers is at least $l_{\min}$.}.
This yields the following conservative upper bound on the out-degree:
\begin{equation}
    | \delta^+(v) | \leq 8\left(\frac{l_{\max}}{l_{\min}}\right)^3, \quad \forall v \in \mathset{V},
\end{equation}
where $\delta^+(v)$ is the set of outgoing edges of $v$. While tighter bounds exist (see~\cite{conway2013sphere}), this estimate provides a sufficiently conservative, closed-form upper bound.

We introduce the \acl{rrag} (\ac{rrag}), a variant of the RRG that incorporates both minimum and maximum edge length constraints. The graph construction procedure is described in \Cref{alg:rrag}. The auxiliary function \method{Near}$(\mathset{V}, v, l)$ returns the subset of vertices in $\mathset{V}$ within distance $l$ of vertex $v$, computed efficiently using the \texttt{nanoflann} implementation of the $k$-d tree~\citep{bentley1975multidimensional, blanco2014nanoflann}. The sampling routine is detailed in \Cref{alg:sample_free} and the collision-checking procedure between two nodes is discussed in \Cref{subsec:collision-check}. For completeness, we also implement two related planners: RRAT (\Cref{alg:rrat}), analogous to RRT in~\citep{bircher2016receding}, and RRAT* (\Cref{alg:rrat-star}), which parallels RRT* in~\citep{schmid2020efficient}. We denote $\method{Cost}(v)$ as the lowest cost from the start node to node $v$.

\begin{algorithm}
    \caption{RRAG}
    \label{alg:rrag}
    \KwIn{graph $\mathset{G} = (\mathset{V}, \mathset{E})$}
    \For {$i = 1 : n_{\mathrm{new}}$} {
        $v_{\mathrm{new}} \gets \method{SampleFree}()$ \\
        \If{$v_{\mathrm{new}} = v_{\mathrm{null}}$} {
            \Continue
        }
        $v_{\mathrm{nearest}} \gets \method{Nearest}(\mathset{V}, v_{\mathrm{new}})$ \\
        \If{$\method{CollisionFree}(v_{\mathrm{nearest}}, v_{\mathrm{new}})$} {
            $\mathset{V}_{\mathrm{near}} \gets \method{Near}(\mathset{V}, v_{\mathrm{new}}, l_{\max})$ \\
            \For{$v_{\mathrm{near}} \in \mathset{V}_{\mathrm{near}}$} {
                \If{$\method{CollisionFree}(v_{\mathrm{near}}, v_{\mathrm{new}})$} {
                    $\mathset{E} \gets \mathset{E} \cup \{(v_{\mathrm{near}}, v_{\mathrm{new}})\}$ \\
                }
                \If{$\method{CollisionFree}(v_{\mathrm{new}}, v_{\mathrm{near}})$} {
                    $\mathset{E} \gets \mathset{E} \cup \{(v_{\mathrm{new}}, v_{\mathrm{near}})\}$ \\
                }
            }
            $\mathset{V} \gets \mathset{V} \cup \{v_{\mathrm{new}}\}$ \\
        }
    }
    \Return $\mathset{G} = (\mathset{V}, \mathset{E})$
\end{algorithm}

\begin{algorithm}
    \caption{RRAT}
    \label{alg:rrat}
    \KwIn{graph $\mathset{G} = (\mathset{V}, \mathset{E})$}
    \For {$i = 1 : n_{\mathrm{new}}$} {
        $v_{\mathrm{new}} \gets \method{SampleFree}()$ \\
        \If{$v_{\mathrm{new}} = v_{\mathrm{null}}$} {
            \Continue
        }
        $v_{\mathrm{nearest}} \gets \method{Nearest}(\mathset{V}, v_{\mathrm{new}})$ \\
        \If{$\method{CollisionFree}(v_{\mathrm{nearest}}, v_{\mathrm{new}})$} {
            $\mathset{E} \gets \mathset{E} \cup \{(v_{\mathrm{nearest}}, v_{\mathrm{new}})\}$ \\
            $\mathset{V} \gets \mathset{V} \cup \{v_{\mathrm{new}}\}$ \\
        }
    }
    \Return $\mathset{G} = (\mathset{V}, \mathset{E})$
\end{algorithm}

\begin{algorithm}
    \caption{RRAT*}
    \label{alg:rrat-star}
    \KwIn{graph $\mathset{G} = (\mathset{V}, \mathset{E})$}
    \For {$i = 1 : n_{\mathrm{new}}$} {
        $v_{\mathrm{new}} \gets \method{SampleFree}()$ \\
        \If{$v_{\mathrm{new}} = v_{\mathrm{null}}$} {
            \Continue
        }
        $v_{\mathrm{nearest}} \gets \method{Nearest}(\mathset{V}, v_{\mathrm{new}})$ \\
        \If{$\method{CollisionFree}(v_{\mathrm{nearest}}, v_{\mathrm{new}})$} {
            $v^* \gets v_{\mathrm{nearest}}$\\
            $c^* \gets \method{Cost}(v^*) + c(v^*, v_{\mathrm{new}})$ \\
            $\mathset{V}_{\mathrm{near}} \gets \method{Near}(\mathset{V}, v_{\mathrm{new}}, l_{\max})$ \\
            \For{$v_{\mathrm{near}} \in \mathset{V}_{\mathrm{near}}$} {
                \If{$\method{CollisionFree}(v_{\mathrm{near}}, v_{\mathrm{new}})$} {
                    $c \gets \method{Cost}(v_{\mathrm{near}}) + c(v_{\mathrm{near}}, v_{\mathrm{new}})$ \\
                    \If{$c < c^*$} {
                        $v^* \gets v_{\mathrm{near}}$, $c^* \gets c$ \\
                    }
                }
            }
            $\mathset{E} \gets \mathset{E} \cup \{(v^*, v_{\mathrm{new}})\}$ \\
            $\mathset{V} \gets \mathset{V} \cup \{v_{\mathrm{new}}\}$ \\
            \For{$v_{\mathrm{near}} \in \mathset{V}_{\mathrm{near}}$} {
                \If{$\method{CollisionFree}(v_{\mathrm{new}}, v_{\mathrm{near}})$} {
                    \If{$\method{Cost}(v_{\mathrm{new}}) + c(v_{\mathrm{new}}, v_{\mathrm{near}}) < \method{Cost}(v_{\mathrm{near}})$} {
                        $\mathset{E} \gets \mathset{E} \setminus \{(\cdot, v_{\mathrm{near}})\} \ \cup \{(v_{\mathrm{new}}, v_{\mathrm{near}})\}$ \\
                    }
                }
            }
        }
    }
    \Return $\mathset{G} = (\mathset{V}, \mathset{E})$
\end{algorithm}

\subsection{Collision Checking Between Nodes}
\label{subsec:collision-check}

To determine whether a trajectory between two nodes is feasible, we must verify that it lies entirely within the free space $\mathset{X}_{\mathrm{free}}$, i.e., the robot avoids collisions with obstacles and (if applicable) self-collisions along the path. Let $\clearance \colon \mathset{X} \to \mathbb{R}$ denote the clearance function, which assigns to each configuration $x \in \mathset{X}$ the signed Euclidean distance between the robot at $x$ and the nearest obstacle or self-collision, whichever is closer. A configuration $x$ is collision-free if and only if $\clearance(x) > 0$.

When connecting two configurations via straight-line interpolation in $\mathset{X}$, the actual path traced by points on the robot depends on the robot's geometry and kinematics. The following lemma bounds the length of such paths:

\begin{lemma}[Path length bound \citep{barraquand1997random}]
\label{lem:rho-bound}
For any bounded robot, there exists a constant $\eta > 0$, such that when the robot moves along the straight-line path between any two configurations $x_1$ and $x_2$, every point on the robot traces a curve of length at most $\eta \| x_1 - x_2 \|_2$ in the workspace.
\end{lemma}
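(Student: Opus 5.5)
The plan is to make the configuration space concrete and show that the map from configurations to workspace points is Lipschitz, uniformly over all points of the robot. We model the robot as a finite chain of rigid bodies. A configuration $x \in \mathset{X} \subset \mathbb{R}^n$ collects a base translation, a base orientation parametrized by coordinates (e.g.\ Euler angles or a local chart), and joint angles. For a material point $a$ of the robot, let $P(x, a)$ denote its workspace position at configuration $x$. Along the straight line $x(t) = x_1 + t(x_2 - x_1)$, $t \in [0,1]$, the traced curve $t \mapsto P(x(t), a)$ has length
\begin{equation}
    \int_0^1 \left\| \partial_x P(x(t), a)\,(x_2 - x_1) \right\|_2 \, dt \le \sup_{x, a} \left\| \partial_x P(x,a) \right\|_{2} \, \| x_1 - x_2 \|_2 .
\end{equation}
So it suffices to show that this supremum of the Jacobian norm is finite, and then take $\eta$ to be that supremum (or $1$ if it happens to be smaller, so that $\eta > 0$).

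To bound the Jacobian, we write $P$ explicitly via forward kinematics as a product of transforms:
\begin{equation}
    P(x,a) = T_{\mathrm{base}}(x) \, T_1(x) \cdots T_k(x) \, a_{\mathrm{local}} .
\end{equation}
Each factor is either a translation, which is linear in $x$, or a rotation built from sines and cosines of coordinates of $x$. We then differentiate with the product rule. Each partial derivative of a rotation factor is bounded in operator norm by a constant, since its entries are sines and cosines. Each rotation factor itself has norm $1$. A derivative with respect to an angle moves a point by at most its lever arm, which is the distance from the rotation axis. The robot is bounded, so every lever arm is at most the robot's diameter plus the sum of its link lengths, which we call $R$. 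A derivative with respect to a translational coordinate has norm $1$. Summing over at most $n$ coordinates gives $\|\partial_x P\| \le \sqrt{n}\,\max(1, c R)$, where $c$ is a constant depending only on the orientation parametrization. This bound is uniform in $a$ because $a$ ranges over a bounded body, and uniform in $x$ because the trigonometric entries are globally bounded.

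The main obstacle is the uniformity in $x$ for the orientation parametrization. Trigonometric charts such as Euler angles have globally bounded derivatives, so the argument above goes through. If instead the orientation were parametrized by something like an unnormalized quaternion, the derivatives could blow up near the origin. We would handle this by relying on the boundedness of $\mathset{X}$: since $\mathset{X}$ is bounded, we can restrict to a compact set, or its closure, on which $P$ is $C^1$. The extreme value theorem then gives a finite supremum of the continuous function $\|\partial_x P(x,a)\|$ over the compact set $\overline{\mathset{X}} \times \text{body}$. We would present this compactness argument as the general proof, with the explicit estimate for rigid-body chains as an illustration. This matches the statement as cited from \citep{barraquand1997random}.
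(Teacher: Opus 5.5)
The paper gives no proof of this lemma; it imports it from Barraquand et al., so there is no internal argument to compare against. Your explicit forward-kinematics estimate is a correct proof in the setting the lemma implicitly assumes, namely translational coordinates plus angle coordinates such as Euler angles and revolute or prismatic joints. In that setting each angle coordinate contributes a unit angular velocity about an axis through a point of the chain, so its Jacobian column has norm at most the lever arm $R$. Translational columns have norm at most $1$, and $\|Jv\|_2 \le \max_i \|Je_i\|_2 \, \|v\|_1 \le \sqrt{n}\max(1,R)\,\|v\|_2$ uniformly in $x$ and $a$.

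The gap is in the compactness argument that you plan to present as the general proof. First, a segment between two points of $\mathset{X}$ need not stay in $\overline{\mathset{X}}$ unless $\mathset{X}$ is convex, so the supremum has to be taken over the closed convex hull. More seriously, boundedness of $\mathset{X}$ does not give you a compact set containing all such segments on which $P$ is $C^1$. In the very case you raise, the lemma is actually false. Take $\mathset{X} = \{x \in \mathbb{R}^4 : 0 < \|x\|_2 < 1\}$ with orientation given by the unit quaternion $x/\|x\|_2$, and endpoints $x_1 = (\epsilon,0,0,0)$ and $x_2 = (-\epsilon,\delta,0,0)$ with $0 < \delta \ll \epsilon$. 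The segment avoids the origin, yet along it the body rotates by nearly $2\pi$ about a fixed axis. A point at distance $r$ from that axis therefore travels about $2\pi r$, while $\|x_1 - x_2\|_2 \approx 2\epsilon$, and letting $\epsilon \to 0$ defeats any $\eta$.

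So no argument from boundedness alone can work, because the lemma is a statement about the parametrization. The correct general version assumes that $P$ is $C^1$ (or Lipschitz in $x$, uniformly in $a$) on a neighborhood of the closed convex hull of $\mathset{X}$ times the closed body. Under that assumption your extreme-value argument goes through. Either state that hypothesis explicitly, or make the explicit trigonometric-chart estimate the proof rather than the illustration.
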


This bound allows us to derive a sufficient condition under which linear interpolation between two collision-free configurations remains entirely within $\mathset{X}_{\mathrm{free}}$. This concept is formalized as adjacency.

\begin{definition}[Adjacency \citep{barraquand1997random}]
\label{def:adjacency}
For a self-collision-free robot, two configurations $x_1, x_2 \in \mathset{X}_{\mathrm{free}}$ with $\clearance(x_1), \clearance(x_2) > 0$ are adjacent if
\begin{equation}
   \eta \, \| x_1 - x_2 \|_2 < \max\{\clearance(x_1), \clearance(x_2)\}.
   \label{eq:adjacency}
\end{equation}
If self-collisions are possible, the right-hand side should be scaled by $1/2$.
\end{definition}

This sufficient condition leads to the following guarantee for collision-free edges in annulus graphs.
\begin{theorem}
\label{thm:safe-connect}
For robots without self-collision risks, any attempted trajectory verification in \ac{rrag} between $x_1, x_2 \in \mathset{X}_{\mathrm{free}}$ is collision-free if
$\clearance(x_1) > \eta \, l_{\max}$ or $\clearance(x_2) > \eta \, l_{\max}$. 
When the distance between $x_1$ and $x_2$ reaches the lower bound $l_{\min}$, a collision-free path is then guaranteed if $\clearance(x_1) > \eta \, l_{\min}$ or
$\clearance(x_2) > \eta \, l_{\min}$.
\end{theorem}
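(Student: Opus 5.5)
The plan is to reduce the statement to the adjacency condition of \Cref{def:adjacency} and then invoke the distance constraints that \ac{rrag} enforces on every attempted connection. The first step is to make precise why adjacency is sufficient, since the definition itself is only stated as a condition. By \Cref{lem:rho-bound}, as the robot moves along the straight-line interpolation from $x_1$ to $x_2$, every point of the robot traces a workspace curve of length at most $\eta\|x_1-x_2\|_2$. In particular, no point of the robot is ever farther than $\eta\|x_1-x_2\|_2$ from its position at $x_1$, and likewise from its position at $x_2$.

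The second step is a clearance argument. Assume without loss of generality that $\clearance(x_1) > \eta\|x_1-x_2\|_2$. Then every point of the robot at an intermediate configuration lies within distance $\eta\|x_1-x_2\|_2 < \clearance(x_1)$ of the corresponding point of the robot at $x_1$. At $x_1$, each robot point is at least $\clearance(x_1)$ from any obstacle. By the triangle inequality, each intermediate robot point is therefore at positive distance from every obstacle, so the interpolated motion stays in $\mathset{X}_{\mathrm{free}}$. The hypothesis "without self-collision risks" is what allows us to use $\clearance$ undivided. In the self-collision case, two robot points could each move toward the other, which is why \Cref{def:adjacency} halves the right-hand side. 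I expect this triangle-inequality step to be the only real obstacle. Stated carefully: obstacle distance is $1$-Lipschitz in the workspace point, and the displacement of each robot point is bounded by the length of the curve it traces.

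The third step plugs in the \ac{rrag} geometry. In \Cref{alg:rrag}, every attempted edge verification is made either to $v_{\mathrm{nearest}}$ or to a vertex returned by $\method{Near}(\mathset{V}, v_{\mathrm{new}}, l_{\max})$. In the latter case $\|x_1-x_2\|_2\le l_{\max}$ by construction. For the nearest neighbour, \Cref{alg:sample_free} guarantees that the distance is at most $l_{\max}$, either directly or after relocation to distance $l_{\min}\le l_{\max}$.

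Hence $\eta\|x_1-x_2\|_2 \le \eta l_{\max} < \max\{\clearance(x_1),\clearance(x_2)\}$ whenever either clearance exceeds $\eta l_{\max}$. The pair is then adjacent, and the second step gives a collision-free trajectory. For the second claim, at distance exactly $l_{\min}$ we have $\eta\|x_1-x_2\|_2=\eta l_{\min}$, so the weaker clearance condition $\clearance(x_i)>\eta l_{\min}$ already yields adjacency by the same argument.
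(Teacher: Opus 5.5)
Your proof is correct and follows the same route as the paper's. Both reduce the claim to the adjacency condition of \Cref{def:adjacency} and use the fact that every \ac{rrag} connection attempt is between configurations at distance at most $l_{\max}$ (exactly $l_{\min}$ for the second claim). Your added steps, deriving the sufficiency of adjacency from \Cref{lem:rho-bound} via the triangle inequality and checking the nearest-neighbour edge against \Cref{alg:sample_free}, fill in details the paper takes for granted, the former by citing the adjacency result.
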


\begin{proof}
In \ac{rrag}, edges are only formed between nodes satisfying $\|x_1 - x_2\|_2 \leq l_{\max}$. To ensure that the linear path lies entirely in $\mathset{X}_{\mathrm{free}}$, it suffices to satisfy the adjacency condition, which holds whenever
\begin{equation}
\eta\,\|x_1 - x_2\|_2 \leq \eta\, l_{\max} < \max\{\clearance(x_1), \clearance(x_2)\}.
\end{equation}
Requiring $\clearance(x_1) > \eta\, l_{\max}$ or $\clearance(x_2) > \eta\, l_{\max}$ ensures this inequality, yielding a conservative sufficient condition.

Similarly, when $\|x_1 - x_2\|_2 = l_{\min}$, we require
\begin{equation}
\eta\,\|x_1 - x_2\|_2 = \eta\, l_{\min} < \max\{\clearance(x_1), \clearance(x_2)\}.
\end{equation}
Imposing $\clearance(x_1) > \eta\, l_{\min}$ or $\clearance(x_2) > \eta\, l_{\min}$ guarantees this condition. \hfill \qedsymbol
\end{proof}

\begin{figure}[!t]
    \centering
    \includegraphics[width=0.95\columnwidth]{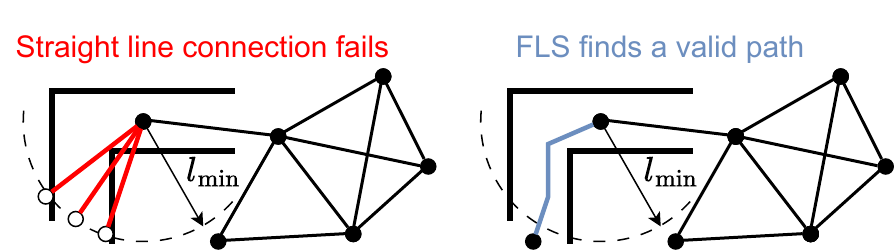}
    \caption{\textbf{The necessity of the FLS in narrow passages.} The FLS can find a path through a narrow ``L''-shaped corridor when a straight-line connection fails.}
    \label{fig:local-planner}
\end{figure}

Therefore, in narrow or heavily cluttered workspaces, where clearances are typically small, $l_{\min}$ and $l_{\max}$ must be reduced accordingly to guarantee sufficient graph coverage. However, this adjustment results in a denser graph, leading to highly overlapping information gains and significantly increased computational burden. If not properly tuned, the robot may fail to establish new edges, stalling graph expansion. To address this, we introduce a fallback local sampling-based planner (FLS) to recover valid transitions in cluttered or geometrically complex regions where straight-line interpolation would otherwise fail, as illustrated in \Cref{fig:local-planner}. The FLS is integrated into all three graph construction methods (RRAG, RRAT, RRAT*). Specifically, we employ the standard RRT* algorithm, implemented via the OMPL library~\citep{sucan2012open}, to explore alternative collision-free paths locally. The FLS operates within an axis-aligned bounding box that tightly encloses the start and goal configurations, expanded by a margin of $l_{\max}$ in each dimension. It is invoked with a planning time budget to minimize path length. Thanks to the probabilistic completeness of RRT*, the planner will eventually find a valid path if one exists, given sufficient time.

\subsection{Connectivity of Annulus Graph}

While \citet{galhotra2019connectivity} provided initial theoretical insights into the connectivity of random annulus graphs, their analysis primarily focuses on point sets sampled on a unit sphere (e.g., $\{x \in \mathset{X} \mid \|x\|_2 = 1\}$) and applies only to specific connectivity regimes. More recent work by \citet{lichev2024annulus} highlights that the structural properties of annulus graphs are sensitive to the choice of $l_{\min}$ and $l_{\max}$. To ensure connectivity among nodes, the annular region used for establishing edges must be sufficiently thick. The following theorem formalizes this requirement under natural geometric conditions: the maximum connection radius should be at least twice the minimum separation distance, i.e., $l_{\max} \geq 2 l_{\min}$.

\begin{theorem}
\label{thm:graph-connectivity}
Let $\mathset{X}_{\mathrm{free}}$ be a connected subset of the configuration space, and let $\mathset{V} \subset \mathset{X}_{\mathrm{free}}$ be a finite set of configurations satisfying:
\begin{enumerate}
    \item $l_{\min}$-packing: Nodes in $\mathset{V}$ are mutually separated by at least $l_{\min}$, i.e., $        \forall v_i, v_j \in \mathset{V},\ v_i \ne v_j \implies \|v_i - v_j\| \geq l_{\min}$;
    \item $l_{\min}$-covering: Every point in $\mathset{X}_{\mathrm{free}}$ lies within distance $l_{\min}$ of some node in $\mathset{V}$, i.e., $
    \forall x \in \mathset{X}_{\mathrm{free}},\ \exists v \in \mathset{V} \text{ such that } \|x - v\| \leq l_{\min}$.
\end{enumerate}
Let $\mathset{G} = (\mathset{V}, \mathset{E})$ be the undirected annulus graph where two nodes with a distance between $l_{\min}$ and $l_{\max}$ are connected if there exists a collision-free path between them.
Then, $\mathset{G}$ is guaranteed to be connected for any such $\mathset{V}$ if and only if
\begin{equation}
    l_{\max} \geq 2 l_{\min}.
\end{equation}
\end{theorem}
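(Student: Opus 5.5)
The plan has two directions: sufficiency when $l_{\max} \geq 2l_{\min}$, and a counterexample construction when $l_{\max} < 2l_{\min}$.

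\emph{Sufficiency ($l_{\max} \geq 2 l_{\min}$).} Take any two nodes $v_a, v_b \in \mathset{V}$. Because $\mathset{X}_{\mathrm{free}}$ is connected, which we interpret as path-connected as is natural for configuration spaces, there is a continuous collision-free curve $\gamma\colon[0,1]\to\mathset{X}_{\mathrm{free}}$ with $\gamma(0)=v_a$ and $\gamma(1)=v_b$. By the $l_{\min}$-covering assumption, every $\gamma(t)$ has some node within distance $l_{\min}$. Define the set-valued assignment $t \mapsto N(t) = \{v \in \mathset{V} : \|\gamma(t)-v\| \leq l_{\min}\}$. Since $\mathset{V}$ is finite and distance is continuous, a compactness argument lets us partition $[0,1]$ into finitely many intervals $[t_i, t_{i+1}]$ such that a single node $u_i$ covers every point of $\gamma([t_i,t_{i+1}])$, with consecutive intervals sharing the endpoint $t_{i+1}$. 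Concretely, for each node $v$ the set $\{t : \|\gamma(t)-v\|\leq l_{\min}\}$ is closed, these finitely many closed sets cover $[0,1]$, and a standard chaining over closed covers of an interval yields the sequence. We may take $u_0=v_a$ and $u_m=v_b$.

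Consecutive covering nodes satisfy
\begin{equation}
\|u_i-u_{i+1}\| \leq \|u_i-\gamma(t_{i+1})\| + \|\gamma(t_{i+1})-u_{i+1}\| \leq 2l_{\min} \leq l_{\max},
\end{equation}
and by packing $\|u_i-u_{i+1}\|\geq l_{\min}$ whenever $u_i \neq u_{i+1}$. So each distinct consecutive pair lies in the annulus. For the edge to exist, we still need a collision-free path between $u_i$ and $u_{i+1}$. We obtain one by concatenating $\gamma$ restricted to the relevant intervals, which stays inside $\mathset{X}_{\mathrm{free}}$, with connecting segments from each node to the curve. Here lies the main obstacle: the segment from $u_i$ to $\gamma(t)$ need not be collision-free, so the ``collision-free path'' clause has to be justified. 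I would use the freedom in choosing $u_i$ as the node whose covering ball meets the curve, and require that the node and the covered point be linked in free space. Interpreting covering as ``within distance $l_{\min}$ via free space'', which is the natural reading for RRAG sampling, resolves this. Alternatively, we can use the subpath of $\gamma$ itself whenever $u_i$ lies on $\gamma$, after reparametrising $\gamma$ to pass through covering nodes. I would state this interpretation explicitly. Chaining the resulting edges gives a path from $v_a$ to $v_b$ in $\mathset{G}$, so $\mathset{G}$ is connected.

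\emph{Necessity ($l_{\max} < 2l_{\min}$).} Construct a counterexample: let $\mathset{X}_{\mathrm{free}}$ be a segment, or a thin convex region, of length $L$. Place nodes so that packing and covering hold but one gap between consecutive nodes has length $d$ with $l_{\max} < d \leq 2l_{\min}$. Covering still holds, since the midpoint of the gap is at distance $d/2 \leq l_{\min}$ from a node. Packing holds if the other spacings are chosen in $[l_{\min}, 2l_{\min}]$. Every pair of nodes on opposite sides of the gap is at distance at least $d > l_{\max}$, so no edge crosses the gap and $\mathset{G}$ is disconnected. This shows the bound is tight, completing the ``if and only if''.
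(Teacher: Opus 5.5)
Your argument is essentially the paper's. For sufficiency, the paper argues by contradiction. It splits $\mathcal{V}$ into two classes with no edges between them and follows a free path $\tau$ from one class to the other. Continuity then gives a point covered by a node of each class, and the triangle inequality bounds their distance by $2l_{\min}\le l_{\max}$. This needs only that two closed sets covering $[0,1]$ must meet. Your direct chaining instead produces an explicit graph path along $\gamma$. (Your claim that $[0,1]$ splits into finitely many intervals each covered by a single node can fail for an oscillating $\gamma$. This is harmless, because all you actually use is that consecutive preimage sets intersect, and the closed-cover chaining you cite gives that.) For necessity, the paper uses exactly the two-node case $\mathcal{X}_{\mathrm{free}}=[0,d]$, $\mathcal{V}=\{0,d\}$ with $l_{\max}<d<2l_{\min}$. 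If you use a thin region instead of a segment, keep every spacing strictly below $2l_{\min}$ and the width small, since off-axis points near a gap's midpoint lie farther than $d/2$ from the nodes.

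The step to fix is your ``main obstacle'', which is not an obstacle. The edge condition in the statement only asks that some collision-free path exist between the two nodes. Since $u_i,u_{i+1}\in\mathcal{V}\subset\mathcal{X}_{\mathrm{free}}$ and $\mathcal{X}_{\mathrm{free}}$ is path-connected (the very fact you used to obtain $\gamma$), such a path always exists. The paper relies on exactly this, implicitly, when it asserts that a missing edge forces $\|v_a-v_b\|>l_{\max}$. So do not reinterpret covering as ``within $l_{\min}$ through free space''. That strengthens the hypothesis and so proves a weaker theorem than the one stated. It would only matter under a stricter, local notion of edge feasibility, which the theorem does not impose. Your reparametrisation alternative does not work in any case: reparametrising $\gamma$ leaves its image unchanged, and the covering nodes need not lie on it.
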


\begin{proof}
We first prove the sufficiency condition by contradiction. Suppose $l_{\max} \geq 2 l_{\min}$, but $\mathset{G}$ is disconnected. Then the vertex set $\mathset{V}$ can be partitioned into two non-empty, disjoint subsets $\mathset{V}_a$ and $\mathset{V}_b$.

Since $\mathset{X}_{\mathrm{free}}$ is connected, there exists a continuous path $\tau: [0,1] \to \mathset{X}_{\mathrm{free}}$ connecting any $v_a \in \mathset{V}_a$ to $v_b \in \mathset{V}_b$. The absence of edges implies that $\|v_a - v_b\| > l_{\max}$ for all such pairs (the lower bound $\|v_a - v_b\| \geq l_{\min}$ is satisfied due to the packing condition).

By the $l_{\min}$-covering property, $\mathset{X}_{\mathrm{free}}$ is contained in the union of closed balls of radius $l_{\min}$ centered at nodes in $\mathset{V}$:
\begin{equation}
    \mathset{X}_{\mathrm{free}} \subseteq \bigcup_{v \in \mathset{V}} \mathset{B}(v, l_{\min}).
\end{equation}
Define the covered regions:
\begin{equation}
    \mathset{X}_a = \bigcup_{v_a \in \mathset{V}_a} \mathset{B}(v_a, l_{\min}), \quad
    \mathset{X}_b = \bigcup_{v_b \in \mathset{V}_b} \mathset{B}(v_b, l_{\min}).
\end{equation}
Then $\mathset{X}_{\mathrm{free}} \subseteq \mathset{X}_a \cup \mathset{X}_b$, and since $\tau(0) \in \mathset{X}_a$, $\tau(1) \in \mathset{X}_b$, continuity ensures the existence of a point $\tilde{x} = \tau(t)$ for some $t \in (0,1)$ lying on the boundary between $\mathset{X}_a$ and $\mathset{X}_b$.

At this boundary point, there exist nodes $\tilde{v}_a \in \mathset{V}_a$ and $\tilde{v}_b \in \mathset{V}_b$ such that $\|\tilde{x} - \tilde{v}_a\| \leq l_{\min}$ and $\|\tilde{x} - \tilde{v}_b\| \leq l_{\min}$. By the triangle inequality,
\begin{equation}
    \|\tilde{v}_a - \tilde{v}_b\| \leq \|\tilde{x} - \tilde{v}_a\| + \|\tilde{x} - \tilde{v}_b\| \leq 2l_{\min}.
\end{equation}
Given $l_{\max} \geq 2l_{\min}$, it follows that $\|\tilde{v}_a - \tilde{v}_b\| \leq l_{\max}$, which contradicts the assumption of disconnection. Hence, $\mathset{G}$ is connected.

For necessity, consider a counterexample in one dimension. Let $\mathset{X}_{\mathrm{free}} = [0, d]$ with $l_{\max} < d < 2l_{\min}$, and define $\mathset{V} = \{0, d\}$. The packing condition holds because $\|0 - d\| = d > l_{\max} > l_{\min}$. The covering condition also holds: every point in $[0,d]$ is within $l_{\min}$ of either 0 or $d$, because $d < 2l_{\min}$ implies the balls $\mathset{B}(0, l_{\min})$ and $\mathset{B}(d, l_{\min})$ cover the interval.

However, $\|0 - d\| = d > l_{\max}$, so no edge exists between the two nodes. Thus, $\mathset{G}$ is disconnected despite $\mathset{X}_{\mathrm{free}}$ being connected. This completes the proof. \hfill \qedsymbol
\end{proof}

\Cref{thm:graph-connectivity} requires two conditions. The packing condition is inherently enforced by our sampling procedure (\Cref{alg:sample_free}), which rejects any new sample within distance $l_{\min}$ of existing nodes. The covering condition means that the vertices must be sufficiently densely sampled to ensure effective coverage of the free space, which encourages using a sufficiently large $n_{\mathrm{new}}$ in \Cref{alg:rrag}.

\subsection{Preserving All Orientations}

\begin{figure*}[t]
    \centering
    \includegraphics[width=0.9\textwidth]{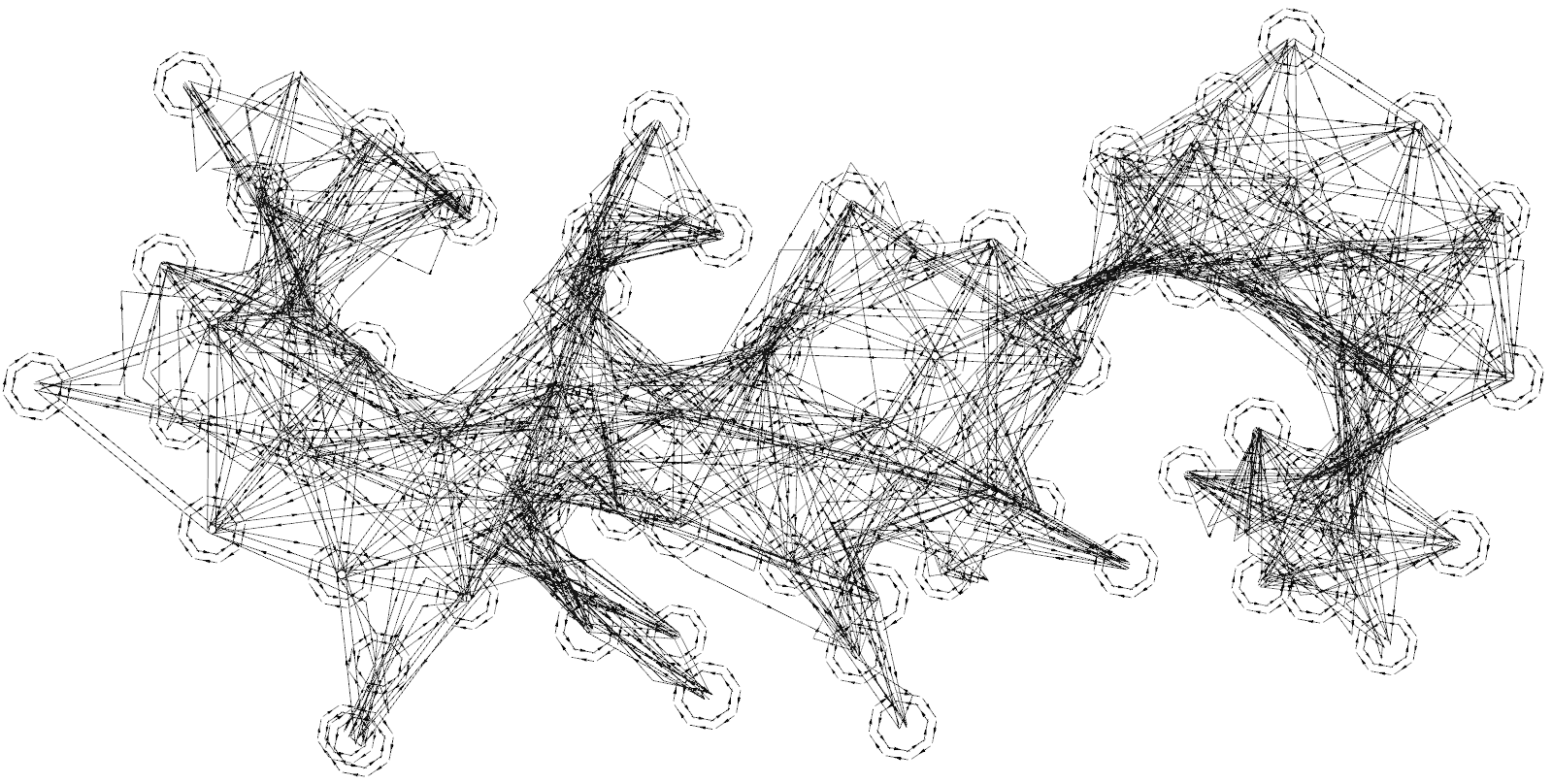}
    \caption{\textbf{Example of the RRAG with all orientations preserved.} Only the edges are visualized. An offset perpendicular to the edge direction is applied to inter-cluster edges to distinguish overlapping connections. For intra-cluster edges, an offset is applied along the node’s yaw direction, forming a polygonal pattern around the actual node. Since FLS can be used to connect nodes, the edges are not strictly straight lines.}
    \label{fig:all-orientation-preserved}
\end{figure*}

In prior work, orientation is typically managed by discretizing the yaw space and selecting a single orientation per sampled position based on the highest gain, while discarding all other candidates~\citep{witting2018history, schmid2020efficient, xu2021autonomous}. Consequently, once a node is selected by the planner, its assigned orientation must be reached through additional turning---even if such rotation is less advantageous than proceeding directly to other nodes. Both RRAT and RRAT* adhere to this strategy.

In contrast, our graph representation provides greater flexibility. For each sampled position, all discrete orientations are retained and treated as distinct nodes. The set of nodes corresponding to different orientations at the same position is referred to as a node cluster. Within each cluster, adjacent orientation nodes are connected by bi-directional edges, provided the transitions are feasible. For connections between different clusters (i.e., inter-cluster edges), connectivity is simplified by retaining only two directed edges (one in each direction) rather than creating a fully connected graph. These edges are constructed by selecting, in each cluster, the node whose orientation is most aligned with the direction of traversal, thereby minimizing unnecessary reorientation during inter-cluster motion. An example of RRAG with all orientations preserved is shown in \Cref{fig:all-orientation-preserved}.

\subsection{Graph Updating}

Prior to each replanning step, the planner updates both node gains and the graph structure. Node gains are recomputed within a local neighborhood of radius $l_{\mathrm{gain}}$ centered at the robot's current position. For RRAT and RRAT*, the best yaw is reselected at each update. Graph structure updates are handled differently depending on the planner. In all cases, edges are revalidated within a local region of radius $l_{\mathrm{edge}}$; any edge that no longer satisfies feasibility constraints is removed. For RRAT, graph updating further includes pruning all child edges of the current root node except the one selected for traversal. For RRAT*, there is a rewiring mechanism that rewires all nodes in the tree relative to the new root. While effective, this step is computationally expensive due to the need for collision checks across numerous node pairs. In contrast, RRAG benefits from its graph structure: since edges are not constrained to form a tree, no additional rewiring is required during replanning. After edge revalidation, the graph expansion process continues as in \Cref{alg:rrag}, \Cref{alg:rrat}, and \Cref{alg:rrat-star}.

Different planners handle replanning slightly differently. For RRAT and RRAT*, which maintain a strict tree structure, this is done by introducing an intermediate node at which replanning is triggered, and then splitting the current edge at this intermediate node. Once the robot reaches this intermediate node, replanning proceeds as discussed, including gain updates, edge revalidation, graph expansion, and path selection. In the RRAG setting, the original edge is retained, and two new edges are added---connecting the start of the current edge to the intermediate node, and the intermediate node to the end of the current edge. Additionally, RRAG attempts to connect the intermediate node to nearby nodes within distance $l_{\max}$ via outgoing edges, enabling the robot to consider alternative paths beyond the original trajectory. The intermediate node is discarded once the robot reaches the next node.

\section{Experiments on Active Perception}
\label{sec:active-perception}
\begin{figure*}[!t]
    \centering
    \includegraphics[width=0.95\textwidth]{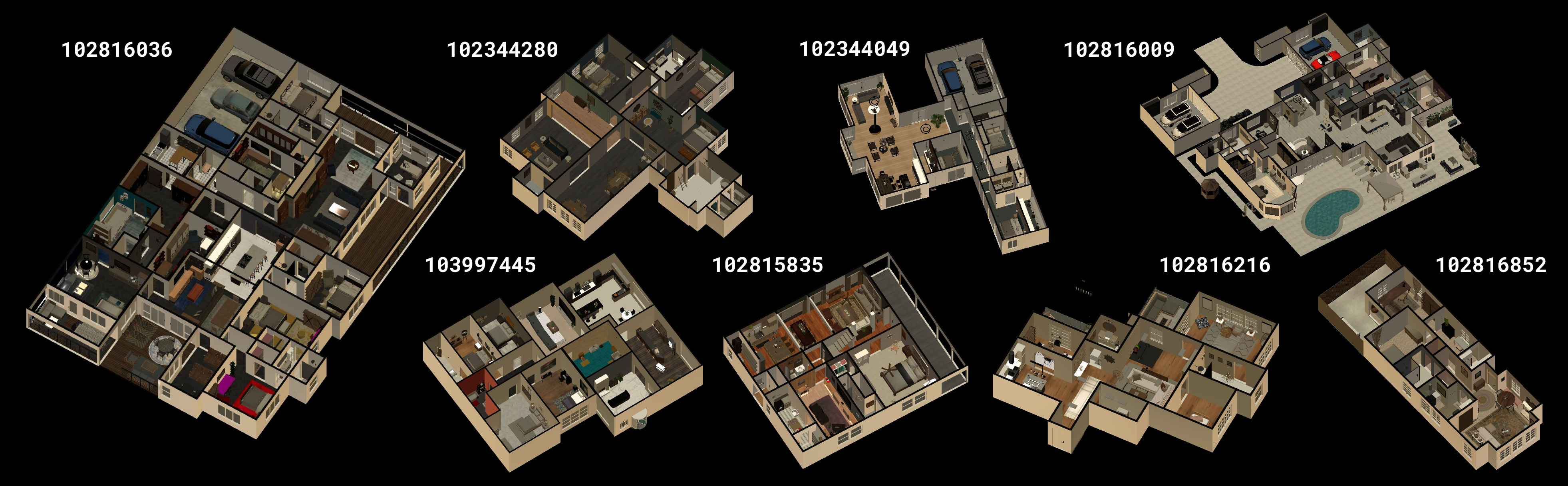}
    \caption{\textbf{Eight \ac{hssd} scenarios used in the active perception simulation environment.} Note that the scenarios are displayed at different scales for visualization purposes.}
    \label{fig:hssd-scenarios}
\end{figure*}

To evaluate the proposed planner, we conduct experiments in a simulated environment using Habitat~\citep{savva2019habitat} with the \ac{hssd} dataset~\citep{khanna2024habitat}, a synthetic collection of diverse 3D environments. We consider eight distinct scenarios, which are visualized along with their IDs in \Cref{fig:hssd-scenarios}.

The robot parameters, including maximum linear and angular velocities, are listed in \Cref{tab:system-constraints}. It moves and receives image data at a fixed timestep. The robot follows a walk-forward-only policy, consistent with standard practices in the Habitat Simulator, where each straight-line motion segment is executed in three phases: (1) rotate to face the target position, (2) translate to the target, and (3) rotate to the desired final orientation\footnote{Other navigation modes are also compatible, as the system only requires a feasible trajectory connecting two viewpoints. For instance, Reeds-Shepp paths can be employed for platforms with a minimum turning radius (e.g., car-like vehicles).}. This strategy enforces forward-facing perception, enhancing environmental awareness during navigation. The onboard camera is configured with a resolution of $640 \times 480$ pixels, a field of view of $90^\circ \times 74^\circ$, and an effective range of 3 meters. Upon receiving sensor input, we use Voxblox~\citep{oleynikova2017voxblox} with a voxel resolution of \SI{0.1}{\meter} to construct an ESDF-based map for gain evaluation and collision checking. The robot's collision body is modeled as a sphere of radius \SI{0.3}{\meter}, centered at the base frame.

\begin{table}[ht] 
  \centering
  \caption{\textbf{Robot parameters.}}
  \label{tab:system-constraints}
  \begin{tabular}{l|l}
    \toprule
    \textbf{Parameter} & \textbf{Value} \\
    \midrule
    max linear velocity $v_{\max}$ & \SI{0.5}{\meter\per\second} \\
    max angular velocity $\omega_{\max}$ & \SI{1.6}{\radian\per\second} \\ 
    timestep  $\Delta t$ & \SI{0.2}{\second} \\
    move mode & walk forward only \\
    collision body & Ball of radius \SI{0.3}{\meter} \\
    \bottomrule
  \end{tabular}
\end{table}

We consider three representative application scenarios: (i) point collection in an observed area, (ii) surface mapping for 3D reconstruction, and (iii) volumetric exploration. These tasks differ primarily in their respective gain definitions. In the point collection scenario, gain is computed by identifying all observed points within a local neighborhood of each node. The total path gain is defined as the sum of all unique points near the nodes visited along the path. For surface reconstruction, we adopt the surface frontier criterion introduced by~\citet{yoder2016autonomous}. In the volumetric exploration setting, gain corresponds to the number of unknown voxels within the sensor's field of view. For both surface reconstruction and volumetric exploration, inter-node gain correlations are neglected; node-level gains are simply aggregated along the trajectory, following established practice~\citep{witting2018history, schmid2020efficient, xu2021autonomous}. 
	
We further analyze the relationship between the modeled gain $g$ and the realized gain $f$---i.e., the principal objective. In the point collection scenario, the gain estimate is exact, as it accounts for all visible points and captures spatial correlations. For surface reconstruction, the estimate is conservative and constitutes a lower bound, as the surface frontier only considers unknown voxels adjacent to occupied regions as potential surface continuations. In contrast, the volumetric exploration model provides an upper bound, assuming visibility through free space while ignoring occlusions. By evaluating different algorithms under these differing modeling assumptions, we aim to demonstrate the robustness and effectiveness of our proposed algorithm.

We compare our proposed method against several baselines. These include the approaches of \cite{bircher2016receding} and \cite{schmid2020efficient}, which we adapt to use RRAT (\Cref{alg:rrat}) and RRAT* (\Cref{alg:rrat-star}) as their underlying tree representations, respectively. We also include SPT / DEP~\citep{xu2021autonomous}, which is used in conjunction with RRAG (\Cref{alg:rrag}).
For TSP-based planning, we consider both the TSP algorithm described in \Cref{sec:tsp} and the FUEL planner~\citep{zhou2021fuel}, which performs active mapping using only frontier information. For clarity, each planner is named according to its core conceptual contribution. \Cref{tab:planner-description} summarizes the graph structure, planning algorithm, and reference for each method.

\begin{table*}[th] 
\centering
\caption{\textbf{Planner description.}}
\label{tab:planner-description}
\begin{tabular}{c|c|c|c}
\toprule
\textbf{Planner name} & {\textbf{Graph structure}} & \textbf{Planning algorithm} & \textbf{References \& Other names} \\
\midrule
NBS & RRAG & NBS & - \\ 
DBS & RRAG & DBS & - \\ 
SPT & RRAG & SPT & \cite{xu2021autonomous}: DEP \\ 
TSP & RRAG & TSP & \cite{arora2017randomized} \\ 
RRT* & RRAT* & - & \cite{schmid2020efficient} \\ 
RRT & RRAT & - & \cite{bircher2016receding}: NBVP \\
FUEL & - & TSP & \cite{zhou2021fuel} \\
\bottomrule
\end{tabular}
\end{table*}

Parameter settings for RRT, RRT*, SPT, and TSP are selected based on the results in \Cref{sec:benchmark}. 
The parameters of the FUEL planner are tuned to better align with our robot’s sensing capabilities and the characteristics of our experimental environments. In all experiments, we define {frontier nodes} $\mathset{F}$ as those with a substantial number of unobserved voxels within their field of view. This indicates that navigating the robot to such a node is likely to enable further expansion of the graph. This behavior is governed by two parameters: the minimum ratio of unknown voxels within the field of view, and the minimum ratio of currently visible voxels relative to the historical maximum. \Cref{tab:planner-parameters} summarizes the planner parameters.

\begin{table}[ht] 
\centering
\caption{\textbf{Planner parameters.}}
\label{tab:planner-parameters}
    \begin{tabular}{l|l}
    \toprule
    \textbf{Parameter} & \textbf{Value} \\
    \midrule
    num yaw candidates & 8 \\ 
    frontier min unknown ratio & 0.8 \\
    frontier min visible ratio & 0.6 \\
    selection criterion & $g_e$ \\ 
    replan frequency & \SI{1.0}{\hertz} \\
    FLS planning time & \SI{2}{\milli\second} \\
    max new nodes tries $n_{\mathrm{new}}$ & 50 \\
    update radius ($l_{\mathrm{gain}}, l_{\mathrm{edge}}$)  & \SI{5.0}{\meter} \\ 
    \midrule
    NBS beam width $B$ & 1 \\
    NBS search depth $D$  & 100 \\
    \midrule
    DBS beam width $B$ & 100 \\
    DBS search depth $D$ & 100 \\
    \midrule
    TSP top node percentage $\alpha$ & 0.5 \\
    \midrule
    SPT top node percentage $\alpha$ & 1.0 \\
    \bottomrule
    \end{tabular}
\end{table}

The remainder of this section details each application scenario and presents corresponding benchmarking results. We then discuss validation of the pipeline on a physical robot platform using onboard computation.

\begin{figure*}[!t]
    \centering
    \includegraphics[width=0.98\textwidth]{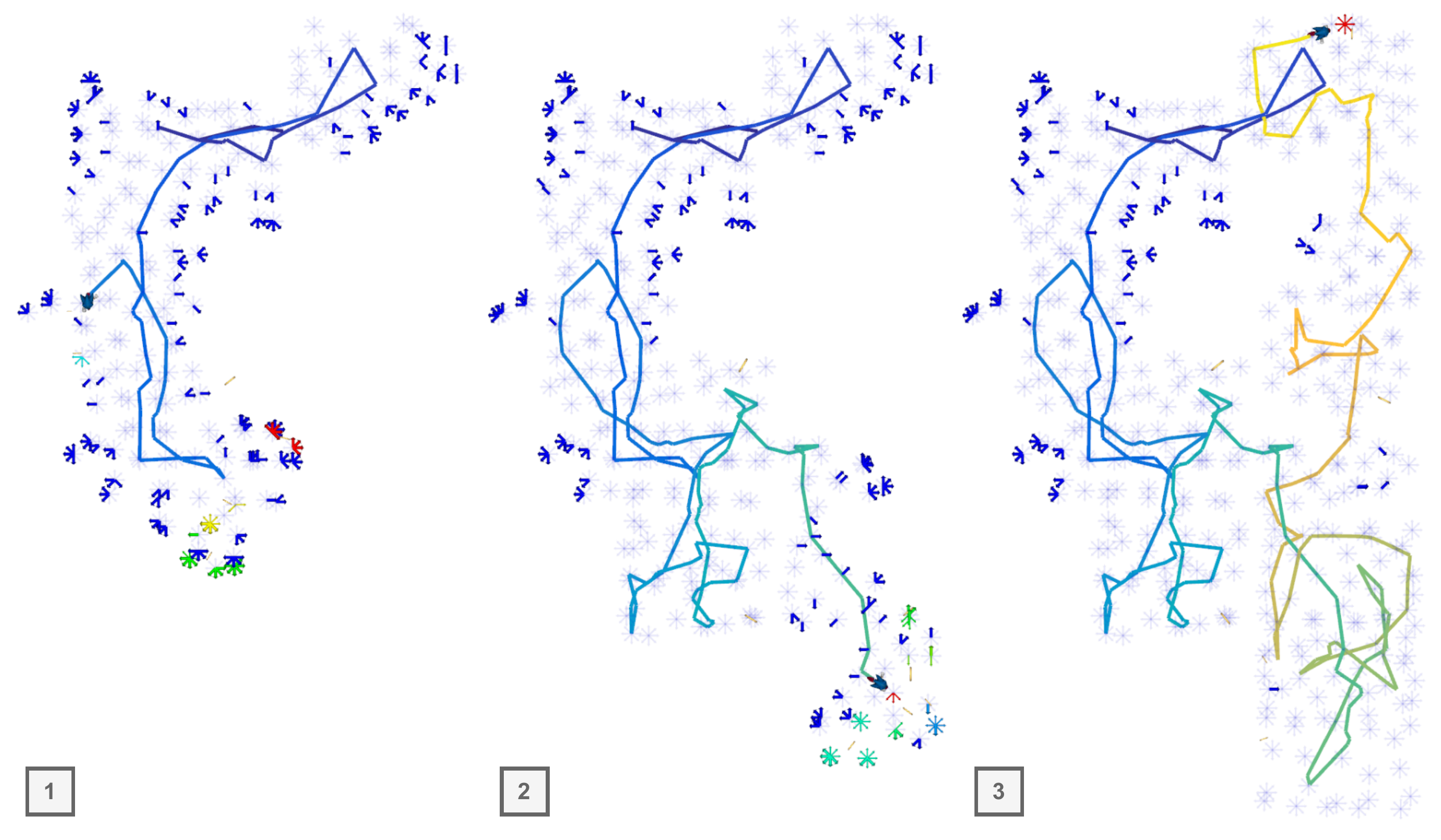}
    \caption{\textbf{Visualization of an agent navigating to collect points represented as golden rings.} These points are initially unknown and become visible only when the corresponding regions are perceived by the agent. The trajectory is visualized using a temporal gradient, transitioning from dark blue (start) to bright yellow (end). Each node is visualized as an arrow, with clusters of nodes at identical positions forming node clusters. Node gains are color-coded, with blue indicating the lowest gain and red the highest. Frontier nodes are emphasized with larger markers, while transparent nodes indicate non-frontier nodes with zero gain.}
    \label{fig:point-collection-example}
\end{figure*}

\subsection{Point Collection}

In the point collection scenario, the robot observes a set of collectible points and must navigate to within a Euclidean distance $l_{\text{col}}$ of each point (orientation-independent) to collect it. Let $\mathset{Z}_{\text{obs}}$ denote the set of observed points. For any node $v$, the collection neighborhood is defined as  
\begin{equation}
\mathset{Z}_{\text{col}}(v) = \left\{ z \in \mathset{Z}_{\text{obs}} \;\middle|\; \|v - z\|_2 \leq l_{\text{col}} \right\}.
\end{equation}
When the robot visits node $v$, all points $z \in \mathset{Z}_{\text{col}}(v)$ are considered collected, and their associated gains $g(z)$ contribute to the cumulative reward.

Because the collection neighborhoods of different nodes may overlap, we impose a non-redundancy constraint: once a point $z$ is collected (i.e., its gain has been accounted for), any future visit to nodes within distance $l_{\text{col}}$ of $z$ yields no additional reward. Formally, for a path $p \in \mathset{P}$, the set of collected points is
\begin{equation}
\mathset{Z}_{\text{col}}(p) = \bigcup_{v \in p} \mathset{Z}_{\text{col}}(v),
\end{equation}
where each point is included only once. The total gain along path $p$ is then given by
\begin{equation}
g(p) = \sum_{z \in \mathset{Z}_{\text{col}}(p)} g(z),
\end{equation}
which complies with the general path gain definition in \Cref{eq:path-gain-general}.

This setup parallels the {object-goal navigation}~\citep{batra2020objectnav} task, albeit without relying on semantic mapping. The core mechanism is similar: after observing a target, the robot must navigate to its vicinity to realize the reward. Consequently, the robot must balance exploration of unobserved regions against exploitation of known collectibles. An example of an agent collecting points is visualized in \Cref{fig:point-collection-example}. For evaluation, points are uniformly sampled within the bounding volume and assigned integer gains $g(z) \in [0, 10]$, with both positions and gains held constant for all algorithms to ensure fair comparison. The collection radius $ l_{\text{col}} $ is set to \SI{1}{\meter}, as in the Habitat ObjectNav Challenge~\citep{habitatchallenge2023}. Accordingly, we set the minimum and maximum distance thresholds to $ l_{\min} = \SI{1}{\meter} $ and $ l_{\max} = \SI{2}{\meter} $, respectively.

\begin{figure*}[!t]
    \centering
    \includegraphics[width=0.96\textwidth]{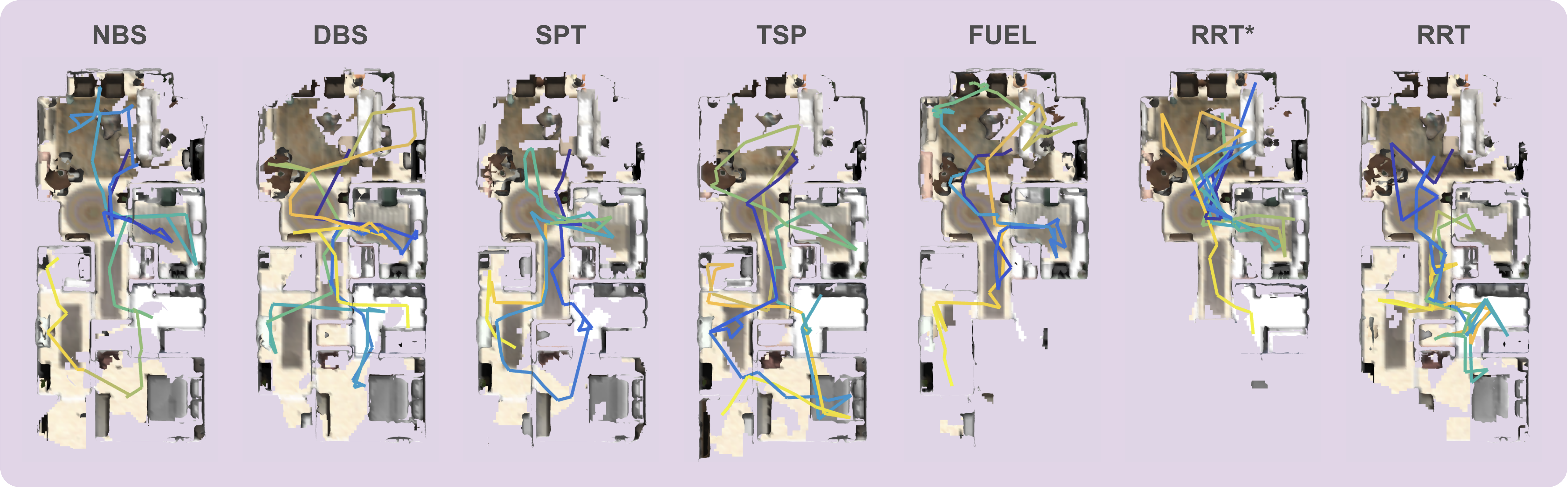}
    \caption{\textbf{Qualitative comparison of surface reconstruction for scene 102816852.} Each planner is executed 10 times, and the visualization shows the worst-case result among the trials, representing a lower-bound performance. The total reconstructed surface area (in square meters) for each planner is as follows: NBS: 445.73, DBS: 419.10, SPT: 349.96, TSP: 432.21, FUEL: 341.03, RRT*: 301.75, and RRT: 348.43. The trajectory is visualized using a temporal gradient, from dark blue to bright yellow.}
    \label{fig:surface-reconstruction-example}
\end{figure*}

\subsection{Surface Reconstruction}

In surface reconstruction, the objective is to scan and recover the geometry of a scene as completely as possible, quantified by the total number of voxels ultimately classified as occupied. However, this true objective $ f $, as used in \Cref{eq:active-perception-objectives}, cannot be directly computed because it requires complete knowledge of the environment---information unavailable to the robot during exploration.

Therefore, we define our gain function $ g $ in \Cref{eq:active-perception-approximation} using the concept of surface frontiers~\citep{yoder2016autonomous}. Surface frontiers are unknown voxels that are adjacent to currently known occupied voxels (i.e., the reconstructed surface). For each node $ v $, the gain is calculated as the number of such frontier voxels visible from $ v $:
\begin{equation}
    g(v) = \left| \left\{ o \in \mathset{O}(v) \cap \mathset{U} \;\middle|\; \mathset{A}(o) \cap \mathset{S} \neq \emptyset \right\} \right|,
    \label{eq:surface-gain}
\end{equation}
where $ \mathset{O}(v) $ is the set of voxels observable from $ v $, $ \mathset{U} $ denotes unknown voxels, $ \mathset{S} $ represents the current set of occupied (surface) voxels, and $ \mathset{A}(o) $ is the set of voxels adjacent to $ o $.  Since nodes do not need to be densely distributed for this task, we set $ l_{\min} = \SI{1.5}{\meter} $ and $ l_{\max} = \SI{3}{\meter} $. The total gain of a trajectory is the sum of the gains of its constituent nodes, as in \Cref{eq:active-perception-gain-sum}.  A qualitative comparison of various approaches on this task is shown in \Cref{fig:surface-reconstruction-example}.

\begin{figure*}[t]  
  \centering
  \setlength{\tabcolsep}{2pt}
  \renewcommand{\arraystretch}{1.0}
  \newcommand{\imgcell}[1]{\adjustbox{valign=t}{\includegraphics[width=0.23\textwidth]{#1}}}

  \begin{tabular}{cccc}
     &
        \hspace{0.5cm} \textbf{Point collection} &
        \hspace{0.5cm} \textbf{Surface reconstruction} &
        \hspace{0.5cm}\textbf{Volumetric exploration} \\
      \midrule
      \rotatebox{90}{\parbox{3.3cm}{\centering \textbf{\qquad 102344280}}} &
      \includegraphics[width=0.28\textwidth]{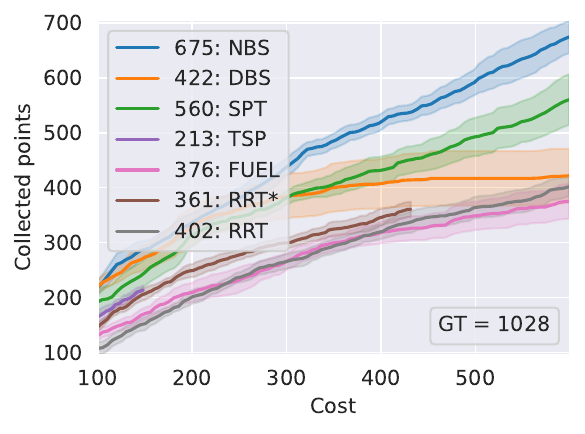} &
      \includegraphics[width=0.28\textwidth]{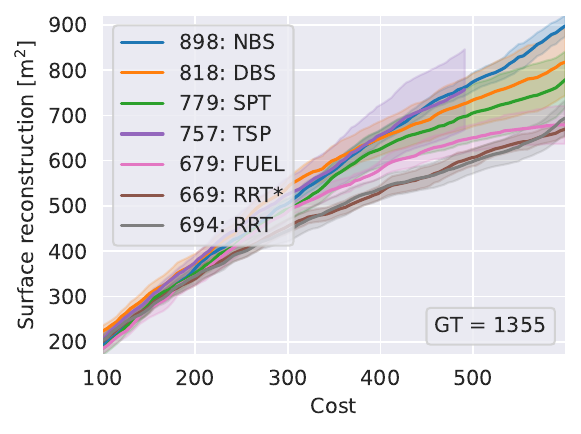} &
      \includegraphics[width=0.28\textwidth]{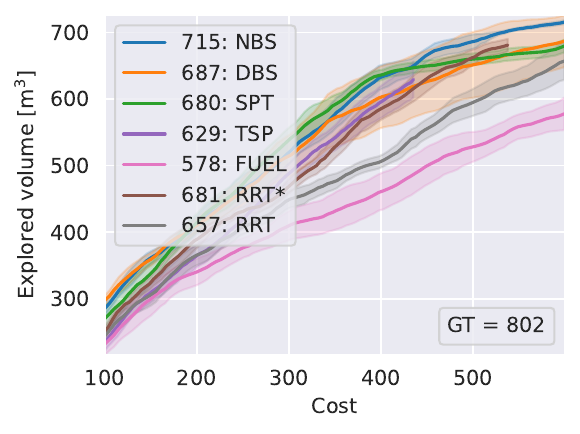} \\
      \rotatebox{90}{\parbox{3.3cm}{\centering \textbf{\qquad 102344049}}} &
      \includegraphics[width=0.28\textwidth]{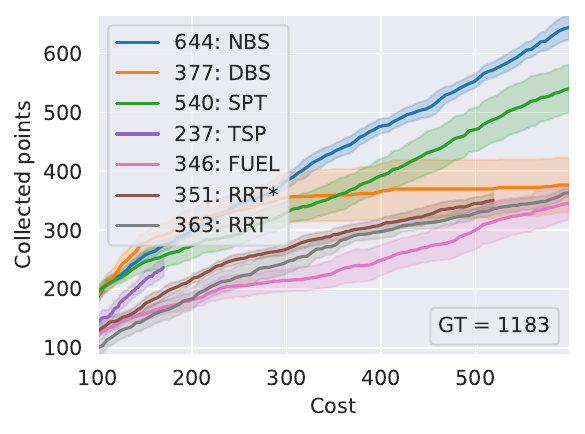} &
      \includegraphics[width=0.28\textwidth]{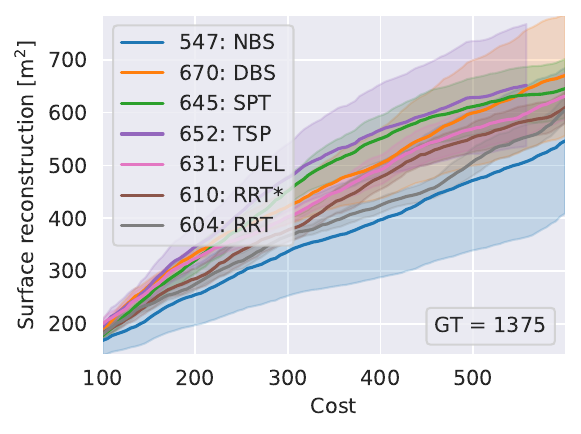} &
      \includegraphics[width=0.28\textwidth]{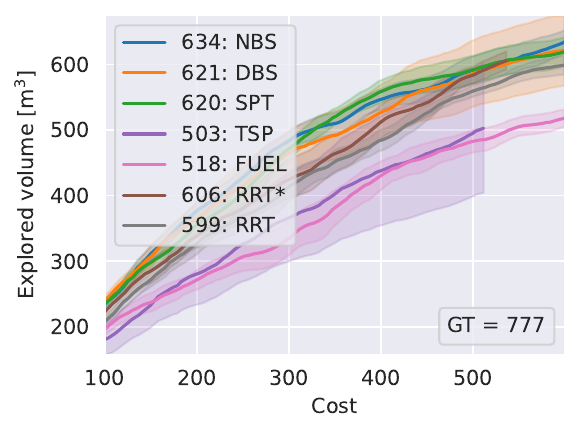} \\
      \rotatebox{90}{\parbox{3.3cm}{\centering \textbf{\qquad 102816009}}} &
      \includegraphics[width=0.28\textwidth]{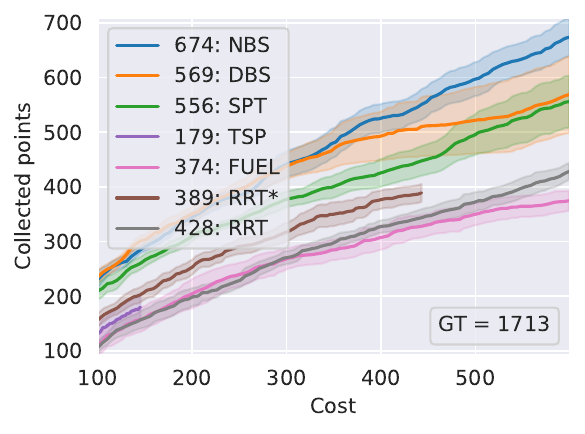} &
      \includegraphics[width=0.28\textwidth]{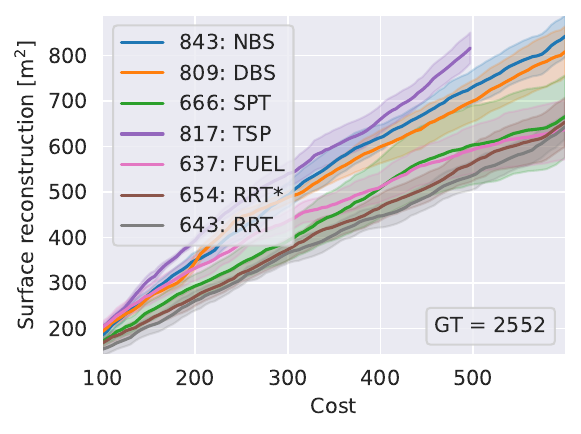} &
      \includegraphics[width=0.28\textwidth]{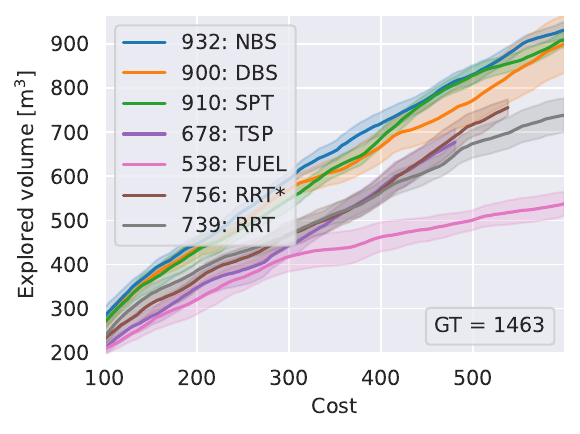} \\
      \rotatebox{90}{\parbox{3.3cm}{\centering \textbf{\qquad 103997445}}} &
      \includegraphics[width=0.28\textwidth]{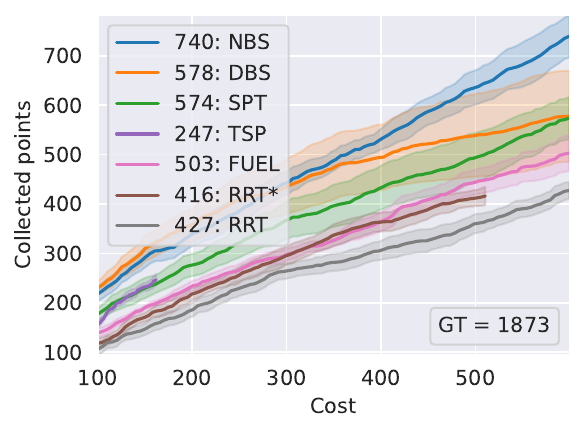} &
      \includegraphics[width=0.28\textwidth]{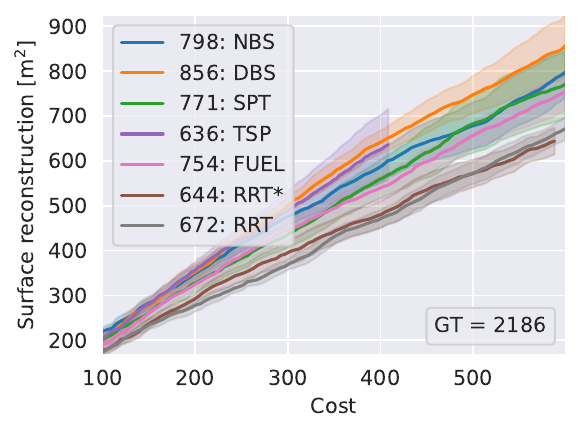} &
      \includegraphics[width=0.28\textwidth]{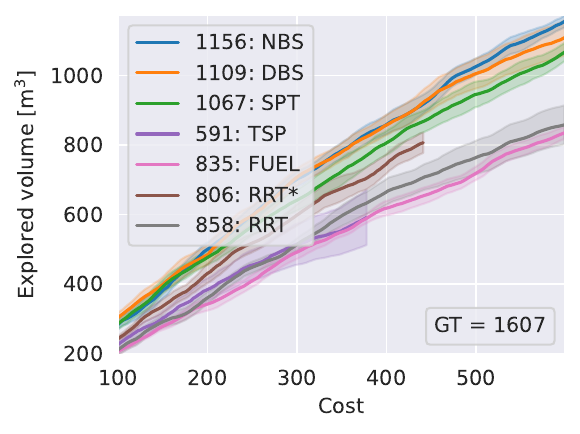} \\
      \rotatebox{90}{\parbox{3.3cm}{\centering \textbf{\qquad 102816036}}} &
      \includegraphics[width=0.28\textwidth]{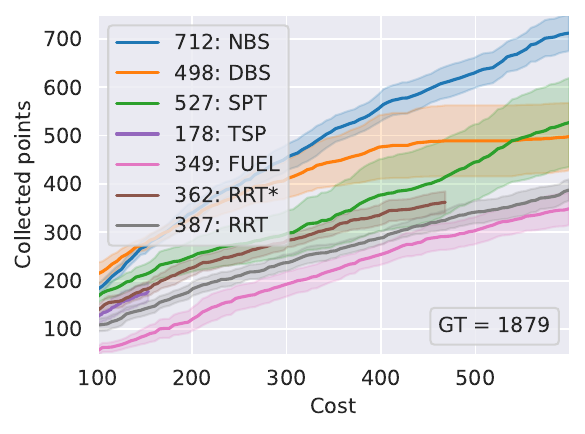} &
      \includegraphics[width=0.28\textwidth]{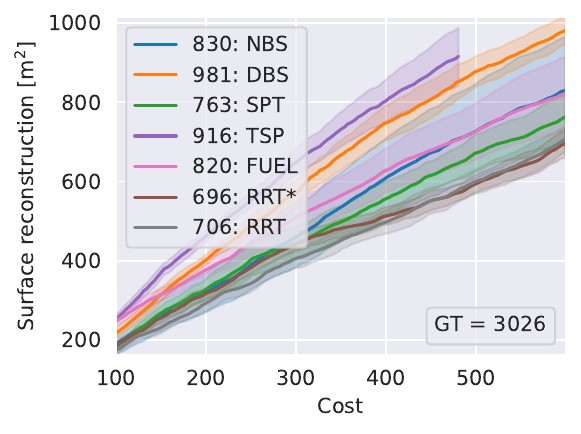} &
      \includegraphics[width=0.28\textwidth]{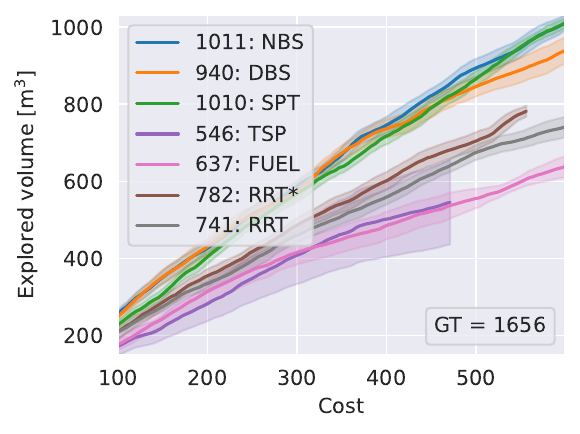} \\
      \bottomrule
  \end{tabular}
  \caption{\textbf{The mean and half the standard deviation of accumulated gain across large \ac{hssd} scenarios.} Trajectory time serves as the cost function with a 10-minute budget. Planning terminates at the 15-minute time limit.}
  \label{fig:active-perception-large-result}
\end{figure*}

\begin{figure*}[t]  
  \centering
  \setlength{\tabcolsep}{2pt}
  \renewcommand{\arraystretch}{1.0}
  \newcommand{\imgcell}[1]{\adjustbox{valign=t}{\includegraphics[width=0.23\textwidth]{#1}}}

  \begin{tabular}{cccc}
         &
        \hspace{0.5cm} \textbf{Point collection} &
        \hspace{0.5cm} \textbf{Surface reconstruction} &
        \hspace{0.5cm}\textbf{Volumetric exploration} \\
      \midrule
      \rotatebox{90}{\parbox{3.3cm}{\centering \textbf{\qquad 102815835}}} &
      \includegraphics[width=0.28\textwidth]{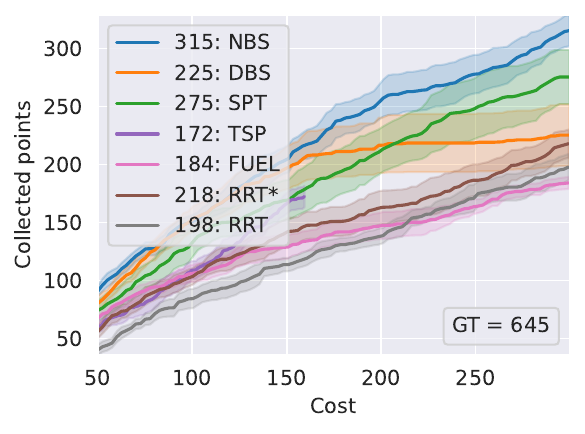} &
      \includegraphics[width=0.28\textwidth]{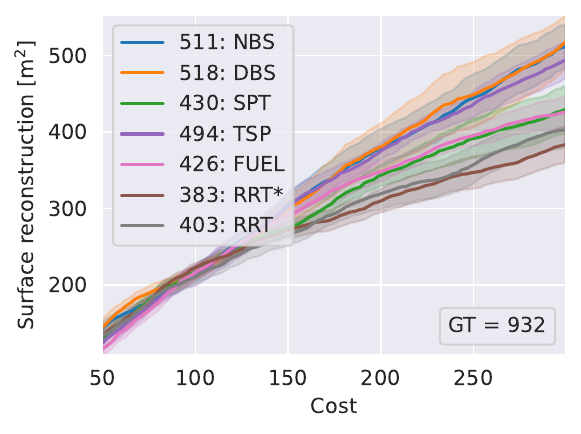} &
      \includegraphics[width=0.28\textwidth]{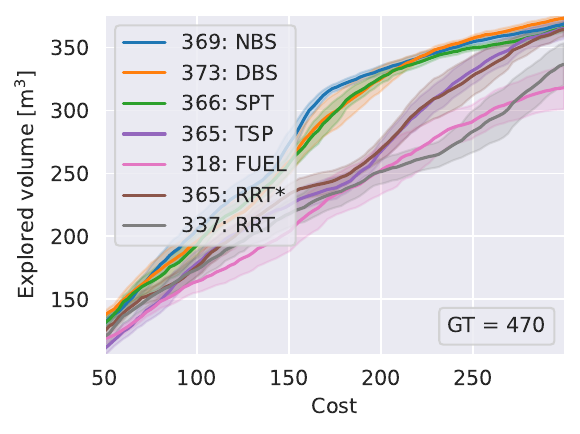} \\
      \rotatebox{90}{\parbox{3.3cm}{\centering \textbf{\qquad 102816852}}} &
      \includegraphics[width=0.28\textwidth]{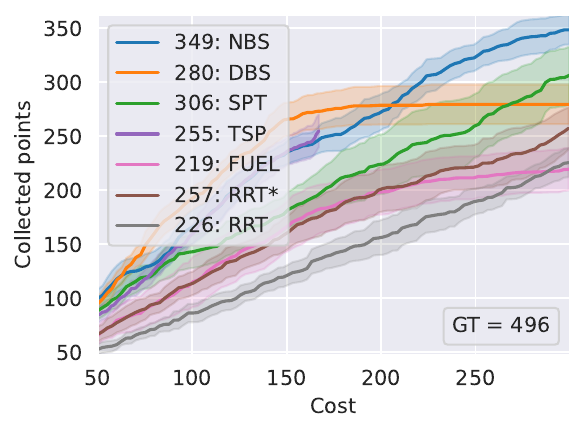} &
      \includegraphics[width=0.28\textwidth]{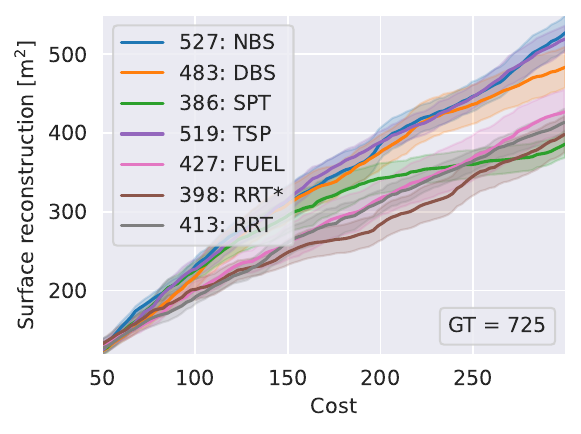} &
      \includegraphics[width=0.28\textwidth]{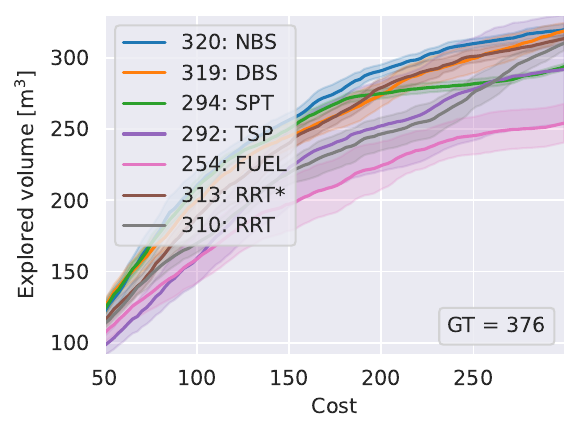} \\
      \rotatebox{90}{\parbox{3.3cm}{\centering \textbf{\qquad 102816216}}} &
      \includegraphics[width=0.28\textwidth]{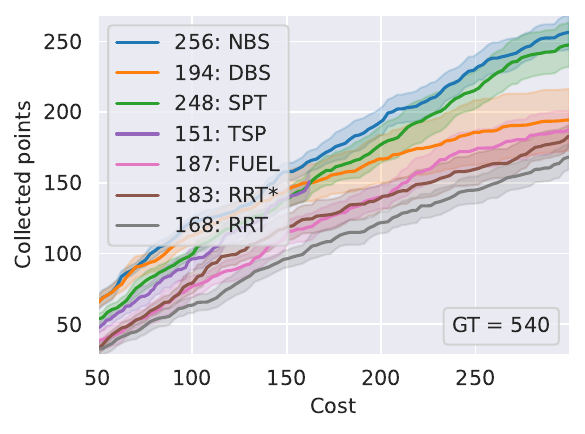} &
      \includegraphics[width=0.28\textwidth]{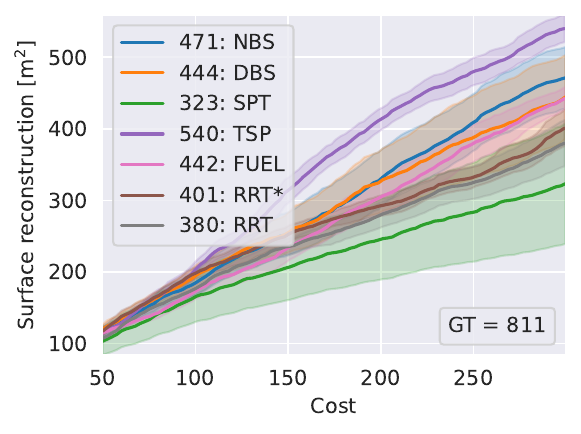} &
      \includegraphics[width=0.28\textwidth]{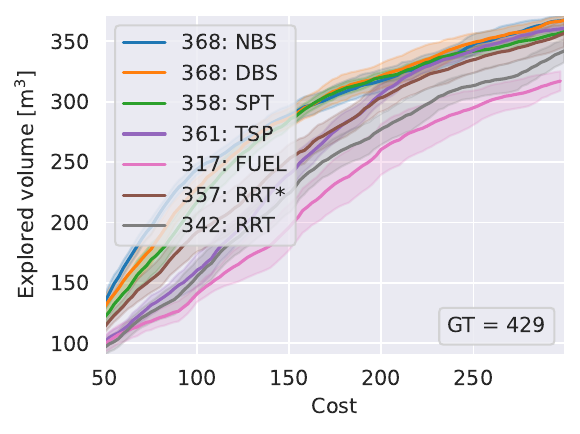} \\
      \bottomrule
  \end{tabular}
  \caption{\textbf{The mean and half the standard deviation of accumulated gain across small \ac{hssd} scenarios.} Trajectory time serves as the cost function with a 5-minute budget. Planning terminates at the 7-minute time limit.}
  \label{fig:active-perception-small-result}
\end{figure*}

\subsection{Volumetric Exploration}

In the task of volumetric exploration, the objective is to observe as much of the unknown environment as possible, which can be measured by the total number of voxels whose occupancy state is known after the exploration. This includes both free and occupied voxels that have been successfully classified. Unlike other tasks, volumetric exploration directly equates gain collection with exploration itself. Thus, there is no intrinsic need to trade off between exploration and exploitation. 

As with surface reconstruction, the true objective $ f $ is not accessible during execution. Therefore, the gain $ g $ at each node is defined as the number of unknown voxels visible from that node, a commonly used metric due to its simplicity and practical effectiveness. Formally, for any node $ v $, the gain is given by:
\begin{equation}
g(v) = | \mathset{O}(v) \cap \mathset{U}|.
\end{equation}
Following the setup for surface reconstruction, we set $ l_{\min} = \SI{1.5}{\meter} $ and $ l_{\max} = \SI{3}{\meter} $, and compute the total gain along a trajectory as the sum of node gains.

\subsection{Performance Analysis in Active Perception}

Each algorithm is evaluated across the three application scenarios over 10 independent trials, with execution terminated upon reaching a predefined maximum planning time. The resulting performance curves, showing average gain as a function of cost, are presented in \Cref{fig:active-perception-large-result} for five large environments and \Cref{fig:active-perception-small-result} for three small environments. For each curve, the mean and standard deviation of accumulated gains are calculated from linearly interpolated gain values over a cost range from 0.0 to the smallest final cost achieved by the corresponding algorithm.

Since gain varies across environments and tasks, we use normalized gain to enable comparisons across different scenarios (\Cref{fig:normalized-gain}). For each task and environment, we run both NBS and RRT twice for a sufficient duration until the gain plateaus. We take the maximum observed gain from these four runs as an estimate of the ground truth (see \Cref{fig:active-perception-large-result} and \Cref{fig:active-perception-small-result}) and normalize the gain with respect to this value. 

\begin{figure*}[!t]
    \centering
    \includegraphics[width=\textwidth]{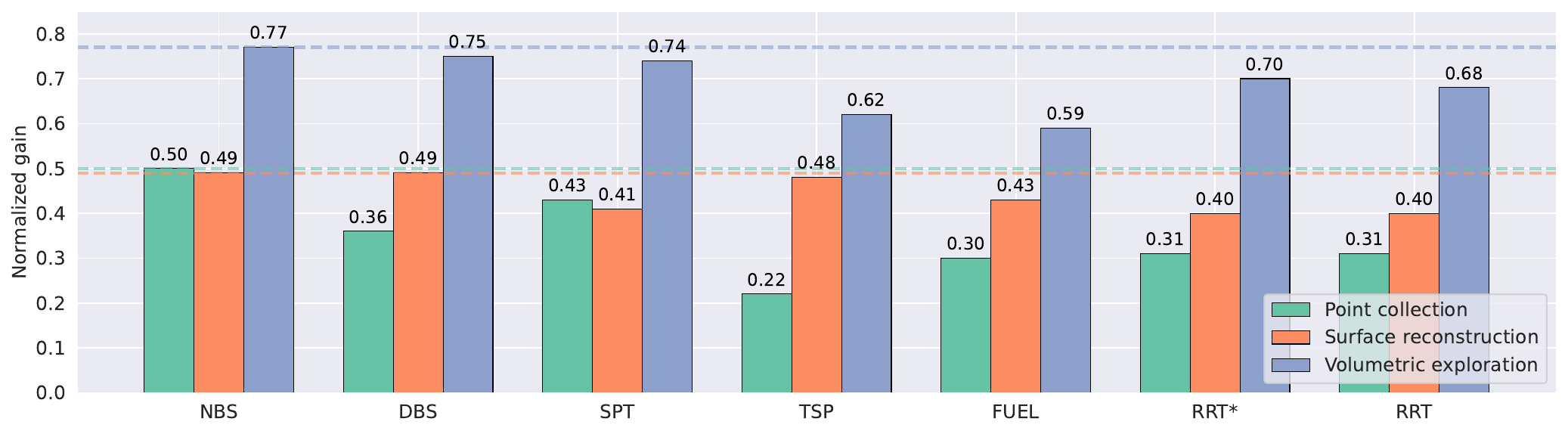}
    \caption{\textbf{Comparison of normalized gain across methods and tasks.} }
    \label{fig:normalized-gain}
\end{figure*}

The three tasks present different levels of difficulty. Exploration is the easiest, as all algorithms achieve more than 0.50 normalized gain. This is intuitive, since this task does not require the robot to exploit the known environment. The other two tasks are more challenging, as they require a balance between exploration and exploitation. Surface reconstruction has a medium level of difficulty, with the worst performance around 0.40. Point collection is the hardest, due to the smaller collection radius and consequently denser graph. The worst-performing algorithm achieves only 0.22 normalized gain.

NBS achieves the highest performance across all three tasks: 0.50 in point collection, 0.49 in surface reconstruction, and 0.77 in exploration. Moreover, NBS outperforms other algorithms by at least 20\% in one or more tasks, highlighting its robustness and effectiveness. In point collection, it demonstrates a significant improvement, surpassing the second-best algorithm by 16\% (increasing normalized gain from 0.43 to 0.50). 

DBS performs best in surface reconstruction, primarily because this task often requires the robot to thoroughly map its immediate surroundings before exploring farther. In such cases, DBS's preference for locally optimal paths becomes an advantage. However, it performs notably worse in point collection. This degradation arises from the sparser gain distribution in this setting, where DBS is prone to getting trapped in local regions with low or zero gains. This behavior reveals a fundamental limitation of DBS: its susceptibility to local optima.

SPT underperforms relative to NBS, primarily due to its constrained search space. The performance gap is particularly pronounced in point collection and surface reconstruction, where the limited candidate set often yields trajectories biased toward either exploration or exploitation, without effectively balancing the two. 

RRT* exhibits similar limitations, further exacerbated by the need to rewire the tree when the root node changes, which incurs significant computational overhead. Moreover, it lacks full orientation coverage, often requiring inefficient back-and-forth maneuvers to execute simple turns. RRT inherits some of these drawbacks: although computationally efficient, its continuous node pruning and sparse graph structure can lead to suboptimal decisions.

The TSP algorithm should, in principle, be well-suited for surface reconstruction, as TSP tends to prioritize high-gain nodes in the vicinity before expanding to other regions. However, we find that it suffers from excessive computational demands. In practice, it often fails to complete within the allotted time budget, resulting in the lowest average performance among the evaluated methods. Additionally, its performance is even worse in the other two tasks, as the decoupling of node selection and path optimization can lead to poor node choices that the TSP is then forced to traverse.

In contrast, the FUEL planner is driven by frontier information and generates a relatively sparse set of viewpoints to traverse, contributing to its time efficiency. Its overall performance trend closely resembles that of the other TSP-based planner. However, FUEL performs poorly on the point collection task, as this requires specific point-level information rather than just frontier awareness. Because FUEL actively maps all frontiers, it tends to exploit nearby viewpoints before moving to distant regions, leading to comparatively better performance in surface reconstruction than in pure exploration. Since the robot assigns only one viewpoint per location, it would need to travel back and forth to perform simple turns, which could consume the cost budget compared to our approach. 

\subsection{Hardware Experiments}

\begin{figure*}[!t]
    \begin{subfigure}{0.498\textwidth}
      \centering
      \includegraphics[width=\textwidth]{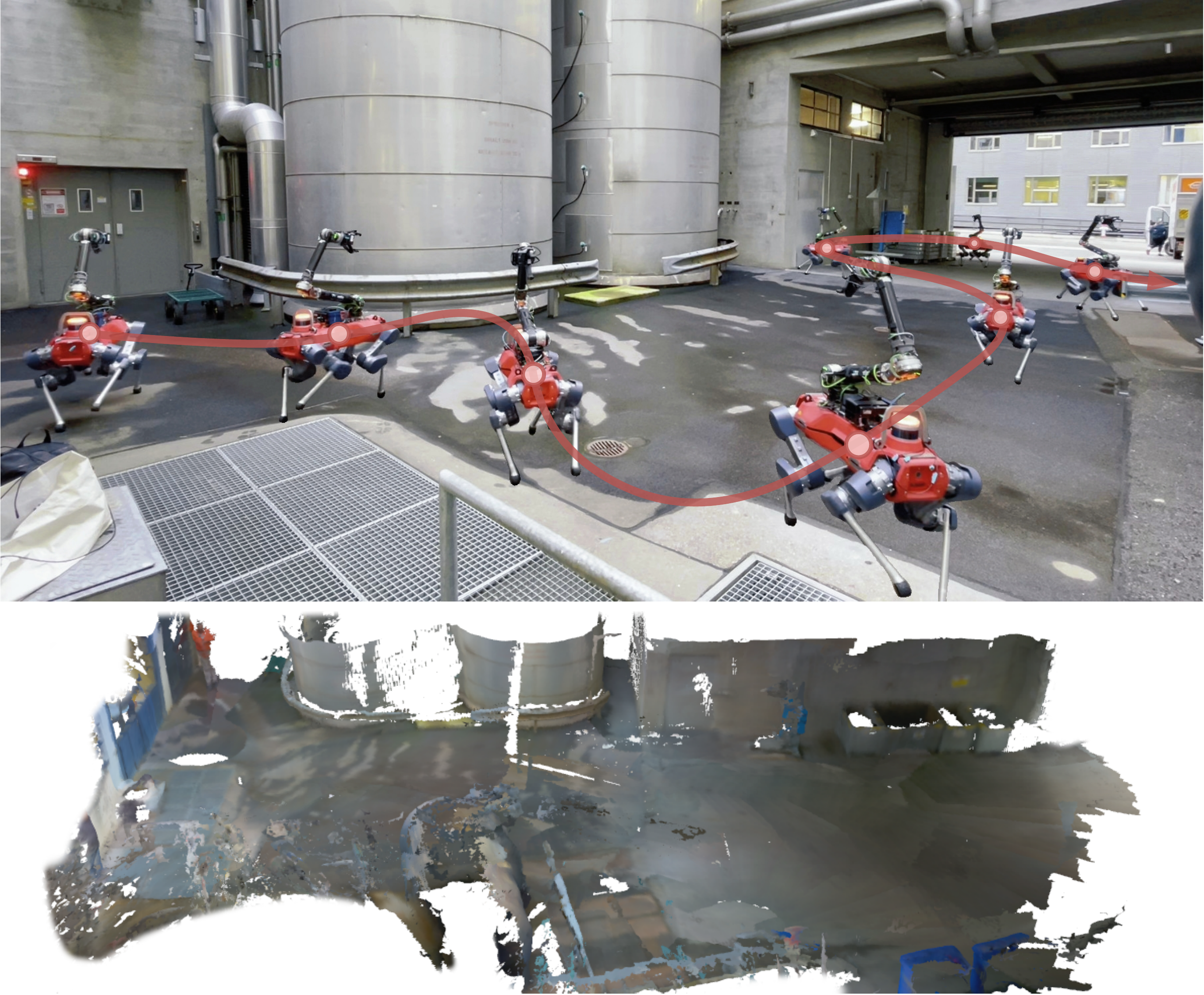}
    \caption{Surface reconstruction in an outdoor space}
      \label{fig:hardware-outdoor-fixed}
    \end{subfigure}
    \begin{subfigure}{0.498\textwidth}
        \centering
        \includegraphics[width=\textwidth]{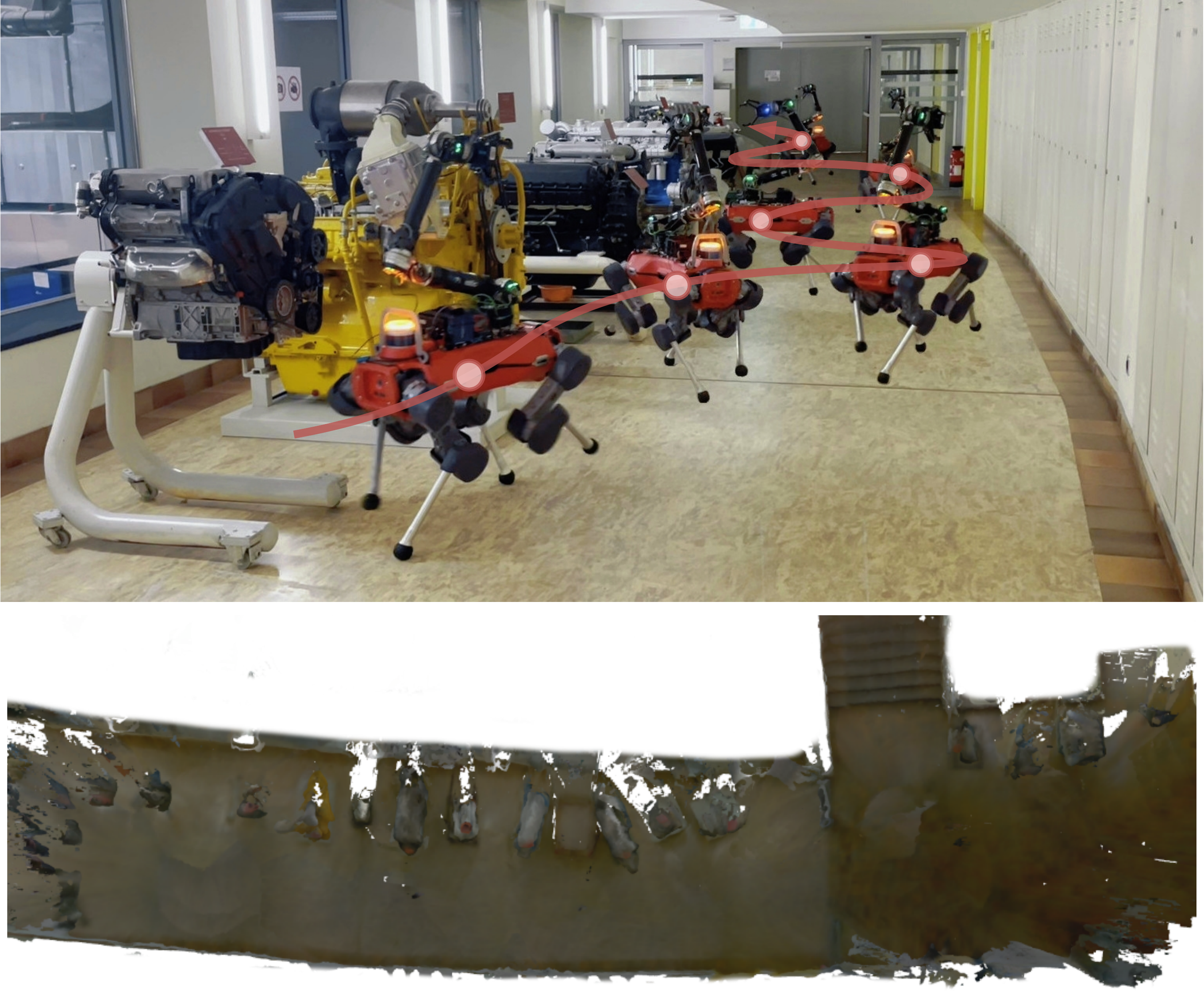}
    \caption{Surface reconstruction in an engine museum}
    \label{fig:hardware-engine-fixed}
    \end{subfigure}
    \caption{\textbf{Autonomous surface reconstruction using surface frontiers as the gain.} The robot’s trajectory is visualized with a red arrow indicating its direction of movement. The mesh reconstructed from the captured RGB-D images is also shown, with boundary artifacts removed for clarity.}
    \label{fig:hardware-fixed}
\end{figure*}

\begin{figure*}[!t]
    \begin{subfigure}{\textwidth}
      \centering
      \includegraphics[width=\textwidth]{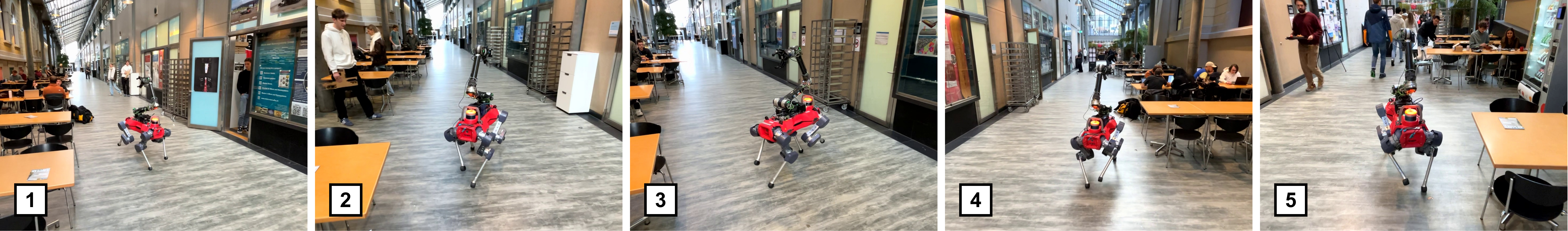}
    \caption{Volumetric exploration in a cafeteria}
    \label{fig:hardware-cafeteria-following}
    \end{subfigure}
    
    \smallskip
    
    \begin{subfigure}{\textwidth}
      \centering
    \includegraphics[width=\textwidth]{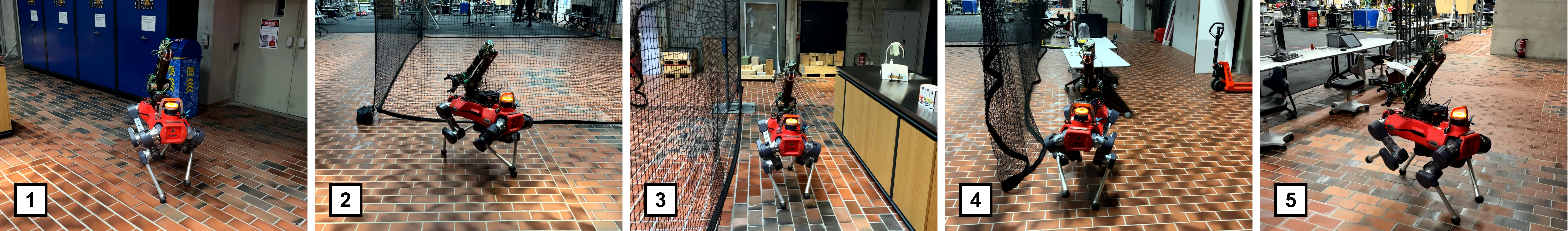}
    \caption{Volumetric exploration in a lab}
    \label{fig:hardware-lab-following}
    \end{subfigure}
    \caption{\textbf{Autonomous volumetric exploration using unknown voxels as the gain.} Experiments were conducted in both a cafeteria and a lab environment.}
    \label{fig:hardware-following}
\end{figure*}

We deploy our planner on the ANYmal quadruped robot~\citep{hutter2016anymal}, equipped with a ZED X Mini camera mounted on the front-facing end-effector, offering a field of view of $[74^\circ, 45^\circ]$. On the physical platform, depth estimation from a ZED X camera is processed onboard by an NVIDIA Jetson Orin, while the mapping and planning modules execute on an Intel Core i7 CPU.
Captured RGB-D images are converted into colorized, downsampled point clouds, which are subsequently used by Voxblox for volumetric map generation. Unlike the simulation setup, the real-world deployment constrains the planner to a two-dimensional search space at a fixed height. The collision model is adapted to better encapsulate the legged robot’s body, resulting in larger and additional collision spheres. Consequently, collision checking needs to be more optimistic: if at least one collision sphere is verified to be collision-free and none is known to be in collision, the pose is deemed traversable. Because yaw influences the collision geometry of legged robots, this checking is performed across all yaw angles at each node. 

In the first hardware experiment, the robot is tasked with autonomously reconstructing the surface of a lab. We use surface frontiers to estimate gain, allowing the robot to actively navigate toward regions with holes to complete the surface reconstruction. \Cref{fig:test_result} illustrates the reconstructed maps as time evolves. The robot simultaneously explores the environment, expands its graph, and fills in missing holes to produce a more complete surface. While following the planned path, the robot replans at a fixed frequency (omitted from the figure for simplicity). The robot successfully explores this complex and cluttered environment and reconstructs the surface densely, with minimal artifacts such as small holes. We have also conducted surface reconstruction in an outdoor space and an engine museum (\Cref{fig:hardware-fixed}) and volumetric exploration in a cafeteria and another laboratory environment (\Cref{fig:hardware-following}). These experiments demonstrate the feasibility and effectiveness of running the full planning and perception pipeline onboard the robot across diverse environments.

\section{Limitations and Future Directions}
\label{sec:limitation}

While our approach demonstrates strong performance across extensive simulation and hardware experiments, several limitations remain.
Our method is validated through empirical evaluation, and we do not provide a formal proof of optimality.
Our quantitative evaluation also focuses primarily on indoor environments, as supported by the \ac{hssd} dataset, even though we show qualitative hardware results across a broader range of settings (see \Cref{fig:test_result,fig:hardware-fixed,fig:hardware-following}).
Our current framework addresses only semantic-agnostic active perception, and extending it to more challenging semantic tasks is a natural direction for future work. Examples include object-goal navigation, where a robot must find objects in an unknown environment based on natural language instructions~\citep{chang2023goat, gervet2023navigating, shah2023navigation, qu2024ippon}, and semantic mapping~\citep{conceptfusion, takmaz2023openmask3d, gu2024conceptgraphs, zhang2025tag, Fu_2026_funfact}. In these settings, designing a semantic-aware gain function becomes nontrivial, and it would be compelling to evaluate how our planner performs under such task-driven, semantics-heavy objectives.
The present implementation also relies exclusively on ESDF-based mapping. We aim to generalize the planning framework to additional mapping paradigms, particularly 3D Gaussian Splatting~\citep{jiang2023fisherrf, kerbl20233d, goli2024bayes, chen2025activegamer}, and to incorporate learned information-gain predictors~\citep{sun2025frontiernet}.
Moreover, while we account for gain correlations in point-collection tasks, we do not model them in volumetric exploration and surface reconstruction. This is because accurately computing inter-node correlations for voxel-counting-based gains is computationally prohibitive---requiring per-voxel tracking to avoid double-counting---and would compromise real-time planning performance. Developing an efficient method to explicitly model correlations between nodes could relax the common additive node-wise gain assumption, thereby improving gain estimation and overall planning performance~\citep{yu2016correlated}.

\section{Conclusion}
\label{sec:conclusion}

We present NBS, a beam-per-node strategy designed to address the challenges of active perception in mobile robotics. Through extensive empirical evaluation, we show that NBS consistently outperforms classical \ac{tsp} and \ac{spt} baselines as well as naive beam search in graph-based environments. We further demonstrate that the expected gain selection criterion---by incorporating frontier information---achieves a better balance between exploration and exploitation, leading to higher task performance. To bridge abstract graph representations with active perception, we introduce the \ac{rrag}, which preserves orientation density and ensures graph connectivity in cluttered environments. In extensive simulations, NBS surpasses state-of-the-art planners across three representative tasks by a significant margin. We validate the practicality of our approach through hardware experiments in different scenarios using onboard computation.
All proposed algorithms are available in the open-source \openapero library to promote reproducibility and serve as a foundation for future active perception research.

\section*{Acknowledgements}
\addcontentsline{toc}{section}{Acknowledgements}

This research was supported by the Swiss National Science Foundation through the National Centre of Competence in Digital Fabrication (NCCR dfab), by Huawei Tech R\&D (UK) through a research funding agreement, and by an ETH RobotX research grant funded through the ETH Zurich Foundation. We would like to thank Jie Tan for the initial discussions, Changan Chen for helpful input, and Boyang Sun and Jiaxu Xing for proofreading the manuscript.

\bibliographystyle{SageH}
\bibliography{reference}

\end{document}